\newcounter{subeqn}[section] \renewcommand{\thesubeqn}{\theequation\alph{subeqn}} 
\newcommand{\subeqn}{ 
  \refstepcounter{subeqn} 
  \tag{\thesubeqn} 
}
\newcommand{\calF}{\mathcal{F}}
\newcommand{\calX}{\mathcal{X}}
\newcommand{\calH}{\mathcal{H}}
\newcommand{\bbR}{\mathbb{R}}
\newcommand{\bbE}{\mathbb{E}}
\newcommand{\bbP}{\mathbb{P}}
\newcommand{\otilde}{\mathcal{\tilde{O}}}
\newcommand{\vvt}[1]{\left\Vert #1 \right\Vert}
\newcommand{\vvta}[1]{\Vert #1 \Vert}
\newcommand{\lr}[1]{\left( #1 \right)}
\newcommand{\lrr}[1]{\left[ #1 \right]}
\newcommand{\lrrr}[1]{\left\{ #1 \right\}}
\newcommand{\tn}[1]{\textnormal{#1}}
\newcommand{\lspo}{\ell_{\textnormal{SPO}}}
\newcommand{\lspop}{\ell_{\textnormal{SPO+}}}
\newcommand{\rspo}{R_{\textnormal{SPO}}}
\newcommand{\rspop}{R_{\textnormal{SPO+}}}
\begin{document}

\RUNTITLE{Active Learning For Contextual Linear Optimization:  A Margin-Based Approach}

\TITLE{Active Learning For Contextual Linear Optimization:  A Margin-Based Approach}

\RUNAUTHOR{Liu et al.}
\ARTICLEAUTHORS{ 
\AUTHOR{Mo Liu}
\AFF{Department of Statistics and Operations Research, University of North Carolina, Chapel Hill, NC, 27599,  \EMAIL{mo\_liu@unc.edu}}  
\AUTHOR{Paul Grigas}
\AFF{Department of Industrial Engineering and Operations Research, University of California, Berkeley, Berkeley, CA, 94720,  \EMAIL{pgrigas@berkeley.edu}}
\AUTHOR{Heyuan Liu}
\AFF{Department of Industrial Engineering and Operations Research, University of California, Berkeley, Berkeley, CA, 94720,  \EMAIL{heyuan\_liu@berkeley.edu}}
\AUTHOR{Zuo-Jun Max Shen}
\AFF{Department of Industrial Engineering and Operations Research, University of California, Berkeley, Berkeley, CA, 94720,  \EMAIL{maxshen@berkeley.edu}} 
 
}

\ABSTRACT{

We develop the first active learning method for contextual linear optimization. Specifically, we introduce a label acquisition algorithm that sequentially decides whether to request the ``labels'' of feature samples from an unlabeled data stream, where the labels correspond to the coefficients of the objective in the linear optimization. Our method is the first to be directly informed by the decision loss induced by the predicted coefficients, referred to as the Smart Predict-then-Optimize (SPO) loss. Motivated by the structure of the SPO loss, our algorithm adopts a margin-based criterion utilizing the concept of distance to degeneracy. In particular, we design an efficient active learning algorithm with theoretical excess risk (i.e., generalization) guarantees. 
We derive upper bounds on the label complexity, defined as the number of samples whose labels are acquired to achieve a desired small level of SPO risk. These bounds show that our algorithm has a much smaller label complexity than the naive supervised learning approach that labels all samples, particularly when the SPO loss is minimized directly on the collected data. To address the discontinuity and nonconvexity of the SPO loss, we derive label complexity bounds under tractable surrogate loss functions. Under natural margin conditions, these bounds also outperform naive supervised learning. Using the SPO+ loss, a specialized surrogate of the SPO loss, we establish even tighter bounds under separability conditions. Finally, we present numerical evidence showing the practical value of our algorithms in settings such as personalized pricing and the shortest path problem.

}

\KEYWORDS{active learning, contextual linear optimization, prescriptive analytics, data-driven optimization}  

\maketitle

\section{Introduction}

In many applications of operations research, decisions are made by solving optimization problems that involve some unknown parameters. Typically, machine learning tools are used to predict these unknown parameters, and then an optimization model is used to generate the decisions based on the predictions. For example, in the shortest path problem, we need to predict the cost of each edge in the network and then find the optimal path to route users. Another example is the personalized pricing problem, where we need to predict the purchase probability of a given customer at each possible price and then decide the optimal price. In this paper, we study the case where the downstream decision-making problem is a linear optimization with unknown coefficients in the objective function, a setting referred to as contextual linear optimization (CLO). In this setting, when generating prediction models, it is natural to consider the final decision error as a loss function to measure the quality of a model instead of standard notions of prediction error.
The loss function that directly considers the cost of the decisions induced by the predicted parameters, in contrast to the prediction error of the parameters, is called the {\em Smart Predict-then-Optimize (SPO)} loss as proposed by \citet{elmachtoub2022smart}. Naturally, prediction models designed based on the SPO loss have the potential to achieve a lower cost with respect to the ultimate decision error.

In general, for a given feature vector $x$, calculating the SPO loss requires knowing the correct (in hindsight) optimal decision associated with the unknown parameters. However, a full observation of these parameters, also known as a label associated with $x$, is not always available.
For example, we may not observe the cost of all edges in the graph in the shortest path problem. In practice, acquiring the label of one feature vector instance could be costly, and thus acquiring the labels of all feature vectors in a given dataset would be prohibitively expensive and time-consuming. In such settings, it is essential to actively select the samples for which label acquisition is worthwhile.

Algorithms that make decisions about label acquisition lie in the area of \textit{active learning}. The goal of active learning is to learn a good predictor while requesting a small number of labels of the samples, whereby the labels are requested actively and sequentially from unlabeled samples. Intuitively, if we are very confident about the label of an unlabeled sample based on our current predictor, then we do not have to request the label of it. Active learning is most applicable when the cost of acquiring labels is very expensive.
Traditionally in active learning, the selection rules for deciding which samples to acquire labels for are based on measures of prediction error that ignore the cost of the decisions in the downstream optimization problem. Considering the SPO loss in active learning can hopefully reduce the number of labels required while achieving the same cost of decisions, compared to standard active learning methods that only consider measures of prediction error.

Considering active learning in the CLO can bridge the gap between active learning and operational decisions. However, there are two major challenges when designing algorithms to select samples. One is the computational issue due to the non-convexity and non-Lipschitzness of the SPO loss. When one is concerned with minimizing the SPO loss, existing active learning algorithms are computationally intractable. For example, the general importance weighted active learning (IWAL) algorithm proposed by \cite{beygelzimer2009importance} is impractical to implement, since calculating the ``weights'' of samples requires a large enumeration of all pairs of predictors. Other active learning algorithms that are designed for the classification problem cannot be extended to minimize the SPO loss directly. Another challenge is to derive bounds for the label complexity of the algorithms and to demonstrate the advantages over supervised learning. Label complexity refers to the number of labels that must be acquired to ensure that the risk of predictor is not greater than a desired threshold. To achieve the benefits of active learning, the label complexity must be smaller than the sample complexity of supervised learning while attaining the same risk level with respect to the loss function of interest (in our case, the SPO loss). \citet{kaariainen2006active} demonstrated that, without additional assumptions regarding the distributions of features and noise, active learning algorithms exhibit the same label complexity as supervised learning. Thus, finding some natural conditions on the noise and feature distributions that yield a smaller label complexity for an active learning algorithm is a nontrivial yet critical challenge in demonstrating the value of active learning.

In this paper, we develop the first active learning method for CLO. Particularly, in the CLO setting, we address the problem of solving a linear optimization problem where the coefficients of the objective function, referred to as the unknown cost vector (or label vector), are predicted based on feature information. Our proposed algorithm, inspired by margin-based algorithms in active learning, uses a measure of ``confidence'' associated with the cost vector prediction of the current model to decide whether or not to acquire a label for a given feature. Specifically, the label acquisition decision is based on the notion of \textit{distance to degeneracy} introduced by \cite{el2022generalization}, which precisely measures the distance from the prediction of the current model to the set of cost vectors that have multiple optimal solutions. Intuitively, the greater the distance from degeneracy, the more confident we can be that the associated decision is indeed optimal.
Our margin-based active learning (MBAL) algorithm has two variants, distinguished by their rejection criteria: soft rejection and hard rejection. Hard rejection typically results in a smaller label complexity, whereas soft rejection is more computationally efficient. When constructing prediction models based on the actively selected training set, our algorithm minimizes the SPO loss or a suitable surrogate of the SPO loss over a given hypothesis class. For both variants, we establish theoretical guarantees by providing non-asymptotic bounds on the excess surrogate risk and the excess SPO risk under a natural consistency assumption.  

To analyze the label complexity of our proposed algorithm, we define the near-degeneracy function, which characterizes the distribution of optimal predictions near the degenerate cost vectors of linear optimization. Using this definition, we derive upper bounds on the label complexity.
We further provide the asymptotic order of these upper bounds under a natural margin condition. This margin condition intuitively imposes an upper bound on the near-degeneracy function and is expected to hold for most reasonable distributions commonly encountered in practical scenarios. To illustrate this, we include relevant examples in the paper.  
Under these conditions, we show that the label complexity bounds are smaller than those of the standard supervised learning approach. In addition to the results for a general surrogate loss, we also demonstrate improved label complexity results for the SPO+ surrogate loss, proposed by \cite{elmachtoub2022smart} to account for the downstream problem, when the distribution satisfies a separability condition. We also conduct some numerical experiments on instances of shortest path problems and personalized pricing problems, demonstrating the practical value of our proposed algorithm above the standard supervised learning approach. Our contributions are summarized below.
 
\begin{itemize}
    \item We are the first to consider active learning algorithms in contextual linear optimization (CLO). To efficiently acquire labels for training a machine learning model aimed at minimizing the decision cost (SPO loss),  we propose a margin-based active learning algorithm that uses the distance to degeneracy as the criterion for label acquisition.
    
    \item We analyze the label complexity and establish non-asymptotic surrogate and SPO risk bounds for our algorithm under both soft-rejection and hard-rejection settings. Our results demonstrate that the proposed algorithms achieve smaller label complexity than supervised learning, with our analysis extending to cases where the hypothesis class is misspecified. Specifically, under natural consistency assumptions, we provide the following guarantees:
\begin{itemize}

        \item In the hard rejection case with SPO loss itself as the surrogate loss, we derive upper bounds on the label complexity and the SPO risks, as shown in Theorem \ref{thm:spo_loss_directly}.

        \item In the soft rejection case with a general surrogate loss, we provide non-asymptotic upper bounds on the label complexity, as well as the surrogate and SPO risks, in Theorem \ref{uniform_margin}.

        \item   For the hard rejection case with the SPO+ surrogate loss, we derive a small non-asymptotic surrogate (and corresponding SPO) risk bounds under the separability condition, as demonstrated in Theorem \ref{thm:sporiskcompare}. This highlights the advantage of the SPO+ surrogate loss over general surrogate losses.

        \item For each case discussed above, we establish sufficient conditions under which the generic guarantees can be specialized and provide concrete examples to illustrate these conditions. Under these conditions, we demonstrate that the margin-based algorithm achieves sublinear or even finite label complexity. It shows that our algorithm achieves significantly lower label complexity compared to the sample complexity required for supervised learning.
        
    \end{itemize}

       \item As a byproduct of our non-asymptotic risk bounds, we are the first to derive a sublinear label complexity bound for active learning algorithms in regression problems with an infinite hypothesis class. To address the non-i.i.d. issue when deriving the uniform convergence bound for an infinite hypothesis class, we adopt the concept of sequential complexity from \citep{rakhlin2015sequential,kuznetsov2015learning}.
           
    \item We demonstrate the practical value of our algorithm by conducting comprehensive numerical experiments in two settings. One is the personalized pricing problem, and the other is the shortest path problem. Both sets of experiments show that our algorithm achieves a smaller SPO risk than the standard supervised learning algorithm given the same number of acquired labels.
\end{itemize}

\section{Motivating Example and Literature Review}

In this section, to further illustrate and motivate the integration of active learning into CLO, we first present the following personalized pricing problem as an example.
\subsection{Example Application: Personalized Pricing}

\begin{example}[Personalized pricing via customer surveys]\label{example1}
Suppose that a retailer needs to decide the prices of $\mathfrak{J}$ items for each customer, after observing the features (personalized information) of the customers. The feature vector of a generic customer is $x$, and the purchase probability of that customer for item $j$ is $d_j(p^j)$, which is a function of the price $p^j$. This purchase probability $d_j(p^j)$ is unknown and corrupted with some noise for each customer. Suppose the price for each item is selected from a candidate list $ \{p_1,p_2,...,p_\mathcal{I}\}$, which is sorted in ascending order. Then, the pricing problem can be formulated as
\begin{align}
    \max_{\mathbf{w}} & \quad \bbE[\sum_{j = 1}^{\mathfrak{J}}\sum_{i = 1}^{\mathcal{I}}d_j(p_i)p_i w_{i,j}|x] \label{obj:example} \\
    \text{s.t.} & \quad  \sum_{i = 1}^\mathcal{I} w_{i,j} = 1, \quad j = 1,2,...,\mathfrak{J}, \subeqn \label{con:1price}\\
    & \quad  \mathbf{A} \mathbf{w} \le b, \label{con:sum1} \subeqn\\
    & \quad w_{i,j} \in \{0,1\}, \quad  i = 1,2,...,\mathcal{I}, j = 1,2,...,\mathfrak{J}. \subeqn
\end{align}
Here, $\mathbf{w}$ encodes the decision variables with indices in the set $\mathcal{I}\times\mathfrak{J}$, where $w_{i,j}$ is a binary variable indicating which price for item $j$ is selected. Namely, $w_{i,j} = 1$ if item $j$ is priced at $p_i$, and otherwise $w_{i,j} = 0$. The objective \eqref{obj:example} is to maximize the expected total revenue of $\mathfrak{J}$ items by offering price $p_i$ for item $j$. 
Constraints \eqref{con:1price} require each item to have one price selected. In constraint \eqref{con:sum1}, $\mathbf{A}$ is a matrix with $\mathfrak{K}$ rows, and $b$ is a vector with $\mathfrak{K}$ dimensions. Each row of constraints \eqref{con:sum1} characterizes one rule for setting prices. For example, if the first row of $\mathbf{A} \mathbf{w}$ is $w_{i,j} - \sum_{i' = i}^\mathcal{I} w_{i',j+1}$ and the first entry in $b$ is zero, then this constraint further requires that if item $j$ is priced at $p_i$, then the price for the item $j+1$ must be no smaller than $p_i$. For another example, if the second row of $\mathbf{A} \mathbf{w}$ is $\sum_{i' = 1}^{i-1} \sum_{j = 1}^\mathfrak{J} w_{i',j}$, and the second entry of $b$ is 1, then it means that at most one item can be priced below the price $p_i$. Thus, constraints \eqref{con:sum1} can characterize different rules for setting prices for $\mathfrak{J}$ items.

Traditionally, the conditional expectation of revenue $\bbE[d_j(p_i)p_i|x]$ must be estimated from the purchasing behavior of the customers. In this example, we consider the possibility that the retailer can give the customers surveys to investigate their purchase probabilities. By analyzing the results of the surveys, the retailer can infer the purchase probability $d_j(p_i)|x$ for each price point $p_i$ and each item $j$ for this customer. Therefore, whenever a survey is conducted, the retailer acquires a noisy estimate of the revenue, denoted by $d_j(p_i)p_i|x$, at each price point $p_i$ and item $j$.

In personalized pricing, first, the retailer would like to build a prediction model to predict $\bbE[d_j(p_i)p_i|x]$ given the customer's feature vector $x$. Then, given the prediction model, the retailer solves the problem \eqref{obj:example} to obtain the optimal prices. In practice, when evaluating the quality of the prediction results of $d_j(p_i)p_i|x$, the retailer cares more about the expected revenue from the optimal prices based on this prediction, rather than the direct prediction error. Therefore, when building the prediction model for $d_j(p_i)p_i|x$, retailers are expected to be concerned with minimizing SPO loss, rather than minimizing prediction error.

One property of \eqref{obj:example} is that the objective is linear and can be further written as $ \max_{\mathbf{w}} \sum_{j = 1}^{\mathfrak{J}}\sum_{i=1}^{\mathfrak{K}} \bbE[d_j(p_i)p_i|x] w_{i,j}$. By the linearity of the objective, the revenue loss induced by the prediction errors can be written in the form of the SPO loss considered in \cite{elmachtoub2022smart}. In general, considering the prediction errors when selecting customers may be inefficient, since smaller prediction errors do not always necessarily lead to smaller revenue losses, because of the properties of the SPO loss examined by \cite{elmachtoub2022smart}.
\hfill\Halmos
\end{example}

In Example \ref{example1}, in practice, there exists a considerable cost to investigate all customers, for example, the labor cost to collect the answers and incentives given to customers to fill out the surveys. Therefore, the retailer would rather intelligently select a limited subset of customers to investigate. This subset of customers should be ideally selected so that the retailer can build a prediction model with a small SPO loss, using a small number of surveys. 

Active learning is essential to help retailers select representative customers and reduce the number of surveys. Traditional active learning algorithms would select customers to survey based on model prediction errors, which are different from the final revenue of the retailer.
On the contrary, when considering the SPO loss, the final revenue is integrated into the learning and survey distribution processes.

\subsection{Literature Review}\label{sec:literature}
 
Contextual linear optimization (CLO) can be viewed as a special case as the predict-then-optimize framework. In this section, we review existing work in active learning and the predict-then-optimize framework. To the best of our knowledge, our work is the first to bridge these two streams.
 
\paragraph{Active learning.}
There has been substantial prior work in the area of active learning, focusing essentially exclusively on measures of prediction error. Please refer to \cite{settles2009active} for a comprehensive review of many active learning algorithms. \citet{cohn1994improving} shows that in the noiseless binary classification problem, active learning can achieve a large improvement in label complexity, compared to supervised learning.  It is worth noting that in the general case, \citet{kaariainen2006active} provides a lower bound of the label complexity which matches supervised learning. Therefore, to demonstrate the advantages of active learning, some further assumptions on the noise and distribution of samples are required. For the agnostic case where the noise is not zero, many algorithms have also been proposed in the past few decades, for example, \cite{hanneke2007bound}, \cite{dasgupta2007general},\cite{hanneke2011rates}, \cite{balcan2009agnostic}, and \cite{balcan2007margin}. These papers focus on binary or multiclass classification problems. \citet{balcan2007margin} proposed a margin-based active learning algorithm, which is used in the noiseless binary classification problem with a perfect linear separator.

\citet{balcan2007margin}  achieves the label complexity $\mathcal{O}(\epsilon^{-2\alpha}\ln(1/\epsilon))$ under uniform distribution, where $\alpha \in (0,1)$ is a parameter defined for the margin condition and $\epsilon$ is the desired error rate. Our analysis and results are independent of \citet{balcan2007margin}, as we address a more general hypothesis class and the SPO loss. \citet{krishnamurthy2017active} and \cite{gao2020cost} consider cost-sensitive classification problems in active learning, where the misclassification cost depends on the true labels of the sample.

The above active learning algorithms in the classification problem do not extend naturally to real-valued prediction problems. However, the SPO loss is a real-valued function. When considering real-valued loss functions, \citet{castro2005faster} prove convergence rates in the regression problem, and \citet{sugiyama2009pool} and \citet{cai2016batch} also consider squared loss as the loss function. \citet{beygelzimer2009importance} propose an importance-weighted algorithm (IWAL) that extends disagreement-based methods to real-valued loss functions. However, it is intractable to use the IWAL algorithm in the SPO framework directly. Specifically, it requires solving a non-convex problem at each iteration, which may have to enumerate all pairs of predictor candidates even when the hypothesis set is finite.

\paragraph{Predict-then-optimize framework.} In recent years, there has been a growing interest in developing machine learning models that incorporate the downstream optimization problem.  For example, \cite{bertsimas2020predictive}, \cite{kao2009directed}, \cite{elmachtoub2022smart}, \cite{zhu2022joint}, \cite{donti2017task} and \cite{ho2019data} propose frameworks that somehow relate the learning problem to the downstream optimization problem. In our work, we consider the Smart Predict-then-Optimize (SPO) framework proposed by \cite{elmachtoub2022smart}. Because the SPO loss function is nonconvex and non-Lipschitz, the computational and statistical properties of the SPO loss in the fully supervised learning setting have been studied in several recent works. \citet{elmachtoub2022smart} provide a surrogate loss function called SPO+ and show the consistency of this loss function. \cite{elmachtoub2020decision}, \cite{loke2022decision}, \cite{demirovic2020dynamic},  \cite{demirovic2019predict}, \cite{mandi2020interior}, \cite{mandi2019smart}, and \cite{tang2022pyepo} all develop new applications and computational frameworks for minimizing the SPO loss in various settings. 
\cite{el2022generalization} consider generalization error bounds of the SPO loss function. \cite{ho2020risk}, \citet{liu2021risk}, and \cite{hu2020fast} further consider risk bounds of different surrogate loss functions in the SPO setting. 
There is also a large body of work more broadly in the area of decision-focused learning, which is largely concerned with differentiating through the parameters of the optimization problem, as well as other techniques, for training. See, for example, \cite{amos2017optnet, wilder2019melding, berthet2020learning, chung2022decision}, the survey paper \cite{kotary2021end}, and the references therein. Recently there has been growing attention on problems with nonlinear objectives, where estimating the conditional distribution of parameters is often needed; see, for example, \cite{kallus2023stochastic, grigas2021integrated} and \cite{elmachtoub2023estimate}.

\subsection{Organization}
 
The remainder of the paper is organized as follows. In Section \ref{sec:preliminary}, we introduce preliminary knowledge on CLO and active learning, including the SPO loss function, label complexity, and the SPO+ surrogate loss function. Then, we present our active learning algorithm, margin-based active learning (MBAL-SPO), in Section \ref{sec:margin}. We first present an illustration to motivate the incorporation of the distance to degeneracy in the active learning algorithm in \ref{sec:illustration_mbal}. Next, we analyze the risk bounds and label complexities for both hard and soft rejection in Section \ref{sec:analysis}. To demonstrate the strength of our algorithm over supervised learning, we consider natural margin conditions and derive sublinear label complexity in Section \ref{sec:smalllabel}.  We demonstrate the advantage of using SPO+ as the surrogate loss in some cases by providing a smaller label complexity. We further provide concrete examples of these margin conditions. In Section \ref{sec:experiments}, we test our algorithm using synthetic data in two problem settings: the shortest path problem and the personalized pricing problem. Lastly, we point out some future research directions in Section \ref{sec:future}.

The omitted proofs, sensitivity analysis of the numerical experiments, and additional numerical results are provided in the Appendices.

\section{Preliminaries}\label{sec:preliminary}
We begin by presenting some preliminaries related to active learning and CLO. Specifically, we introduce the SPO loss function, discuss the objectives of active learning in the CLO setting, and review the SPO+ surrogate loss.
    \subsection{Contextual Linear Optimization (CLO) and Active Learning}
    Let us begin by formally describing the setting of CLO and the ``Smart Predict-then-Optimize (SPO)'' loss function. 
    The downstream optimization problem is a linear optimization problem, but the cost vector of the objective, $c \in \mathcal{C} \subseteq \bbR^d$, is unknown when the problem is solved to make a decision. 
    Instead, we observe a feature vector, $x \in \mathcal{X} \subseteq \bbR^p$, which provides auxiliary information that can be used to predict the cost vector. The feature space $\mathcal{X}$ and cost vector space $\mathcal{C}$ are assumed to be bounded.
    We assume there is a fixed but unknown distribution $\mathcal{D}$ over pairs $(x, c)$ living in $\mathcal{X} \times \mathcal{C}$. The marginal distribution of $x$ is denoted by $\mathcal{D}_{\mathcal{X}}$.
     
    Let $w \in S$ denote the decision variable of the downstream optimization problem, where the feasible region $S \subseteq \bbR^d$ is a polyhedron that is assumed to be fully known to the decision-maker.

    Given an observed feature vector $x$, the ultimate goal is to solve the contextual stochastic optimization problem:
    \begin{equation}\label{eq:po-framework}
        \min_{w \in S} \bbE_c [c^T w \vert x] ~=~ \min_{w \in S} \bbE [c \vert x]^T w. 
    \end{equation}
    From the equivalence in \eqref{eq:po-framework}, we observe that solving CLO relies on the prediction of the conditional expectation $\bbE_c [c \vert x]$. 
    Given such a prediction $\hat{c}$, a decision is made by then solving the deterministic linear programming:
    \begin{equation}\label{eq:opt}
        P(\hat{c}): \quad \min_{w \in S} \hat{c}^T w. 
    \end{equation}
    For simplicity, we assume $w^*: \bbR^d \to S$ is an oracle for solving \eqref{eq:opt}, whereby $w^*(\hat{c})$ is an optimal solution of $P(\hat{c})$. 
    
    Our goal is to learn a cost vector predictor function $h: \mathcal{X} \rightarrow \bbR^d$, so that for any newly observed feature vector $x$, we first make prediction $h(x)$ and then solve the optimization problem $P(h(x))$ in order to make a decision. This CLO setting is prevalent in the application of machine learning to operations research problems.
    We assume the predictor function $h$ is within a compact hypothesis class $\mathcal{H}$ of functions on $\mathcal{X} \rightarrow \bbR^d$. We say that the hypothesis class is well-specified if $\bbE[c | x] \in \mathcal{H}$. In our analysis, the well-specification is not required.
     
To select $h \in \mathcal{H}$, we minimize the empirical risk associated with an appropriately defined loss function.
In the CLO setting, our primary loss function of interest is the SPO loss, introduced by \citet{elmachtoub2022smart}, which characterizes the regret in decision error due to an incorrect prediction and is formally defined as 
    \begin{equation*}
        \lspo(\hat{c}, c) := c^T w^*(\hat{c}) - c^T w^*(c),
    \end{equation*}
    for any cost vector prediction $\hat{c}$ and realized cost vector $c$.
    We further define the SPO risk of a prediction function $h\in \mathcal{H}$  as $\rspo(h) := \bbE_{(x, c) \sim \mathcal{D}} [\lspo(h(x), c)]$, and the excess risk of $h$ as $\rspo(h) - \inf_{h'\in \mathcal{H}} \rspo(h')$. (Throughout, we typically remove the subscript notation from the expectation operator when it is clear from the context.) Notice that a guarantee on the excess SPO risk implies a guarantee that holds ``on average'' with respect to $x$ for \eqref{eq:po-framework}.

    As previously described, in many situations acquiring cost vector data may be costly and time-consuming. Active learning aims to choose which feature samples $x$ to label sequentially and interactively, in contrast to standard supervised learning which acquires the labels of all the samples before training the model. In the CLO setting, acquiring a ``label" corresponds to collecting the cost vector data $c$ associated with a given feature vector $x$.
    An active learner aims to use a small number of labeled samples to achieve a small prediction error. In the agnostic case, the noise is nonzero and the smallest prediction error is the Bayes risk, which is $R^*_{\tn{SPO}} = \inf_{h \in \mathcal{H}} \rspo(h) > 0$. The goal of an active learning method is to then find a predictor $\hat{h}$ trained on the data with the minimal number of labeled samples, such that $\rspo(\hat h) \le R^*_{\tn{SPO}} + \epsilon$, with high probability and where $\epsilon > 0$ is a given risk error level. The number of labels acquired to achieve this goal is referred to as the label complexity.

    \subsection{Surrogate Loss Functions and SPO+}

    Due to the potential non-convexity and non-continuity of the SPO loss, a common approach is to consider surrogate loss functions $\ell$ that have better computational properties and are still (ideally) aligned with the original SPO loss. Given a suitable surrogate loss function $\ell: \bbR^d \times \bbR^d \rightarrow \bbR_+$, the surrogate risk of a predictor $h \in \mathcal{H}$ is denote by $R_{\ell}(h)$, and the corresponding minimum risk is denoted by $R_{\ell}^* := \min_{h \in \mathcal{H}} R_{\ell}(h)$.

    As a special case of the surrogate loss function $\ell$,  \citet{elmachtoub2022smart} proposed a convex surrogate loss function, called the SPO+ loss, which is defined by 
    \begin{equation*}
        \lspop(\hat{c}, c) := \max_{w \in S} \lrrr{(c - 2 \hat{c})^T w} + 2 \hat{c}^T w^*(c) - c^T w^*(c),
    \end{equation*}
    and is an upper bound on the SPO loss, i.e., $\lspo(\hat{c}, c) \leq \lspop(\hat{c}, c)$ for any $\hat{c} \in \hat{\mathcal{C}}$ and $c \in \mathcal{C}$.
    \citet{elmachtoub2022smart} demonstrate the computational tractability of the SPO+ surrogate loss, conditions for Fisher consistency of the SPO+ risk with respect to the true SPO risk, as well as numerical evidence of its good performance with respect to the downstream optimization task. \citet{liu2021risk} further establish sufficient conditions under which a small excess surrogate SPO+ risk for a prediction function $h$ implies a small excess true SPO risk for $h$.
    This property holds not only for the SPO+ loss but also for other surrogate loss functions, such as the squared $\ell_2$ loss (see, for details, \citet{ho2020risk}). 
    Importantly, the SPO+ loss still accounts for the downstream optimization problem and the structure of the feasible region $S$, in contrast to losses like the $\ell_2$ loss that focus only on prediction error. As will be shown in Theorem \ref{thm:sporiskcompare}, compared to the general surrogate loss functions that satisfy Assumption \ref{assumption:1}, the SPO+ loss can be applied in the hard rejection case, thereby resulting in a lower label complexity.

\textbf{Notations. } Let $ \Vert \cdot \Vert $ on $w \in \mathbb{R}^d$ be a generic norm. Its dual norm is denoted by $ \Vert \cdot \Vert _*$, which is defined by $ \Vert c \Vert _* = \max_{w:  \Vert w \Vert \le 1} c^T w$. We denote the set of extreme points in the feasible region $S$ by $\mathfrak{S}$, and the diameter of the set $S\subset \mathbb{R}^d$ by $D_S := \sup_{w,w' \in S}  \{\Vert w - w' \Vert  \}$. The ``linear optimization gap'' of $S$ with respect to cost vector $c$ is defined as $\omega_S(c):= \max_{w \in S}\{c^T w\} - \min_{w \in S}\{c^T w\}$. We further define $\omega_S(\mathcal{C}) := \sup_{c \in \mathcal{C}}\{\omega_S(c)\}$ and $\rho(\mathcal{C}) := \max_{c \in \mathcal{C}} \{\Vert c \Vert \}$, where again $\mathcal{C}$ is the domain of possible realizations of cost vectors under the distribution $\mathcal{D}$. We denote the cost vector space of the prediction range by $\hat{\mathcal{C}}$, i.e., $\hat{\mathcal{C}} := \{c \in \bbR^d: c = h(x), h \in \mathcal{H}, x \in \mathcal{X}\}$.  For the surrogate loss function $\ell$, we define $\omega_\ell(\hat{\mathcal{C}}, \mathcal{C}) := \sup_{\hat{c} \in \hat{\mathcal{C}}, c \in \mathcal{C}} \{\ell(\hat{c}, c)\}$. We also denote $\rho(\mathcal{C},\mathcal{\hat{C}}) := \max\{ \rho(\mathcal{C}), \rho(\mathcal{\hat{C}}) \}$ for the general norm. 
We use $\mathcal{N}(\mu,\sigma^2)$ to denote the multivariate normal distribution with center $\mu$ and covariance matrix $\sigma^2$. We use $\bbR_+$ to denote $[0,+ \infty)$. When conducting the asymptotic analysis, we adopt the standard notations $\mathcal{O}(\cdot)$, $\Theta(\cdot)$ and $\Omega(\cdot)$. We further use $\otilde(\cdot)$ to suppress the logarithmic dependence.
 
We use $\mathbb{I}$ to refer to the indicator function, which outputs 1 if the argument is true and 0 otherwise.

\section{Margin-Based Algorithm}\label{sec:margin}
 
In this section, we introduce the margin-based algorithm designed to minimize the SPO loss (MBAL-SPO). To provide intuition and motivation, we begin by illustrating the algorithm using a geometric example.

\subsection{Illustration and Motivation}\label{sec:illustration_mbal}
Let us introduce the idea of the MBAL-SPO with the following two examples, which illustrate the value of integrating the SPO loss into active learning. Particularly, given the current training set and predictor, it is very likely that some features will be more informative and thus more valuable to label than others. In general, the ``value'' of labeling a feature depends on the associated prediction error (Figure \ref{fig:margin1}) and the location of the prediction relative to the structure of the feasible region $S$ (Figure \ref{fig:margin2}).
In Figure \ref{fig:margin1}, the feasible region $S$ is polyhedral and the yellow arrow represents $-\hat{h}(x)$. Within this example, for the purpose of illustration, let us assume the hypothesis class is well-specified. Our goal then is to find a good predictor $h$ from the hypothesis class $\mathcal{H}$, such that $h(x)$ is close to $\bbE[c|x]$. However, because $c|x$ is random, the empirical best predictor $\hat{h}$ in the training set may not exactly equal the true predictor $h^*$, where $h^*(x) = \bbE[c|x]$. Given one feature $x$, the prediction is $\hat c = \hat{h}(x)$, the negative of which is shown in Figures \ref{fig:margin1a} and \ref{fig:margin1b}. Intuitively, when the training set gets larger, the empirical best predictor $\hat{h}$ should get closer to $h^*$, and $\hat{h}(x)$ should get closer to $\bbE[c|x]$. Thus, we can construct a confidence region around $\hat{h}(x)$, such that $\bbE[c|x]$ is within this confidence region with some high probability. Examples of confidence regions for the estimation of $\bbE[c|x]$ given the current training set are shown in the green circles in Figure \ref{fig:margin1}. The optimal solution $w^\ast(\hat{c})$ is the extreme point indicated in Figure \ref{fig:margin1}, and the normal cone at $w^\ast(\hat{c})$ illustrates the set of all cost vectors whose optimal solution is also $w^\ast(\hat{c})$.
In addition, those cost vectors that lie on the boundary of the normal cone are the cost vectors that can lead to multiple optimal decisions (they will be defined as degenerate cost vectors in Definition \ref{def:distancedegen} later). In cases when the confidence region is large (e.g., because the training set is small), as indicated in Figure \ref{fig:margin1a}, the green circle intersects with the degenerate cost vectors, which means that some vectors within the confidence region for estimating $\bbE[c|x]$ could lead to multiple optimal decisions. When the confidence region is smaller (e.g., because the training set is larger), as indicated in Figure \ref{fig:margin1b}, the green circle does not intersect with the degenerate cost vectors, which means the optimal decision of $\bbE[c|x]$ is the same as the optimal decision of $\hat{c} = \hat{h}(x)$, $w^\ast(\hat{c})$, with high probability. Thus, when the confidence region of $\bbE[c|x]$ does not intersect with the degenerate cost vectors, the optimal decision based on the current estimated cost vector will lead to the correct optimal decision with high probability, and the SPO loss will be zero. This in turn suggests that the label corresponding to $x$ is not informative (and we do not have to acquire it), when the confidence region centered at the prediction $\hat{h}(x)$ is small enough to not intersect those degenerate cost vectors.
  \begin{figure}[ht]
	\centering
	\begin{subfigure}{0.49\textwidth}
    \centering
        \includegraphics[height=5cm]{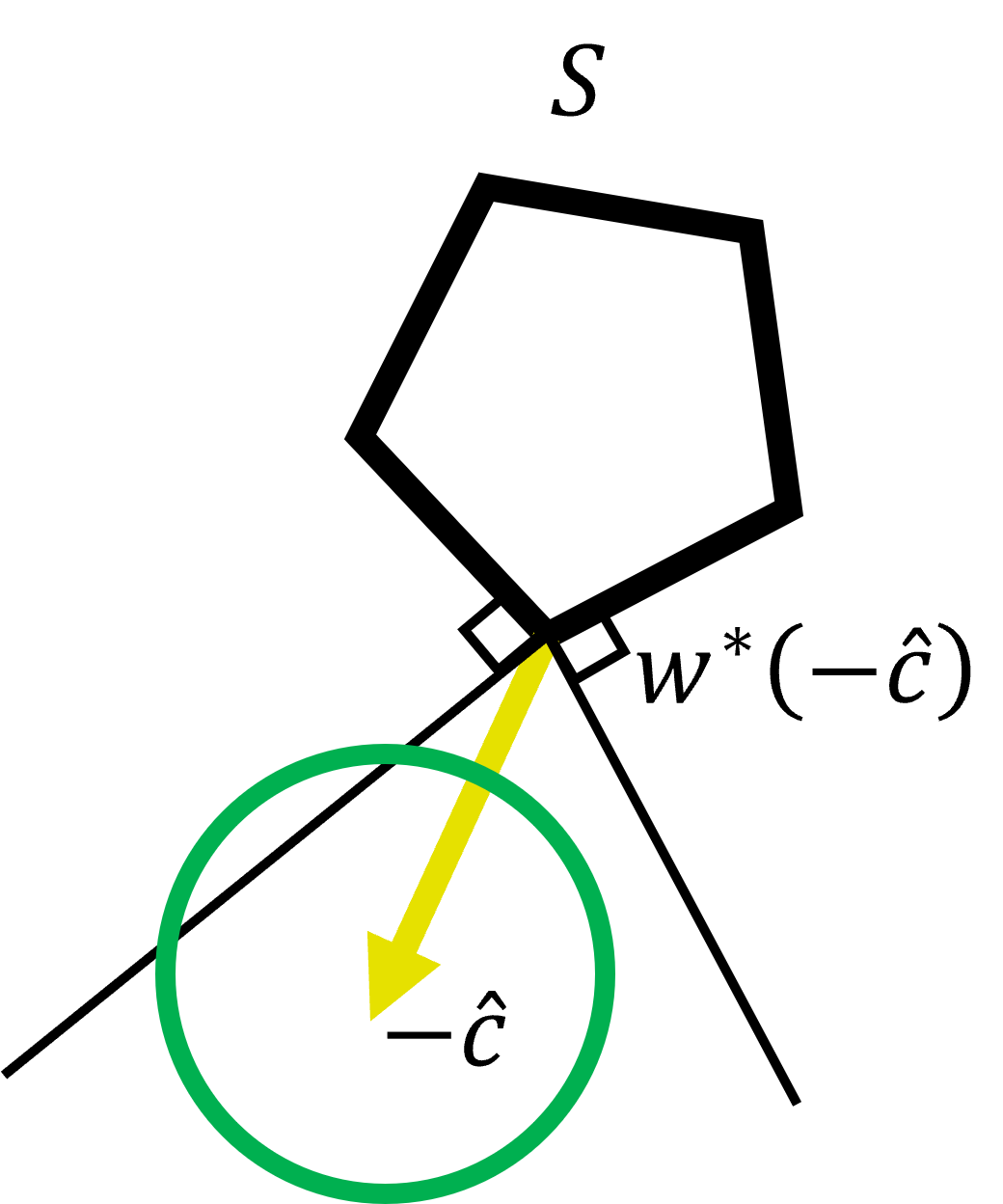}
        \caption{Confidence region is large}
        \label{fig:margin1a}
    \end{subfigure} 
    \begin{subfigure}{0.49\textwidth}
    \centering
        \includegraphics[height=5cm]{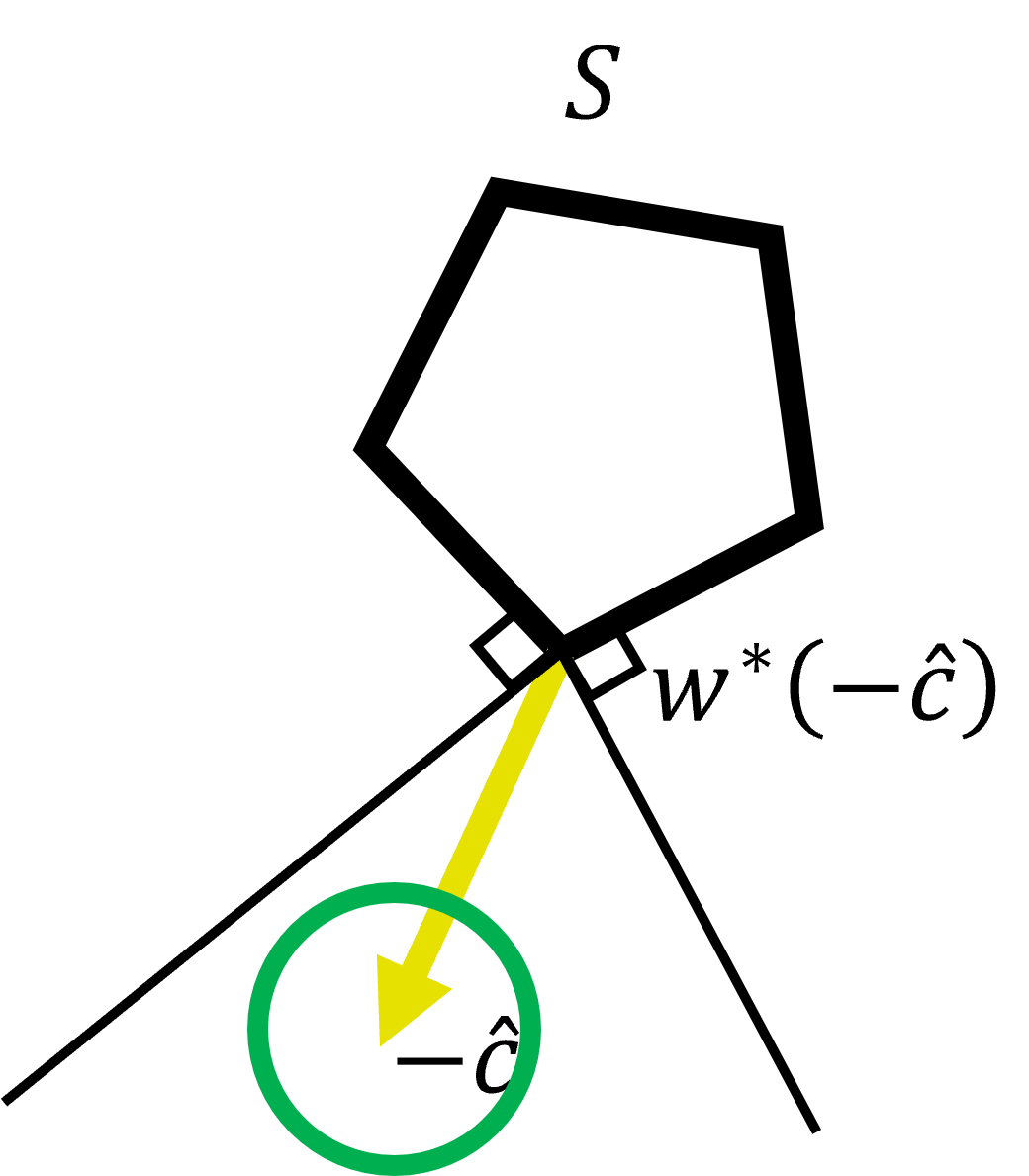}
        \caption{Confidence region is small}
        \label{fig:margin1b}
    \end{subfigure} 
	\caption{Illustration for how active learning reduces the label complexity, given the same prediction.}
	\label{fig:margin1}
\end{figure}

Figure \ref{fig:margin2} further shows that considering the SPO loss function reduces the label complexity when the confidence regions of the cost vector are the same size. In Figure \ref{fig:margin2}, both green circles have the same radius but their locations are different. In Figure \ref{fig:margin2a}, the confidence region for $\bbE[c|x]$ is close to the degenerate cost vectors, and thus the cost vectors within the confidence region will lead to multiple optimal decisions. In Figure \ref{fig:margin2b}, the confidence region for $\bbE[c|x]$ is far from the degenerate cost vectors, and therefore acquiring a label for $x$ is less informative, as we are more confident that $\hat{h}(x)$ leads to the correct optimal decision due to the more central location of the confidence region.

 \begin{figure}[ht]
	\centering
	\begin{subfigure}{0.49\textwidth}
    \centering
        \includegraphics[height=5cm]{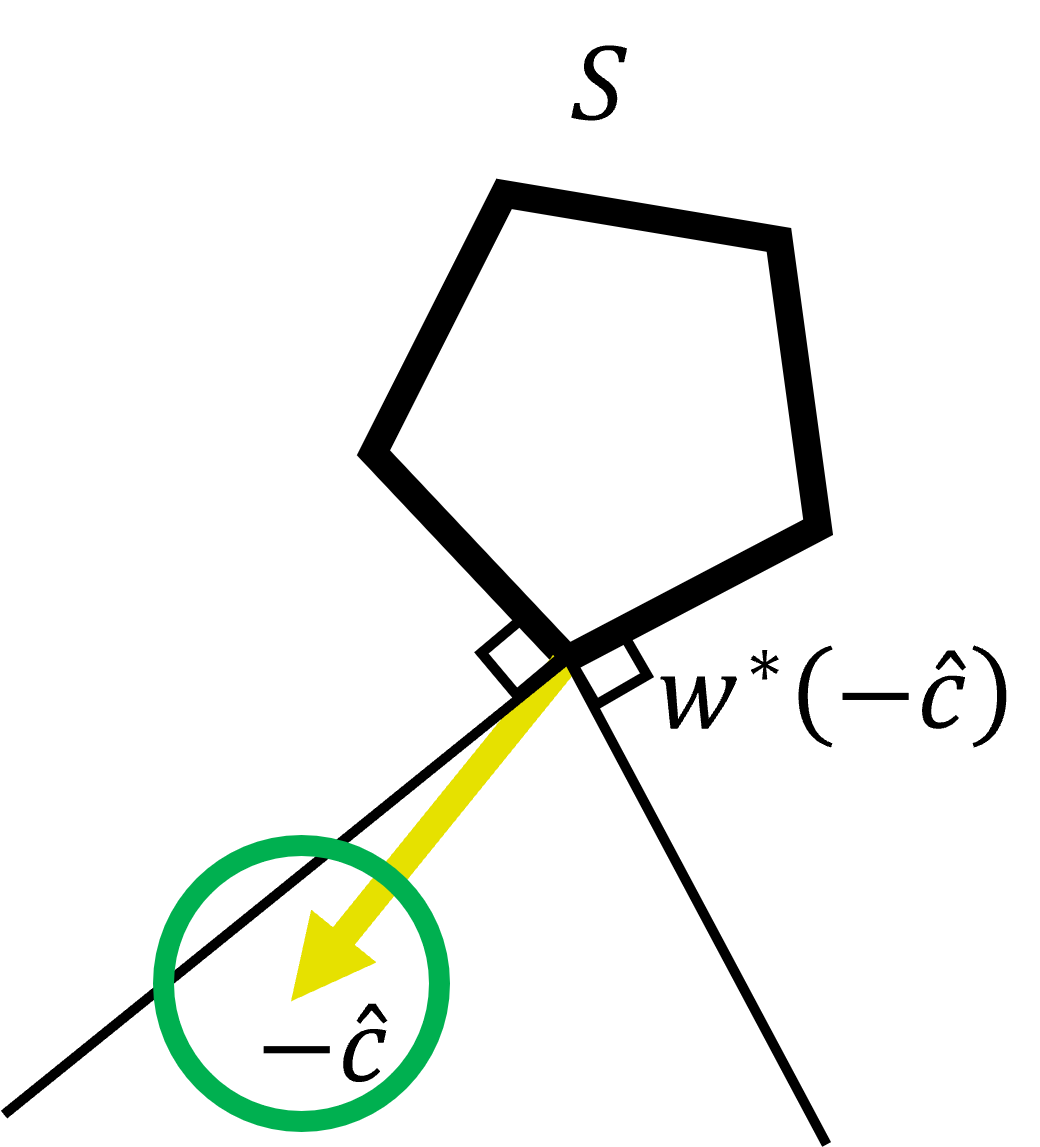}
        \caption{Prediction is close to  degenerate cost vectors}
        \label{fig:margin2a}
    \end{subfigure} 
    \begin{subfigure}{0.49\textwidth}
    \centering
        \includegraphics[height=5cm]{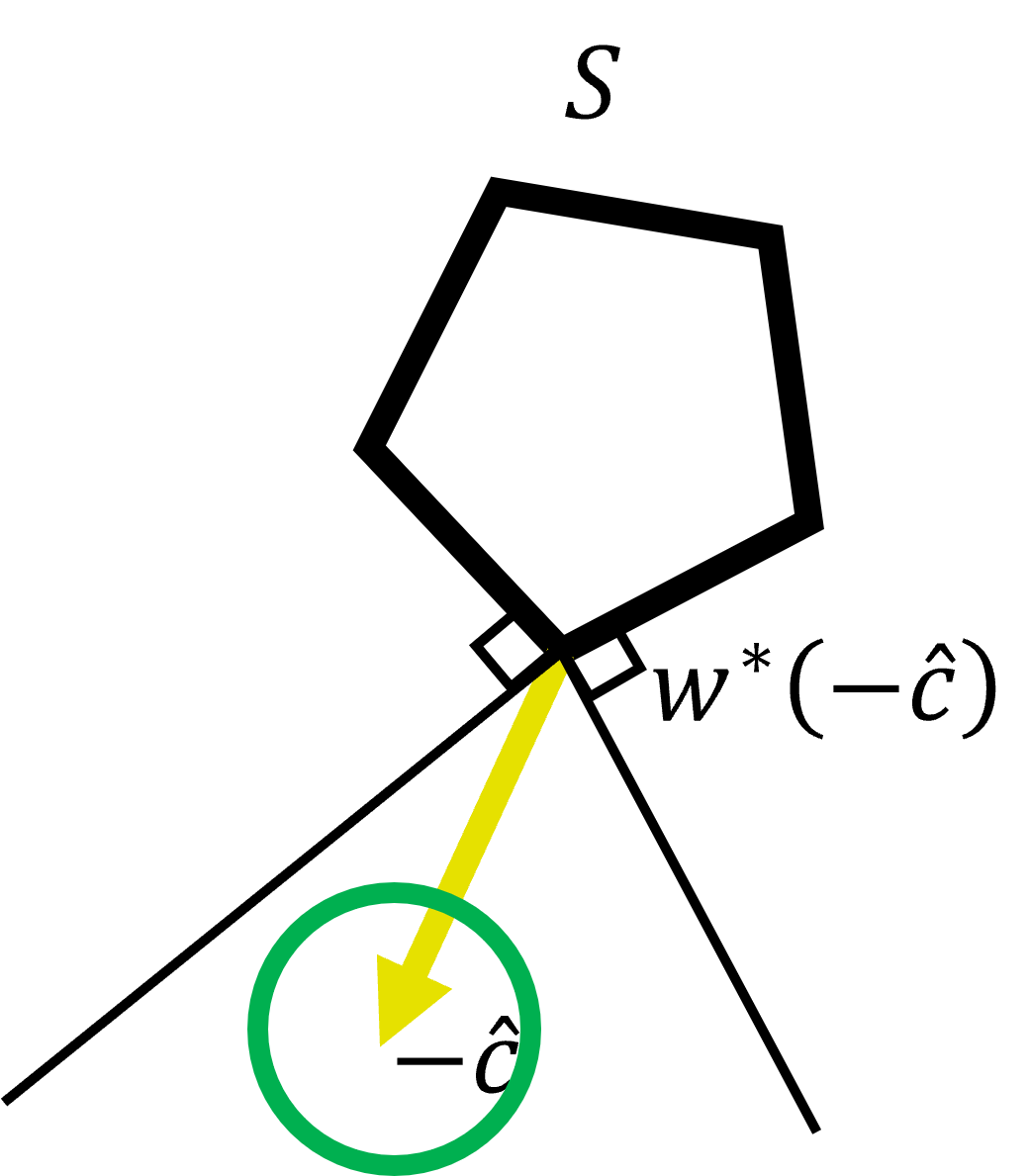}
        \caption{Prediction is far from  degenerate cost vectors}
        \label{fig:margin2b}
    \end{subfigure} 
	\caption{Illustration for how the SPO loss function reduces the label complexity, given the same size confidence region.}
	\label{fig:margin2}
\end{figure}

    The above two examples highlight that the confidence associated with the decision $w^*(\hat{c})$ is crucial to determine whether it is valuable to acquire a true label $c$.
  Furthermore, confidence is related to both the size of the confidence region (which depends on the number of labeled samples we have acquired) and the location of the prediction relative to the structure of $S$. \cite{el2022generalization} introduced the notion of ``distance to degeneracy," which measures the distance of a prediction $\hat{h}(x)$ to those degenerate cost vectors with multiple optimal solutions. This metric provides a way to measure confidence in the resulting decisions. In fact, \cite{el2022generalization} argue that the distance to degeneracy provides a notion of confidence associated with a prediction that generalizes the notion of ``margin'' in binary and multiclass classification problems. In our context, we adopt the distance to degeneracy to determine whether or not to acquire labels. It is motivated by our intuition from the previously discussed examples wherein the labels of samples should be more informative if their predicted cost vectors are closer to degeneracy.
In turn, we develop a margin-based active learning algorithm that utilizes the distance to degeneracy as a confidence measure to determine those samples whose labels should (or should not) be acquired. \cref{def:distancedegen} reviews the notion of distance to degeneracy as defined by \cite{el2022generalization}.

\begin{definition}\label{def:distancedegen} (Distance to Degeneracy, \cite{el2022generalization}). The set of degenerate cost vector predictions is 
$\mathcal{C}^o:= \{\hat c \in \mathbb{R}^d: P(\hat c)$ has multiple optimal solutions$\}$.  
Given a norm $\|\cdot\|$ on $\bbR^d$, the distance to degeneracy of the prediction $\hat{c}$ is $\nu_S(\hat c):= \inf_{c \in \mathcal{C}^o}\{\|c- \hat{c}\|\}$. 
\hfill \Halmos
\end{definition}

The distance to degeneracy can be easily computed with known extreme point representations. \cite{el2022generalization} provide the exact formulas of the distance to degeneracy function in this case.
In particular, given a polyhedral feasible region with extreme points $\{v_j : j = 1, ..., K\}$, that is, $S = \text{conv}(v_1, \ldots, v_K)$, Theorem 8 of \cite{el2022generalization} says that the distance to degeneracy of any vector $c \in \bbR^d$ satisfies the following equation:
\begin{align}\label{equ:thm8}
    \nu_S(c) = \min_{j: v_j\not = w^*(c)} \left\{ \frac{c^T (v_j - w^*(c))}{\|v_j - w^*(c)\|_*} \right\}.
\end{align}
 
As mentioned, the distance to degeneracy $\nu_S(\hat c)$ provides a measure of “confidence” regarding the decision $w^*(\hat{c})$ given the prediction $\hat{c}$. This observation motivates us to design a margin-based active learning algorithm (MBAL-SPO). In MBAL-SPO, if the distance to degeneracy $\nu_S(\hat c)$ is greater than some threshold (depending on the current training set), we can confidently rely on the decisions produced by the current model without querying the true label.

\subsection{MBAL-SPO Algorithm}
 
 Our margin-based method is proposed in Algorithm \ref{alg:margin-based}. 
 
 The idea of MBAL-SPO can be explained as follows. The algorithm begins with a warm-up period of length $n_0$, during which labels are collected for all samples. Such a warm-up period is standard in the active learning literature (see, e.g., \cite{chen2023active}). This warm-up period can also be interpreted as the prior knowledge available before actively acquiring labels. Using the collected data in the warm-up period, we determine the initial predictor $h_0$ by minimizing the empirical loss and determine the initial rejection threshold $b_0$ by taking the $\tilde{q}$ quantile of the distance to degeneracy for the observed samples. 
 
 Next, at iteration $t$, we first observe an unlabeled feature vector $x_t$, which follows distribution $\mathcal{D}_{\mathcal{X}}$. Given the current predictor $h_{t-1}$, we calculate the distance to the degeneracy $\nu_S(h_{t-1}(x_t))$ of this unlabeled sample $x_t$. If the distance to degeneracy $\nu_S(h_{t-1}(x_t))$ is greater than the threshold $b_{t-1}$, then we reject $x_t$ with some probability $1- \tilde{p}$. If $\tilde{p} = 0$, this rejection is referred to as a hard rejection; when $\tilde{p} > 0$, this rejection is referred to as a soft rejection. If a soft-rejected sample is not ultimately rejected, we acquire a label (cost vector) $c_t$ associated with $x_t$ and add the sample $(x_t, c_t)$ to the set $\tilde{W}_t$. Conversely, if $\nu_S(h_{t-1}(x_t)) < b_{t-1}$, then we acquire a label (cost vector) $c_t$ associated with $x_t$ and add the sample $(x_t, c_t)$ to the working training set $W_t$. 
At each iteration, we update the predictor $h_t$ by minimizing the empirical surrogate risk $\hat{\ell}_{}^t(h)$ over the hypothesis class $\mathcal{H}$. Note that Algorithm \ref{alg:margin-based} maintains two working sets, $\tilde{W}_t$ and $W_t$, due to the two different types of labeling criteria. In the soft-rejection case, a weight of $\frac{1}{\tilde{p}}$ is assigned to the samples in the set $\tilde{W}_t$ to ensure that the expected empirical loss equals the expected true loss.
Throughout the analysis, the sequence $(x_1, c_1), (x_2, c_2), \ldots$ is assumed to be i.i.d. from the distribution $\mathcal{D}$.

\begin{algorithm}[tb]
\caption{Margin-Based Active Learning for SPO (MBAL-SPO)}\label{alg:margin-based}
\begin{algorithmic}[1]
 
\STATE \textbf{Input:} Soft-rejection probability $\tilde{p}$, initial rejection quantile $\tilde{q}$ and the length of warm-up period $n_0$.  
\STATE Collect $n_0$ samples and initialize the working set $W_0 \gets \{(x_i, c_i)_{i=1}^{n_0}\}$. 

\STATE Let $h_0 \gets \arg\min_{h \in \mathcal{H}} \frac{1}{n_0}\sum_{(x,c) \in W_0} \ell_{}(h(x), c)$ and $b_0 \gets \tilde{q}$ empirical quantile of $\left\{\nu_S\left(h_0(x_i) \right)\right\}_{i=1}^{n_0}$.  

\STATE Let $\tilde{W}_0\gets \emptyset$.   

\FOR{$t$ from $1,2,..., T$}

    \STATE Draw one sample $x_t$ from $\mathcal{D}_{\mathcal{X}}$. 
    \IF{$\nu_S(h_{t-1}(x_t)) \ge b_{t-1}$}
        \STATE Flip a coin with heads-up probability $\tilde{p}$.
        \IF{ the coin gets heads-up}  
        \STATE Acquire a ``true'' label $c_t$ of $x_t$.
        \STATE Update working set $\tilde{W}_t \gets \tilde{W}_{t-1} \cup \{(x_t,c_t)\}$. Set $n_t \gets n_{t-1} + 1$.
        \ELSE
        \STATE Reject $x_t$. Set $n_t \gets n_{t-1}$ and $\tilde{W}_t \gets \tilde{W}_{t-1}$.
        \ENDIF
    \ELSE
        \STATE Acquire a ``true'' label $c_t$ of $x_t$.
    \STATE Update working set $W_t \gets W_{t-1} \cup \{(x_t,c_t)\}$. Set $n_t \gets n_{t-1} + 1$.
        \ENDIF
    \STATE Let $\hat{\ell}_{}^t(h) \gets \frac{1}{t + n_0}\left( \sum_{(x,c) \in W_t} \ell_{}(h(x), c) +\frac{1}{\tilde{p}} \sum_{(x,c) \in \tilde{W}_t} \ell_{}(h(x), c) \right)$.

    \STATE Update $h_t \gets \arg\min_{h \in \mathcal{H}} \hat{\ell}_{}^t (h)$ and $b_t \gets b_0 \left(\frac{n_0 \ln (n_0 + t)}{t}\right)^{-1/4}$.

\ENDFOR
\STATE \textbf{Return} $h_T$.

\end{algorithmic}
\end{algorithm}

The two versions of MBAL-SPO each have their respective advantages. Hard rejection is applicable when the surrogate loss function is either the SPO loss itself or the SPO+ loss under certain specific noise distributions. In contrast, soft rejection applies to more general surrogate loss functions. Comparatively, hard rejection can result in lower label complexity because $\tilde{p} = 0$. For further details, refer to the discussions in Section \ref{sec:smalllabel}. It is worth noting that the hard rejection version can also be extended to general surrogate losses by constructing a non-trivial confidence set of predictors at each iteration. This non-trivial extension introduces both computational and statistical challenges, which are discussed in Appendix \ref{sec:mbal_label_complexity}.

In Algorithm \ref{alg:margin-based}, the case where $\nu_S(h_{t-1}(x_t)) \ge b_{t-1}$ intuitively corresponds to the case where the confidence region of $h_{t-1}(x_t)$ does not intersect with the degenerate cost vectors. Hence, we are sufficiently confident that the optimal decision $w^*(h_t(x_t))$ is equal to $w^\ast(h^\ast(x_t))$, where $h^\ast$ is a model that minimizes the SPO risk. Thus, we do not have to ask for the label of $x_t$. 
Lemma \ref{lemma:identical} further characterizes the conditions when two predictions lead to the same decision.

\begin{lemma}[Conditions for identical decisions]\label{lemma:identical} Given two cost vectors $c_1, c_2 \in \bbR^d $, if $\| c_1- c_2 \| < \max\{\nu_S(c_1), \nu_S(c_2)\}$, then it holds that $w^*(c_1) = w^*(c_2)$. In other words, the optimal decisions for $c_1$ and $c_2$ are the same.
\end{lemma}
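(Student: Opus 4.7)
\textbf{Proof plan for Lemma \ref{lemma:identical}.} The plan is to reduce to the closed-form expression \eqref{equ:thm8} for $\nu_S$ and then propagate strict optimality from $c_1$ to $c_2$ via the dual-norm inequality. Without loss of generality assume $\nu_S(c_1)\ge \nu_S(c_2)$, so that the hypothesis reads $\|c_1-c_2\| < \nu_S(c_1)$; write $w_1:=w^*(c_1)$. Using the extreme-point representation $S=\mathrm{conv}(\mathfrak{S})$ that underlies \eqref{equ:thm8}, the linear program $P(c_2)$ attains its minimum at some extreme point, so it is enough to show that $c_2^T v_j > c_2^T w_1$ for every $v_j\in \mathfrak{S}$ with $v_j\neq w_1$.

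By \eqref{equ:thm8} applied at $c_1$, for every such $v_j$,
\[
c_1^T(v_j-w_1) \;\ge\; \nu_S(c_1)\,\|v_j-w_1\|_*.
\]
Decomposing $c_2=c_1+(c_2-c_1)$ and using the dual-norm bound $|(c_2-c_1)^T(v_j-w_1)|\le \|c_1-c_2\|\,\|v_j-w_1\|_*$ then yields
\[
c_2^T(v_j-w_1) \;\ge\; \bigl(\nu_S(c_1)-\|c_1-c_2\|\bigr)\,\|v_j-w_1\|_* \;>\; 0,
\]
where strictness uses $\|c_1-c_2\|<\nu_S(c_1)$ together with $v_j\neq w_1$. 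Hence $w_1$ is the unique extreme-point optimum of $P(c_2)$, so $w^*(c_2)=w_1=w^*(c_1)$.

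A purely geometric alternative would note that the segment $\{(1-\lambda)c_1+\lambda c_2 : \lambda\in[0,1]\}$ stays within distance $\|c_1-c_2\|<\nu_S(c_1)$ of $c_1$ and therefore never meets $\mathcal{C}^o$, and then invoke local constancy of $w^*$ on the complement of $\mathcal{C}^o$. I would avoid this route because showing that $w^*$ is constant on each connected component of $\bbR^d\setminus\mathcal{C}^o$ requires additional work, whereas the vertex-based argument above is self-contained given \eqref{equ:thm8}. The only mildly delicate point, and the main thing to get right, is the WLOG reduction: one must apply \eqref{equ:thm8} at whichever of $c_1,c_2$ has the \emph{larger} distance to degeneracy, so that the margin appearing in the resulting bound is exactly the $\nu_S(\cdot)$ controlled by the hypothesis.
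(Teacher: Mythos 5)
Your proof is correct. It rests on the same two ingredients as the paper's proof --- the extreme-point formula \eqref{equ:thm8} and the H\"older/dual-norm pairing $|(c_2-c_1)^T(v_j-w_1)|\le \|c_1-c_2\|\,\|v_j-w_1\|_*$ --- but the logical structure differs. The paper argues by contradiction: it first notes via $1$-Lipschitzness of $\nu_S$ that both $\nu_S(c_1)$ and $\nu_S(c_2)$ are positive, assumes $w^*(c_1)\neq w^*(c_2)$, applies \eqref{equ:thm8} at \emph{both} points, sums the two inequalities, and derives $\nu_S(c_1)+\nu_S(c_2)\le\|c_1-c_2\|<\nu_S(c_1)$, forcing $\nu_S(c_2)<0$. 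Your argument is direct and one-sided: you invoke \eqref{equ:thm8} only at the point with the larger distance to degeneracy and show that $w_1$ strictly dominates every other vertex under $c_2$. This buys you a marginally stronger conclusion --- $w_1$ is the \emph{unique} optimizer of $P(c_2)$, so in particular $c_2\notin\mathcal{C}^o$ --- and avoids having to separately establish $\nu_S(c_2)>0$ before applying the formula at $c_2$. Your flagged delicate point (applying \eqref{equ:thm8} at whichever of the two vectors achieves the maximum) is handled correctly, and your decision to avoid the ``local constancy of $w^*$ off $\mathcal{C}^o$'' route is sensible for exactly the reason you give. Either proof is acceptable.
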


Lemma \ref{lemma:identical} implies that given one prediction of the cost vector, when its distance to degeneracy is larger than the radius of its confidence region, then all the predictions within this confidence region will lead to the same decision. Moreover, if the optimal prediction is also within this confidence region, the SPO loss of this prediction is zero.

The computational complexity of Algorithm \ref{alg:margin-based} depends on the choice of the surrogate loss we use. As discussed earlier, calculating the distance to degeneracy $\nu_S(h(x))$ is efficient when the polyhedron can be represented by a finite set of extreme points.
In general, in the polyhedral case when a convex hull representation is not available, a reasonable heuristic is to only compute the minimum in \eqref{equ:thm8} with respect to the neighboring extreme points of $w^*(c)$.

The computational complexity of updating $h_t$ in Line 20 depends on performing empirical risk minimization in $\mathcal{H}$. It can be efficiently computed exactly or approximately for most common choices of $\mathcal{H}$, including linear and nonlinear models.  For both cases of hard rejection and soft rejection, although we have to solve a different optimization problem at every iteration, these optimization problems do not change much from one iteration to the next, and therefore using a warm-start strategy that uses $h_{t-1}$ as the initialization for calculating $h_{t}$ is very effective.

\section{Guarantees and Analysis for the Margin-Based Algorithm}\label{sec:analysis}

Without further assumptions on the distribution $\mathcal{D}$, the label complexity of any active learning algorithm can be the same as the sample complexity of supervised learning, as shown in \citet{kaariainen2006active}. Therefore, we make several natural assumptions in order to analyze the convergence and label complexity of our algorithm. 
Recall that the optimal SPO and surrogate risk values are defined as:
\begin{equation*}
R^*_{\tn{SPO}} := \min_{h \in \mathcal{H}} \ \rspo(h), \ \ \text{and} \ \ \ R^*_{\ell} := \min_{h \in \mathcal{H}} \ R_\ell(h). 
\end{equation*}
We define $\mathcal{H}^*$ as the set of all optimal predictors for the SPO risk, i.e., $\mathcal{H}^* = \{h \in \mathcal{H}: \rspo(h) \le \rspo(h'), $ for all $ h' \in \mathcal{H}  \}$ and $\mathcal{H}^*_\ell$ as the set of all optimal predictors for the risk of the surrogate loss, i.e., $\mathcal{H}^*_\ell = \{h \in \mathcal{H}: R_\ell(h) \le R_\ell(h'), $ for all $ h' \in \mathcal{H}\}$. We also use the notation $R^*_{\mathrm{SPO}+}$ and $\mathcal{H}^*_{\mathrm{SPO}+}$ when the surrogate loss $\ell$ is SPO+.
We define the essential sup norm of a function $h : \mathcal{X} \to \bbR^d$ as $\|h\|_{\infty} := \inf\{\alpha \geq 0 : \|h(x)\| \leq \alpha \text{ for almost every } x \in \mathcal{X}\}$, with respect to the marginal distribution of $x$ and where $\|\cdot\|$ is the norm defining the distance to degeneracy (Definition \ref{def:distancedegen}).
Given a set $\mathcal{H}^\prime \subseteq \mathcal{H}$, we further define the distance between a fixed predictor function $h$ and $\mathcal{H}^\prime$ as $\text{Dist}_{\mathcal{H}^\prime} (h):=  \inf_{h^\prime \in \mathcal{H}^\prime} \{\| h - h^\prime\|_{\infty}\}$.
Assumption \ref{assumption:1} states our main assumptions on the surrogate loss function $\ell$ that we work with.

\begin{assumption}[Consistency and error bound condition] \label{assumption:1}

The hypothesis class $\mathcal{H}$ is a nonempty compact set w.r.t. to the sup norm, and the surrogate loss function $\ell: \bbR^d \times \bbR^d \rightarrow \bbR_+$ satisfies: 
\begin{enumerate}[label={(\arabic*)},ref={\theassumption.(\arabic*)}]
    \item \label{assu:consistent} $\mathcal{H}^*_\ell$ is nonempty and $\mathcal{H}^*_\ell \subseteq \mathcal{H}^*$, i.e., the minimizers of the surrogate risk are also minimizers of the SPO risk.
    \item \label{assumption:upper-bound-pointwise} There exists a non-decreasing function $\phi: \bbR_+ \rightarrow \bbR_+$ with $\phi(0) = 0$ such that for any $h \in \mathcal{H}$, for any $\epsilon>0$, 
\begin{align*}
    R_{\ell}(h) - R_{\ell}^* \le \epsilon ~\Rightarrow~  \mathrm{Dist}_{\mathcal{H}^*_\ell} (h) \le \phi(\epsilon).
\end{align*}
\end{enumerate}
\end{assumption}

Assumption \ref{assu:consistent} states the consistency of the surrogate loss function. 
Assumption \ref{assumption:upper-bound-pointwise} is a type of error bound condition on the risk of the surrogate loss, wherein the function $\phi$ provides an upper bound of the sup norm between the predictor $h$ and the set of optimal predictors $\mathcal{H}_\ell^*$ whenever the surrogate risk of $h$ is close to the minimum surrogate risk value. By Assumption \ref{assumption:upper-bound-pointwise}, when the excess surrogate risk of $h$ becomes smaller, $h$ becomes closer to the set $\mathcal{H}_\ell^*$, which implies that the prediction $h(x)$ also gets closer to an optimal prediction $h^*(x)$ for any given $x$. As a consequence, the distance to degeneracy $\nu_S(h(x))$ also converges to $\nu_S(h^*(x))$ for almost all $x \in \mathcal{X}$. This property enables us to analyze the performance of MBAL-SPO under SPO and surrogate loss function respectively in the next two sections.   In this paper, our results primarily focus on the case where $\phi(\cdot) \leq \mathcal{O}(\sqrt{\cdot})$. The general form of the function $\phi(\cdot)$ in Assumption \ref{assumption:1} is related to the uniform calibration function studied in \cite{ho2020risk} and is further discussed in the setting of hard rejection for general surrogate losses in Appendix \ref{sec:mbal_label_complexity}.  
Next, to measure how the density of the distribution $ \nu_S(h^\ast(x))$ is allocated near the degeneracy, we define the near-degeneracy function $\Psi$ in Definition \ref{def:near}. 
\begin{definition}[Near-degeneracy function]\label{def:near} The near-degeneracy function  $\Psi : \bbR_+ \to [0,1]$ with respect to the distribution of $x\sim \mathcal{D}_{\mathcal{X}}$ is defined as:
\begin{equation*}
    \Psi(b) := \mathbb{P}\left( \inf_{h^* \in \mathcal{H}^*}\{\nu_S(h^\ast(x))\} \le b \right).
\end{equation*}
\hfill \Halmos
\end{definition}

The near-degeneracy function $\Psi$ measures the probability that the distance to degeneracy of $h^*(x)$ is smaller than $b$, when $x$ follows the marginal distribution of $x$ in $\mathcal{D}_{\mathcal{X}}$. If $\mathcal{H}^*$ contains more than one optimal predictor, the near-degeneracy function $\Psi$ considers the distribution of the smallest distance to degeneracy of all optimal predictors $h^*$. Intuitively, when $\Psi(b)$ is smaller, the density allocated near the margin becomes smaller, which means Algorithm \ref{alg:margin-based} has a larger probability of rejecting samples, and achieves smaller label complexity. This intuition is characterized in Lemma \ref{lemma:label_c}.

\begin{lemma}[Upper bound on the expected number of acquired labels]\label{lemma:label_c} Suppose that Assumption \ref{assumption:1} holds. In Algorithm \ref{alg:margin-based}, if $h_t$ satisfies $\mathrm{Dist}_{\mathcal{H}^*_\ell} (h_t) \le b_t$ for all iterations $t \geq 0$, then the expected number of acquired labels after $T$ total iterations is at most $\tilde{p}T + \sum_{t = 1}^T  \Psi(2b_{t-1})$.
\end{lemma}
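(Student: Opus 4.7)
I would bound the probability that a label is acquired at each iteration $t$ and then sum. Let $\mathcal{F}_{t-1}$ denote the $\sigma$-field generated by everything observed (features, coin flips, acquired labels) through iteration $t-1$, so that $h_{t-1}$ and $b_{t-1}$ are $\mathcal{F}_{t-1}$-measurable while $x_t$ is drawn independently from $\mathcal{D}_{\mathcal{X}}$. Set $p_t := \mathbb{P}\bigl(\nu_S(h_{t-1}(x_t)) < b_{t-1} \,\big|\, \mathcal{F}_{t-1}\bigr)$. From the two branches of Algorithm \ref{alg:margin-based}, conditional on $\mathcal{F}_{t-1}$ the probability a label is acquired at iteration $t$ is exactly $p_t + (1-p_t)\tilde{p}$, which is bounded above by $\tilde{p} + p_t$. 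Summing over $t$ and invoking the tower property, the lemma reduces to showing that $p_t \le \Psi(2 b_{t-1})$ almost surely.

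\textbf{Key step.} For this core inequality I would combine two observations. First, as the infimum of translates of a norm, the distance to degeneracy $\nu_S(c) = \inf_{c' \in \mathcal{C}^o} \|c - c'\|$ is $1$-Lipschitz with respect to $\|\cdot\|$: $|\nu_S(c_1) - \nu_S(c_2)| \le \|c_1 - c_2\|$. Second, the hypothesis $\text{Dist}_{\mathcal{H}^*_\ell}(h_{t-1}) \le b_{t-1}$ together with Assumption \ref{assu:consistent} produces, for every $\eta > 0$, some $h^\star \in \mathcal{H}^*_\ell \subseteq \mathcal{H}^*$ with $\|h_{t-1} - h^\star\|_{\infty} \le b_{t-1} + \eta$; by the definition of the essential sup norm this gives $\|h_{t-1}(x) - h^\star(x)\| \le b_{t-1} + \eta$ for $\mathcal{D}_{\mathcal{X}}$-almost every $x$. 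Lipschitzness of $\nu_S$ then yields $\nu_S(h^\star(x)) \le \nu_S(h_{t-1}(x)) + b_{t-1} + \eta$, so on the event $\{\nu_S(h_{t-1}(x_t)) < b_{t-1}\}$ we obtain $\inf_{h^* \in \mathcal{H}^*} \nu_S(h^*(x_t)) \le \nu_S(h^\star(x_t)) < 2 b_{t-1} + \eta$. Taking probabilities gives $p_t \le \Psi(2 b_{t-1} + \eta)$, and sending $\eta \downarrow 0$ with the right-continuity of the distribution function $\Psi$ yields $p_t \le \Psi(2 b_{t-1})$.

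\textbf{Assembly and subtleties.} Assembling the pieces, summing the conditional bound $\tilde{p} + p_t$ over $t = 1,\ldots,T$ and taking total expectation produces the stated $\tilde{p} T + \sum_{t=1}^{T} \Psi(2 b_{t-1})$ (read as an expectation of a random sum when the $b_{t-1}$ inherit randomness from the warm-up data). I do not anticipate a serious obstacle: the argument is a Lipschitz transfer of the margin condition from the learned predictor $h_{t-1}$ to an optimal predictor $h^\star$, which is exactly what pairing the distance to degeneracy with the sup-norm error-bound Assumption \ref{assumption:upper-bound-pointwise} is designed to enable. The only mild technicalities are the $\eta$-approximation required when the infimum defining $\text{Dist}_{\mathcal{H}^*_\ell}(h_{t-1})$ is not attained, handled by right-continuity of $\Psi$ as above, and the routine measurability needed to justify conditioning on $\mathcal{F}_{t-1}$.
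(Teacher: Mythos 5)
Your proposal is correct and follows essentially the same route as the paper's proof: bound the per-iteration acquisition probability by $\tilde p + \mathbb{P}(\nu_S(h_{t-1}(x_t)) < b_{t-1})$, then use the $1$-Lipschitzness of $\nu_S$, the $\epsilon$-approximate attainment of $\mathrm{Dist}_{\mathcal{H}^*_\ell}(h_{t-1})$, and the inclusion $\mathcal{H}^*_\ell \subseteq \mathcal{H}^*$ to transfer the near-degeneracy event to an optimal predictor and conclude $\mathbb{P}(\nu_S(h_{t-1}(x_t)) < b_{t-1}) \le \Psi(2b_{t-1})$. The paper phrases the key step in contrapositive form ($\inf_{h^*}\nu_S(h^*(x)) \ge 2b_{t-1}$ implies $\nu_S(h_{t-1}(x)) \ge b_{t-1}$) and passes to the limit in $\epsilon$ before taking probabilities, which sidesteps your appeal to right-continuity of $\Psi$, but these are cosmetic differences.
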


Lemma \ref{lemma:label_c} provides an upper bound for the expected number of acquired labels up to time $t$, by utilizing the near-degeneracy function $\Psi$. Note that in the soft rejection case, if $\tilde{p} > 0$ and $\tilde{p}$ is independent of $T$, Lemma \ref{lemma:label_c} implies that this upper bound grows linearly in $T$. However, if we know the value of $T$ 
before running the algorithm, then this upper bound can be reduced to a sublinear order by setting $\tilde{p}$ as a function of $T$. On the other hand, if we can set $\tilde{p} = 0$, i.e., in the hard rejection case, the upper bound in Lemma \ref{lemma:label_c} is sublinear if $\sum_{t = 1}^T  \Psi(2b_t)$ is sublinear. As will be shown later in Prop. \ref{prop:sublinear_spo}, in the hard rejection case, we achieve a sublinear and sometimes even finite label complexity under some mild margin conditions.

In the following sections, we analyze the convergence and label complexity of MBAL-SPO in various settings. We begin by considering an ideal scenario in which the empirical SPO loss can be minimized directly, as detailed in Section \ref{sec:SPO_direct}. In this case, MBAL-SPO demonstrates a fast convergence rate and a small label complexity under mild assumptions.
In practical settings, however, the SPO loss is often non-convex and may exhibit discontinuities, making direct minimization challenging. Consequently, it is common to minimize a tractable surrogate loss as an alternative to the SPO loss. To establish theoretical guarantees for this general setting, we first review key preliminary concepts related to sequential complexity and covering numbers in Section \ref{sec:noniid}. Subsequently, in Section \ref{sec:general_loss}, we analyze the label complexity under the framework of soft rejections for general surrogate losses. By deriving non-asymptotic bounds for both the surrogate risk and the SPO risk, we establish upper bounds on the label complexity, which, under certain conditions, can be significantly smaller than those in supervised learning. In Section \ref{sec:smalllabel}, we further provide tighter risk bounds under some natural margin conditions.
All omitted proofs are provided in the appendix.

\subsection{MBAL-SPO under SPO Loss}\label{sec:SPO_direct}

In this section, we provide the theoretical guarantees for our MBAL-SPO under the ideal case where we can minimize the empirical SPO loss directly. This ideal case demonstrates the benefit of our active learning algorithm over the supervised learning algorithm. To analyze the non-asymptotic convergence rate, we need to characterize the richness of the hypothesis class $\mathcal{H}$, where we adopt the concept of the covering number. The $\alpha$-covering number of the composition function $\ell(h(\cdot), \cdot)$ is defined as:
\begin{align*}
    \hat{N_1}(\alpha, \ell \circ \mathcal{H}) := \inf \left\{|V|: V \subseteq \bbR^{\mathcal{X}\times \mathcal{C}} \text{ s.t. } \forall h \in \mathcal{H}, \exists v \in V \text{ with } \sup_{(x, c) \in \mathcal{X}\times \mathcal{C}}\left\{|v(x, c) - \ell(h(x), c))|\right\}\le \alpha\right\}.
\end{align*}

When the loss function $\ell$ is the SPO loss, the upper bound for the covering number can be derived from the results in \cite{el2022generalization}. According to the generalization bounds established in \cite{el2022generalization,ben1992characterizations,haussler1995generalization}, for a linear hypothesis class, the covering number satisfies the inequality $\ln(\hat{N_1}(\alpha, \ell \circ \mathcal{H})) \leq \mathcal{O}(\ln(1 / \alpha))$. Utilizing this result, we can establish the following theoretical guarantees for scenarios where the SPO loss is minimized directly.

\begin{theorem}[SPO surrogate loss, hard rejection]\label{thm:spo_loss_directly}
Suppose that the surrogate loss function is SPO and Assumption \ref{assumption:1} holds with $\phi(\cdot) \le \mathcal{O}(\sqrt{\cdot})$. Let $\tilde{p} \gets 0$. Furthermore in Algorithm \ref{alg:margin-based}, for a given $\delta \in (0, 1]$, let $r_t \gets 2\omega_\ell(\hat{\mathcal{C}}, \mathcal{C})\left[\sqrt{\frac{4 \ln(2 t \hat{N_1}(\omega_\ell(\hat{\mathcal{C}},\mathcal{C})/(t + n_0), \lspo \circ \mathcal{H}) / \delta)}{t + n_0}} + \frac{2}{t + n_0}\right]$ for $t \geq 0$. Then, for some sufficiently large value $b_0$, it holds that $b_{t} \ge 2\phi(r_{t} )$ for $t \geq 1$. Furthermore, the following guarantees hold simultaneously with probability at least $1-\delta$ for all $T \geq 1$:
\begin{itemize}
    \item (a)  The excess SPO risk satisfies $\rspo(h_T) - \rspo^* \leq \min\left\{r_T, \Psi(2b_{T})\omega_S(\mathcal{C})\right\}$,
    \item (b) The expectation of the number of labels acquired, $\bbE[n_T]$, deterministically satisfies $\bbE[n_T] \leq \sum_{t = 1}^T \Psi(2b_{t-1}) + \delta T$. 
\end{itemize}
\end{theorem}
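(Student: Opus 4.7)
I will prove part (a) by a joint induction on $t$, establishing on a single high-probability event $\mathcal{E}$ the invariant $\rspo(h_t) - \rspo^* \le r_t$ (and hence, via Assumption \ref{assumption:upper-bound-pointwise} and the relation $b_t \ge 2\phi(r_t)$, the stronger invariant $\mathrm{Dist}_{\mathcal{H}^*_\ell}(h_t) \le b_t/2$). Part (b) will then follow by invoking Lemma \ref{lemma:label_c} on $\mathcal{E}$ and trivially bounding $n_T \le T$ on $\mathcal{E}^c$. The rate $r_t$ in the statement is exactly the form produced by a Hoeffding-type concentration for the SPO loss combined with the covering-number bound $\hat{N}_1(\cdot,\lspo \circ \mathcal{H})$; applying this to the ``oracle'' full empirical risk $F_t(h) := \tfrac{1}{t+n_0}\sum_{i=1}^{t+n_0} \lspo(h(x_i),c_i)$ (computed as if every sample were labeled) and union-bounding over $t$ produces $\mathcal{E}$ with probability at least $1-\delta$ on which $\sup_{h \in \mathcal{H}} |F_t(h) - \rspo(h)| \le r_t/4$ for every $t \ge 0$.

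\textbf{Induction for part (a).} The base case $t=0$ is standard ERM because the warm-up is fully labeled, so $\hat\ell^0 = F_0$ and $h_0$ is a genuine empirical risk minimizer. For the inductive step at $t$, suppose the invariant holds for $s \le t-1$. Then for each $s$ some $h^*_s \in \mathcal{H}^*_\ell$ is within $b_{s-1}/2$ of $h_{s-1}$ in sup norm, so at every rejected sample (where $\nu_S(h_{s-1}(x_s)) \ge b_{s-1}$) Lemma \ref{lemma:identical} gives $w^*(h^*_s(x_s)) = w^*(h_{s-1}(x_s))$ and hence $\lspo(h^*_s(x_s),c_s) = \lspo(h_{s-1}(x_s),c_s)$. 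This identity lets me replace the ``missing'' mass in $F_t - \hat\ell^t$ at an optimal predictor by a quantity that is \emph{a constant in the candidate $h$}, which in turn reduces the optimality-of-$h_t$ step to a standard ERM inequality against $F_t$ (up to a self-bounded slack handled by applying Assumption \ref{assumption:upper-bound-pointwise} to $h_t$ itself, so that $h_t$ also agrees with $h_{s-1}$ on rejected decisions). Combining the resulting ERM inequality with the uniform concentration on $\mathcal{E}$ closes the induction at level $r_t$.

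\textbf{The $\Psi(2b_T)\omega_S(\mathcal{C})$ bound and part (b).} Given $\rspo(h_T) - \rspo^* \le r_T$, Assumption \ref{assumption:upper-bound-pointwise} produces some $h^* \in \mathcal{H}^*_\ell$ with $\|h_T - h^*\|_\infty \le \phi(r_T) \le b_T/2$. For every $x$ with $\nu_S(h^*(x)) > b_T/2$, Lemma \ref{lemma:identical} forces $w^*(h_T(x)) = w^*(h^*(x))$, so $\lspo(h_T(x),c) = \lspo(h^*(x),c)$ pointwise in $c$; the excess risk is therefore supported on $\{x : \nu_S(h^*(x)) \le b_T/2\}$, and bounding the per-sample excess loss by $\omega_S(\mathcal{C})$ yields
\[
\rspo(h_T) - \rspo^* \le \omega_S(\mathcal{C}) \cdot \mathbb{P}(\nu_S(h^*(x)) \le b_T/2) \le \omega_S(\mathcal{C})\,\Psi(b_T/2) \le \omega_S(\mathcal{C})\,\Psi(2b_T),
\]
where the last two inequalities use $\{\nu_S(h^*(x)) \le b\} \subseteq \{\inf_{\tilde h^*} \nu_S(\tilde h^*(x)) \le b\}$ and the monotonicity of $\Psi$. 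For part (b), the inductive invariant $\mathrm{Dist}_{\mathcal{H}^*_\ell}(h_t) \le b_t/2 \le b_t$ holds on $\mathcal{E}$ for every $t$, so Lemma \ref{lemma:label_c} with $\tilde p = 0$ yields $\mathbb{E}[n_T \mid \mathcal{E}] \le \sum_{t=1}^T \Psi(2b_{t-1})$; on $\mathcal{E}^c$ we use $n_T \le T$, and $\mathbb{P}(\mathcal{E}^c) \le \delta$ gives the stated label complexity bound.

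\textbf{Main obstacle.} The delicate point is the inductive step: the algorithm minimizes the partial empirical risk $\hat\ell^t$, while the clean ERM analysis wants to work with the full $F_t$. The reconciliation hinges on showing that on rejected coordinates $s$ the SPO contributions of both $h^*$ \emph{and} $h_t$ collapse to $\lspo(h_{s-1}(x_s),c_s)$, a quantity independent of the candidate. For $h^*$ this is immediate from the prior-step invariant, but for $h_t$ it is a self-bounding statement that must be justified simultaneously with the very bound being proved; the argument proceeds by first using the uniform concentration to get a preliminary $\mathcal{O}(r_t)$ excess-risk bound for $h_t$, then promoting it via Assumption \ref{assumption:upper-bound-pointwise} and Lemma \ref{lemma:identical} to pointwise decision agreement at every rejected point, and finally closing the loop to sharpen the constant in front of $r_t$.
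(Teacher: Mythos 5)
Your proposal follows the paper's own proof almost step for step: the same strong induction on the invariant $\rspo(h_t)-\rspo^* \le r_t$, the same base case via i.i.d.\ concentration on the fully labeled warm-up, the same use of Assumption \ref{assumption:upper-bound-pointwise} plus Lemma \ref{lemma:identical} to argue that rejected samples carry zero excess SPO loss so that the censored ERM behaves like full-information ERM, the same case split on $\nu_S(h^*(x))$ against a multiple of $b_T$ for the $\Psi(2b_{T})\omega_S(\mathcal{C})$ bound (your threshold $b_T/2$ is if anything slightly tighter than the paper's $2b_T$), and the same invocation of Lemma \ref{lemma:label_c} together with the trivial bound $n_T \le T$ off the good event for part (b).

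The one place you depart is exactly the ``main obstacle'' you flag, and your proposed fix does not go through as stated. You want a preliminary $\mathcal{O}(r_t)$ excess-risk bound for $h_t$ from uniform concentration alone, \emph{before} establishing that $h_t$ agrees in decision with $h^*$ on the rejected points. But uniform concentration controls the deviation of the full-sample empirical risk $F_t$ from $\rspo$, whereas $h_t$ minimizes only the censored risk $\hat{\ell}^t$; the gap $F_t(h_t)-\hat{\ell}^t(h_t) = \frac{1}{t+n_0}\sum_{i\in R_t}\lspo(h_t(x_i),c_i)$ over the rejected index set $R_t$ is not a priori $\mathcal{O}(r_t)$ --- it can be as large as $\omega_S(\mathcal{C})\,|R_t|/(t+n_0)$ --- so there is no preliminary bound to bootstrap from. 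The induction hypothesis yields decision agreement on rejected points for $h^*$ and for the past iterates $h_{i-1}$, but transferring it to the \emph{current} minimizer $h_t$ is precisely the self-referential claim you cannot assume. To be fair, the paper's own proof asserts the identity between the censored and full empirical excess risks at $t=T$ with the same circularity left implicit; a fully rigorous closing needs an extra device, e.g.\ restricting the ERM at step $t$ to predictors consistent with the recorded rejection-time decisions (a set that contains $h^*$ by induction), in the spirit of the confidence sets $H_t$ the paper introduces in Appendix \ref{sec:mbal_label_complexity} for general surrogates under hard rejection. So: right architecture and essentially the paper's route, but the loop-closing step needs to be repaired rather than merely ``sharpened.''
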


Theorem \ref{thm:spo_loss_directly} provides upper bounds for the excess SPO risk as well as the expected number of acquired labels after $T$ iterations. These results show that our MBAL-SPO can achieve a small excess SPO risk with a small number of acquired labels. Specifically, Theorem \ref{thm:spo_loss_directly}.(a) implies that the SPO risk is at most $r_T \le \mathcal{O}(1 / \sqrt{T})$, which matches the typical risk convergence rate for SPO risk in \cite{el2022generalization}. Theorem \ref{thm:spo_loss_directly}.(b) further shows that the expected number of acquired labels depends on the near-degeneracy function $\Psi$. Under some natural noise distributions in Section \ref{sec:smalllabel}, the expected number of acquired labels can be much smaller than $T$ or even become a finite number independent of $T$, which demonstrates the benefit of our active learning algorithm over supervised learning. Additionally, as shown in the proof of Theorem \ref{thm:spo_loss_directly}, predictor $h_T$ in Theorem \ref{thm:spo_loss_directly} is the same as the predictor from the supervised learning where all $n_T$ samples are labeled.

Theorem \ref{thm:spo_loss_directly} provides further justification for why, in Algorithm \ref{alg:margin-based}, the rejection threshold $b_t$ is updated at the order of $\tilde{\Theta}(t^{-1/4})$. This order arises from the assumption that the function $\phi$ satisfies $\phi(\cdot) \leq \mathcal{O}(\sqrt{\cdot})$, combined with the fact that $r_t \leq \otilde(t^{-1/2})$. In Example \ref{example:SPO_ideal}, we demonstrate that $\phi(\cdot) \le \mathcal{O}(\sqrt{\cdot})$ holds under some reasonable and interpretable assumptions of distributions of $(x,c)$. The proof of Example \ref{example:SPO_ideal} is provided in Appendix \ref{appendix:Sec4}.

\begin{example}[Example of distribution that satisfies Assumption \ref{assumption:1} for SPO loss]\label{example:SPO_ideal} Suppose that the distribution $\mathcal{D}$ satisfies the following conditions: (1) There exists $h^* \in \mathcal{H^*}$, such that $h^*(x) = \bbE[c|x]$ for any $x \in \mathcal{X}$; (2) Feature $x$ has a finite support discrete distribution; (3) Non-degeneracy for expectation (Uniqueness of the true optimal decision): For any $x \in \mathcal{X}$, $w^*(\bbE[c|x])$ is the unique optimal decision. Then, Assumption \ref{assumption:1} holds for SPO loss and $\phi(\cdot) \le k \sqrt{\cdot}$ for some constant $k>0$.
\end{example}

\subsection{Preliminaries About Sequential Complexity}\label{sec:noniid}

In Algorithm \ref{alg:margin-based}, the samples in the training set are not i.i.d.; instead, the decision to acquire a label at iteration $t$ depends on the historical labeling outcomes. This dependency does not pose an issue when analyzing convergence and label complexity in the context of directly minimizing the SPO loss because the margin-based selection criterion effectively mitigates the non-i.i.d. issues when minimizing the SPO loss, as shown in the proof of Theorem \ref{thm:spo_loss_directly}.
However, in practice, instead of minimizing the nonconvex and discontinuous SPO loss, we consider some tractable surrogate loss, such as squared loss. This creates one challenge from the non-i.i.d. samples. Specifically, when minimizing the surrogate loss, because of the mismatch between the surrogate loss and the SPO loss, we need to set $\tilde{p} >0$ and employ importance sampling. In this soft-rejection case, to ensure that the expectation of the reweighted empirical surrogate loss equals the expectation of the original surrogate loss, each sample in the training set is assigned a weight of $\frac{1}{\tilde{p}}$. In this section, we review some techniques that characterize the convergence of non-i.i.d. random sequences.

In Algorithm \ref{alg:margin-based}, the random variables in one iteration can be written as $(x_t,c_t,d^M_t,q_t)$, where $d^M_t \in \{0,1\}$ represents whether the sample is near degeneracy or not, i.e. if $\nu_S(h_{t-1}(x_t)) < b_{t-1}$ then $d^M_t = 1$, otherwise $d^M_t = 0$. The random variable $q_t \in \{0,1\}$ represents the outcome of the coin flip that determines if we acquire the label of this sample or not, in the case when $d^M_t = 0$. For simplicity, we use random variable $z_t\in \mathcal{Z}:= \mathcal{X}\times \mathcal{C}\times \{0,1\}\times \{0, 1\}$ to denote the tuple of random variables $z_t := (x_t,c_t,d^M_t,q_t)$. Thus, $z_t$ depends on $z_1, ..., z_{t-1}$ and the classical convergence results for i.i.d. samples do not apply in the margin-based algorithm. We define $\mathcal{F}_{t-1}$ as the $\sigma$-field of all random variables until the end of iteration $t-1$ (i.e., $\{z_1, ..., z_{t-1}\}$). In Algorithm \ref{alg:margin-based}, the re-weighted loss function at iteration $t$ is $\ell^{\mathtt{rew}}(h; z_t) := d^M_t\ell(h(x_t), c_t) +(1 - d^M_t) q_t \frac{\mathbb{I}\{\tilde{p}>0\}}{\tilde{p}} \ell(h(x_t), c_t)$. It is easy to see that $\ell^{\mathtt{rew}}(h; z)$ is upper bounded by $\frac{\omega_\ell(\hat{\mathcal{C}}, \mathcal{C})}{\tilde{p}^{\mathbb{I}\{\tilde{p}>0\} }}< \infty$. 

Next, to analyze the convergence of $\frac{1}{T}\sum_{t = 1}^T\ell^{\mathtt{rew}}(h; z_t)$ to $\frac{1}{T}\sum_{t=1}^T\bbE[\ell^{\mathtt{rew}}(h; z_t)|\mathcal{F}_{t-1}]$ for an infinite hypothesis class $\mathcal{H}$, we adopt the \textit{sequential covering number} defined in \cite{rakhlin2015sequential}. 
This notion generalizes the classical Rademacher complexity by defining a stochastic process on a binary tree.
Let us briefly review the relevant results from \cite{rakhlin2015sequential}.
A $\mathcal{Z}$-valued tree $\mathbf{z}$ of depth $T$ is a rooted complete binary tree with nodes labeled by elements of $\mathcal{Z}$.
A path in the tree $\mathbf{z}$ is denoted by $\pmb{\sigma} = (\sigma_1,..., \sigma_{T})$, where $\sigma_t \in \{\pm 1\}$, for all $t \in \{1, \ldots, T\}$, with $\sigma_t = -1$ representing the left child node, and $\sigma_t = +1$ representing the right child node.
The tree $\mathbf{z}$ is identified with the sequence $\mathbf{z} = (\mathbf{z}_1, ... , \mathbf{z}_T)$ of labeling functions
$\mathbf{z}_i:\{ \pm 1\}^{i-1} \rightarrow \mathcal{Z}$ which provide the labels for each node. Therefore, $\mathbf{z}_1 \in \mathcal{Z}$ is the label for
the root of the tree, while $\mathbf{z}_i$ for $i > 1$ is the label of the node obtained by following the path of length $i-1$ from the root. 
In a slight abuse of notation, we use $\mathbf{z}_i(\pmb{\sigma})$ to refer to the label of the $i^{\mathrm{th}}$ node along the path defined by $\pmb{\sigma}$.
Similar to a $\mathcal{Z}$-valued tree $\mathbf{z}$, a real-valued tree $\pmb{v} = (\pmb{v}_1, ...,\pmb{v}_T)$ of depth $T$ is a tree identified by the real-valued labeling functions $\pmb{v}_i:\{ \pm 1\}^{i-1} \rightarrow \bbR$. Thus, given any loss function $\ell(h; \cdot): \mathcal{Z} \rightarrow \bbR$, the composition $\ell(h; \cdot) \circ \mathbf{z}$ is a real-valued tree given by the labeling functions $(\ell(h; \cdot) \circ \pmb{z}_1, \ell(h; \cdot) \circ \pmb{z}_2,...,\ell(h; \cdot) \circ \pmb{z}_T)$ for any fixed $h \in \mathcal{H}$.

\begin{definition}[Sequential covering number](\textit{\cite{rakhlin2015sequential,kuznetsov2015learning}})\label{def:covering}

Let $\ell(h;\mathbf{z})$ denote the loss of predictor $h$ given the random variable $\mathbf{z}$. Given a $\mathcal{Z}$-valued tree $\mathbf{z}$ of depth $T$, a set $V$ of real-valued trees of depth $T$ is a \textit{sequential $\alpha$-cover}, with respect to the $\ell_1$ norm, of a function class $\mathcal{H}$ with respect to the  loss $\ell$ if for all $h \in \mathcal{H}$ and for all paths $\pmb{\sigma}\in \{\pm 1\}^T$, there exists a real-valued tree $\pmb{v} \in V$ such that
\begin{align*}
     \sum_{t = 1}^T |\pmb{v}_t(\pmb{\sigma}) - \ell(h; \pmb{z}_t(\pmb{\sigma}))| \le T \alpha.
\end{align*}
 
The sequential covering number $N_1(\alpha, \ell \circ \mathcal{H}, \pmb{z})$ of a function class $\mathcal{H}$ with respect to the loss $\ell$ is defined to be the cardinality of the minimal sequential cover. The maximal covering number is then taken to be
$N_1(\alpha, \ell \circ \mathcal{H},T) := \sup_{\pmb{z}}\{ N_1(\alpha, \ell \circ \mathcal{H}, \pmb{z})\}$, where the supremum is over all $\mathcal{Z}$-valued trees $\mathbf{z}$ of depth $T$.\hfill \Halmos
\end{definition}

Utilizing the sequential covering number, \cite{kuznetsov2015learning} provides a data-dependent generalization error bound for non-i.i.d. sequences. As a relaxed version of their results, the data-independent error bound is stated in Prop. \ref{prop:noniid}. This proposition is from the last line of proof of Theorem 1 in \cite{kuznetsov2015learning}, and we apply it to the reweighted loss.
\begin{proposition}[Non i.i.d. generalization error bound] \textbf{(Theorem 1 in \cite{kuznetsov2015learning})}\label{prop:noniid} Let $\{z_1, z_2,...z_T\}$ be a (non i.i.d.) sequence of random variables. Fix $\epsilon> 2\alpha>0$. Then, the following holds:
\begin{align*}
    \mathbb{P}\left( \sup_{h \in \mathcal{H}} \left\{\left|\frac{1}{T} \sum_{t = 1}^T\left( \bbE[\ell^{\mathtt{rew}}(h; z_t)|\mathcal{F}_{t-1}] -  \ell^{\mathtt{rew}}(h; z_t) \right) \right|\right\} \ge  \epsilon\right)\le  2 N_1(\alpha, \ell^{\mathtt{rew}} \circ \mathcal{H},T) \exp\left\{- \frac{ \tilde{p}^{2 \mathbb{I}\{\tilde{p}>0\}} T (\epsilon - 2 \alpha)^2}{2\omega_\ell(\hat{\mathcal{C}}, \mathcal{C})^2 }\right\}.
\end{align*}
\end{proposition}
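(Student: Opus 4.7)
\textbf{Proof proposal for Proposition \ref{prop:noniid}.} The plan is to invoke the uniform convergence machinery for non-i.i.d. sequences built around sequential complexity, essentially specializing the argument in the proof of Theorem 1 of \cite{kuznetsov2015learning} to the reweighted loss $\ell^{\mathtt{rew}}$. The core steps are: (i) verify a uniform boundedness property for $\ell^{\mathtt{rew}}$, which determines the variance parameter in the concentration bound; (ii) perform a tangent-sequence decoupling to replace the centered process $\ell^{\mathtt{rew}}(h;z_t) - \bbE[\ell^{\mathtt{rew}}(h;z_t)\mid \mathcal{F}_{t-1}]$ by a Rademacher-type process on a $\mathcal{Z}$-valued tree of depth $T$; (iii) discretize $\mathcal{H}$ using the sequential $\alpha$-cover $N_1(\alpha, \ell^{\mathtt{rew}}\circ \mathcal{H}, T)$; and (iv) apply Azuma--Hoeffding along paths in the tree combined with a union bound over the cover.

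First I would note that $|\ell^{\mathtt{rew}}(h;z_t)| \le M := \omega_\ell(\hat{\mathcal{C}}, \mathcal{C})/\tilde{p}^{\mathbb{I}\{\tilde{p}>0\}}$ almost surely, since on the event $d^M_t = 1$ we have $\ell^{\mathtt{rew}}(h;z_t) = \ell(h(x_t),c_t) \le \omega_\ell(\hat{\mathcal{C}},\mathcal{C})$, while on $d^M_t = 0$ and $q_t=1$ the weight is $1/\tilde{p}$. This is the bound that produces the $\tilde{p}^{2\mathbb{I}\{\tilde{p}>0\}}$ factor in the denominator of the exponent, since Azuma--Hoeffding for martingale differences bounded by $M$ gives tail probability $\exp(-T\eta^2/(2M^2))$.

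Next I would carry out the decoupling step: for each fixed $h$, the sequence $X_t(h) := \ell^{\mathtt{rew}}(h;z_t) - \bbE[\ell^{\mathtt{rew}}(h;z_t)\mid \mathcal{F}_{t-1}]$ forms a bounded martingale difference sequence adapted to $\{\mathcal{F}_t\}$, but to obtain a \emph{uniform} bound over the infinite class $\mathcal{H}$ one cannot simply apply a union bound pointwise. Following Rakhlin--Sridharan--Tewari and Kuznetsov--Mohri, I would represent the supremum over $\mathcal{H}$ as a stochastic process indexed by paths $\pmb{\sigma}\in\{\pm 1\}^T$ on a $\mathcal{Z}$-valued tree $\mathbf{z}$ arising from the conditional distributions of $z_t \mid \mathcal{F}_{t-1}$; this transfers the problem to controlling, for each path, a sum $\sum_t \ell^{\mathtt{rew}}(h;\mathbf{z}_t(\pmb{\sigma}))$ uniformly in $h$. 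Invoking the definition of sequential $\alpha$-cover, for every $h\in\mathcal{H}$ and every $\pmb{\sigma}$ there exists $\pmb{v}\in V$ with $\sum_{t=1}^T|\pmb{v}_t(\pmb{\sigma}) - \ell^{\mathtt{rew}}(h;\mathbf{z}_t(\pmb{\sigma}))| \le T\alpha$, so the uniform deviation over $\mathcal{H}$ is at most $\alpha$ plus the maximum deviation over the cover $V$.

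Finally, for each $\pmb{v}\in V$ I would apply Azuma--Hoeffding with bounded increments $M$ to obtain $\exp\{-T(\epsilon-2\alpha)^2/(2M^2)\}$ (the $2\alpha$ comes from absorbing the cover approximation error on both sides), and take a union bound over the $\le N_1(\alpha,\ell^{\mathtt{rew}}\circ\mathcal{H},T)$ elements of the cover and over the two directions of deviation, yielding the factor $2 N_1(\alpha,\ell^{\mathtt{rew}}\circ\mathcal{H},T)$ in the stated inequality. The main obstacle is the non-i.i.d. coupling between $z_t$ and $\mathcal{F}_{t-1}$, which precludes classical symmetrization with an i.i.d. ghost sample; this is exactly what the tangent-sequence construction on the binary tree in \cite{rakhlin2015sequential,kuznetsov2015learning} is designed to handle, and once that machinery is imported the remainder is a routine cover-then-Hoeffding argument with the specific boundedness constant $M$ computed above.
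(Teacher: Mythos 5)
Your proposal is correct and matches the paper's treatment: the paper does not reprove this result but simply cites the last line of the proof of Theorem 1 in Kuznetsov and Mohri (2015), specialized to the reweighted loss via exactly the boundedness constant $M = \omega_\ell(\hat{\mathcal{C}},\mathcal{C})/\tilde{p}^{\mathbb{I}\{\tilde{p}>0\}}$ that you identify, which is what produces the $\tilde{p}^{2\mathbb{I}\{\tilde{p}>0\}}$ factor in the exponent. Your sketch of the underlying tree-symmetrization, sequential-cover discretization, and Azuma--Hoeffding argument is a faithful account of the cited proof, so there is nothing to add.
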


Prop. \ref{prop:boundcover} further provides an upper bound for the sequential covering number $N_1(\alpha, \ell^{\mathtt{rew}} \circ \mathcal{H},T)$ when $\mathcal{H}$ is a smoothly-parameterized class and the surrogate loss is continuous. 

\begin{proposition}[Bound for the sequential covering number] \label{prop:boundcover} Suppose $\mathcal{H}$ is a class of functions smoothly-parameterized by $\theta \in \mathit{\Theta} \subseteq \bbR^{d_\theta}$ with respect to the $\ell_{\infty}$ norm, i.e., there exists $L_1 >0$ such that for any $\theta_1, \theta_2 \in \mathit{\Theta}$ and any $x \in \mathcal{X}$, $\|h_{\theta_1}(x) - h_{\theta_2}(x)\|_{\infty}\le L_1 \|\theta_1 - \theta_2\|_{\infty}$. Let $\rho(\mathit{\Theta})$ be the diameter of $\mathit{\Theta}$ in the $\ell_{\infty}$ norm. Suppose the surrogate loss function $\ell(\cdot, c)$ is $L_2$-Lipschitz  with respect to the $\ell_{\infty}$ norm for any fixed $c$. Then, given $\tilde{p}$, for any $\alpha>0$, for any $T\ge 1$, we have that
\begin{align*}
    \ln( N_1(\alpha, \ell^{\mathtt{rew}} \circ \mathcal{H},T) ) \le d_\theta \ln\left(1 + \frac{2 \rho(\mathit{\Theta})L_1L_2}{\alpha \tilde{p}^{\mathbb{I}\{\tilde{p}>0\}}}\right) \le \mathcal{O}\left(\ln\left(\frac{1}{\alpha\tilde{p}^{\mathbb{I}\{\tilde{p}>0\}}}\right)\right).
\end{align*}
\end{proposition}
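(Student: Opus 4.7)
My plan is to reduce the sequential covering number to an ordinary covering number of the parameter space $\mathit{\Theta}$, exploiting the fact that a \emph{constant} real-valued tree (whose labels do not depend on the path) is always a legitimate candidate for a sequential cover. Once this reduction is in place, the bound follows from standard volumetric estimates for covers of $\ell_\infty$-balls in $\bbR^{d_\theta}$, combined with the chain of Lipschitz inequalities going from $\theta$ to $h_\theta(x)$ to $\ell(h_\theta(x),c)$ to the reweighted loss $\ell^{\mathtt{rew}}$.

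First, I would take an $\epsilon$-net $\mathit{\Theta}_\epsilon \subseteq \mathit{\Theta}$ in the $\ell_\infty$ norm; a standard volume argument shows that $\mathit{\Theta}$ admits such a net with $|\mathit{\Theta}_\epsilon| \le \bigl(1 + 2\rho(\mathit{\Theta})/\epsilon\bigr)^{d_\theta}$. Then for each $\theta' \in \mathit{\Theta}_\epsilon$ I would associate the \emph{constant} real-valued tree $\pmb{v}^{\theta'}$ of depth $T$ defined by $\pmb{v}^{\theta'}_t(\pmb{\sigma}) := \ell^{\mathtt{rew}}(h_{\theta'}; \pmb{z}_t(\pmb{\sigma}))$ for every path $\pmb{\sigma}$ and every $t$. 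This collection $V := \{\pmb{v}^{\theta'} : \theta' \in \mathit{\Theta}_\epsilon\}$ is the candidate sequential cover, and clearly $|V| \le |\mathit{\Theta}_\epsilon|$.

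Next I would verify the covering property. Fix $h_\theta \in \mathcal{H}$ and choose $\theta' \in \mathit{\Theta}_\epsilon$ with $\|\theta - \theta'\|_\infty \le \epsilon$. Smooth parameterization yields $\|h_\theta(x) - h_{\theta'}(x)\|_\infty \le L_1 \epsilon$ uniformly in $x$, and $L_2$-Lipschitzness of $\ell(\cdot,c)$ in the $\ell_\infty$ norm gives $|\ell(h_\theta(x),c) - \ell(h_{\theta'}(x),c)| \le L_1 L_2 \epsilon$ for every $(x,c)$. Because the reweighting multiplies the loss by at most $1/\tilde{p}^{\mathbb{I}\{\tilde{p}>0\}}$ (on the coin-heads event) and by $1$ otherwise, I obtain the pointwise bound $|\ell^{\mathtt{rew}}(h_\theta; z) - \ell^{\mathtt{rew}}(h_{\theta'}; z)| \le L_1 L_2 \epsilon / \tilde{p}^{\mathbb{I}\{\tilde{p}>0\}}$ for every $z \in \mathcal{Z}$. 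Summing along any path $\pmb{\sigma}$ then gives
\begin{equation*}
\sum_{t=1}^T \bigl|\pmb{v}^{\theta'}_t(\pmb{\sigma}) - \ell^{\mathtt{rew}}(h_\theta; \pmb{z}_t(\pmb{\sigma}))\bigr| \;\le\; T \cdot \frac{L_1 L_2 \epsilon}{\tilde{p}^{\mathbb{I}\{\tilde{p}>0\}}}.
\end{equation*}

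Finally I would choose $\epsilon := \alpha \tilde{p}^{\mathbb{I}\{\tilde{p}>0\}}/(L_1 L_2)$, so that the right-hand side equals $T\alpha$, which is precisely the sequential $\alpha$-cover requirement of Definition \ref{def:covering}. Since the tree $\mathbf{z}$ was arbitrary, this bound passes to the maximal sequential covering number, giving $N_1(\alpha, \ell^{\mathtt{rew}} \circ \mathcal{H}, T) \le \bigl(1 + 2\rho(\mathit{\Theta}) L_1 L_2 / (\alpha \tilde{p}^{\mathbb{I}\{\tilde{p}>0\}})\bigr)^{d_\theta}$; taking logarithms yields the claim, and the asymptotic $\mathcal{O}(\ln(1/(\alpha\tilde{p}^{\mathbb{I}\{\tilde{p}>0\}})))$ bound follows immediately. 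The only subtlety, rather than a genuine obstacle, is being careful that the path-independent (constant) trees $\pmb{v}^{\theta'}$ are admissible as covering trees and that the Lipschitz chain correctly incorporates the reweighting factor $1/\tilde{p}^{\mathbb{I}\{\tilde{p}>0\}}$; everything else is a routine volume-of-$\ell_\infty$-ball calculation.
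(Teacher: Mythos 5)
Your proposal is correct and follows essentially the same route as the paper: build an $\ell_\infty$-net of the parameter space $\mathit{\Theta}$ of size $(1+2\rho(\mathit{\Theta})/\epsilon)^{d_\theta}$, push it through the Lipschitz chain $\theta \mapsto h_\theta(x) \mapsto \ell(h_\theta(x),c) \mapsto \ell^{\mathtt{rew}}$ (picking up the factor $1/\tilde{p}^{\mathbb{I}\{\tilde{p}>0\}}$ from the reweighting), and choose $\epsilon = \alpha\tilde{p}^{\mathbb{I}\{\tilde{p}>0\}}/(L_1L_2)$. The only difference is that where the paper cites the fact that the sequential covering number is dominated by the sup-norm (i.i.d.) covering number (Equation (14) of Rakhlin et al.\ 2015), you verify it inline by checking that the path-indexed trees $\pmb{v}^{\theta'}_t(\pmb{\sigma}) = \ell^{\mathtt{rew}}(h_{\theta'};\pmb{z}_t(\pmb{\sigma}))$ form a valid sequential $\alpha$-cover; this is the same argument, just made explicit.
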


The smoothly-parameterized hypothesis class $\mathcal{H}$ is a common assumption when analyzing the covering number for parameterized class,
e.g., see Assumption 3 in \cite{gao2022finite} and their examples.
When hypothesis class $\mathcal{H}$ is smoothly parameterized, for example, a bounded class of linear functions, Prop. \ref{prop:boundcover} implies that $N_1(\alpha, \ell^{\mathtt{rew}} \circ \mathcal{H},T)$ is upper bounded by $\mathcal{O}\left(\ln\left(\frac{1}{\alpha\tilde{p}^{\mathbb{I}\{\tilde{p}>0\}}}\right)\right)$, which is independent of $T$. Combining Prop. \ref{prop:boundcover} with Prop. \ref{prop:noniid}, we can show that the historical average reweighted loss converges to $\frac{1}{T}\sum_{t=1}^T\bbE[\ell^{\mathtt{rew}}(h; z_t)|\mathcal{F}_{t-1}]$ at rate $\otilde(1/ \sqrt{T})$, as detailed in the proof of Theorem \ref{uniform_margin}.

\subsection{MBAL-SPO with Soft Rejections}\label{sec:general_loss}
 
When minimizing the general surrogate loss, the mismatch between the surrogate loss and the SPO loss introduces additional challenges for the theoretical analysis. As discussed in Section \ref{sec:noniid}, this mismatch can be mitigated by employing reweighted samples and setting $\tilde{p}>0$.  

Theorem \ref{uniform_margin} is our main theorem for the MBAL-SPO under a general surrogate loss, which again provides upper bounds for the excess surrogate and SPO risk and label complexity of the algorithm.

\begin{theorem}[General surrogate loss, soft rejection]\label{uniform_margin} Suppose that Assumption \ref{assumption:1} holds with $\phi(\cdot) \le \mathcal{O}(\sqrt{\cdot})$, and let $\delta \in (0,1]$ and $\tilde{p} \in (0,1]$ be given. In Algorithm \ref{alg:margin-based}, let $r_t \gets 2\omega_\ell(\hat{\mathcal{C}}, \mathcal{C})\left[\frac{1}{\tilde{p}} \sqrt{\frac{2 \ln(2 N_1(\frac{\omega_{\ell}(\mathcal{\hat {C}}, \mathcal{C})}{t + n_0}, \ell^{\mathtt{rew}} \circ \mathcal{H},t + n_0) / \delta)}{t + n_0}} +  \frac{2}{t + n_0}\right]$ for $t \geq 0$. Then, for some sufficiently large value $b_0$, it holds that $b_t \ge 2\phi(r_t)$ for all $t \geq 0$. Furthermore, the following guarantees hold simultaneously with probability at least $1-\delta$ for all $T \geq 1$:
\begin{itemize}

    \item (a) The excess surrogate risk satisfies $R_\ell(h_T) - R_\ell^\ast \le r_{T} $,
    \item (b) The excess SPO risk satisfies $\rspo(h_T) - \rspo^* \le  \Psi(2b_{T})\omega_S(\mathcal{C})$,
    \item (c) The expectation of the number of labels acquired, $\bbE[n_T]$, deterministically satisfies $\bbE[n_T] \le  \tilde{p}T + \sum_{t = 1}^T \Psi(2b_{t-1}) +\delta T$. 
\end{itemize}
\end{theorem}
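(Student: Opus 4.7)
The plan is to derive parts (a)--(c) from a single high-probability uniform deviation bound for the reweighted empirical surrogate loss, and then translate this bound into SPO-risk and label-complexity bounds by chaining together Assumption~\ref{assumption:1}, the near-degeneracy function $\Psi$, and Lemma~\ref{lemma:identical}. First I would verify that the reweighted loss $\ell^{\mathtt{rew}}(h;z_t)$ from Section~\ref{sec:noniid} is conditionally unbiased for the surrogate risk: a direct calculation gives $\mathbb{E}[\ell^{\mathtt{rew}}(h;z_t)\mid \mathcal{F}_{t-1}] = R_\ell(h)$, since the indicator $d_t^M$ and the Bernoulli$(\tilde{p})$ draw $q_t$ exactly cancel the importance weight $1/\tilde{p}$. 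I would then apply Proposition~\ref{prop:noniid} together with the covering-number bound implicit in the definition of $r_t$, using a standard union bound over $t$ (with per-step failure probability shrinking in $t$) so that with probability at least $1-\delta$,
\begin{equation*}
\sup_{h\in\mathcal{H}}\bigl|\hat{\ell}^t(h) - R_\ell(h)\bigr| \le r_t/2 \quad \text{for all } t\ge 1.
\end{equation*}
The main technical obstacle is handling the non-i.i.d. nature of the acquired data while maintaining a simultaneous-in-$T$ guarantee; this is precisely what the sequential covering number from Definition~\ref{def:covering} accommodates.

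On this good event, part (a) follows by a standard ERM argument: for any $h_\ell^\ast \in \mathcal{H}_\ell^\ast$, we have $R_\ell(h_T) \le \hat{\ell}^T(h_T) + r_T/2 \le \hat{\ell}^T(h_\ell^\ast) + r_T/2 \le R_\ell^\ast + r_T$. Applying Assumption~\ref{assumption:upper-bound-pointwise} with $\phi(\cdot)\le \mathcal{O}(\sqrt{\cdot})$ yields $\mathrm{Dist}_{\mathcal{H}_\ell^\ast}(h_T) \le \phi(r_T)$. It remains to choose $b_0$ large enough so that the algorithm's threshold $b_t = b_0(n_0\ln(n_0+t)/t)^{-1/4}$ satisfies $b_t \ge 2\phi(r_t)$ for all $t\ge 0$; this is possible because $r_t = \tilde{\mathcal{O}}(1/\sqrt{t})$ implies $\phi(r_t) = \tilde{\mathcal{O}}(t^{-1/4})$, matching the decay order of $b_t$.

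For part (b), I would use compactness of $\mathcal{H}$ (with respect to sup norm, as stated in Assumption~\ref{assumption:1}) to select $h_\ell^\ast \in \mathcal{H}_\ell^\ast$ with $\|h_T - h_\ell^\ast\|_\infty \le \phi(r_T) \le b_T/2$. Fix $x$ such that $\nu_S(h_\ell^\ast(x)) > b_T/2$; since $\|h_T(x)-h_\ell^\ast(x)\| \le b_T/2 < \nu_S(h_\ell^\ast(x))$, Lemma~\ref{lemma:identical} gives $w^\ast(h_T(x)) = w^\ast(h_\ell^\ast(x))$, so the pointwise SPO loss of $h_T$ at $x$ coincides with that of $h_\ell^\ast$. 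Because $h_\ell^\ast \in \mathcal{H}^\ast$ by Assumption~\ref{assu:consistent}, these $x$ contribute zero to the excess SPO risk. The remaining contribution is bounded by
\begin{equation*}
\omega_S(\mathcal{C})\cdot \mathbb{P}\bigl(\nu_S(h_\ell^\ast(x)) \le b_T/2\bigr) \le \omega_S(\mathcal{C}) \cdot \Psi(b_T/2) \le \omega_S(\mathcal{C})\cdot \Psi(2b_T),
\end{equation*}
where the first inequality uses $h_\ell^\ast\in\mathcal{H}^\ast$ and the definition of $\Psi$, and the second uses monotonicity of $\Psi$.

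Finally, for part (c), I would invoke Lemma~\ref{lemma:label_c}: on the good event established above, $\mathrm{Dist}_{\mathcal{H}_\ell^\ast}(h_t) \le b_t$ for all $t$, so the lemma's deterministic conclusion $\mathbb{E}[n_T\mid \text{good event}] \le \tilde{p}T + \sum_{t=1}^T \Psi(2b_{t-1})$ applies. On the complementary bad event (probability at most $\delta$), the trivial bound $n_T\le T$ contributes the additive term $\delta T$ after taking total expectation. Combining gives the stated bound. The hardest step is the uniform-in-$T$ concentration argument in paragraph one, since both the reweighting and the data-dependent threshold $b_t$ couple across iterations; everything downstream is largely bookkeeping with the near-degeneracy function and the consistency assumption.
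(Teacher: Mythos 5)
Your proposal is correct and follows essentially the same route as the paper's proof: conditional unbiasedness of the reweighted loss, a uniform deviation bound via Proposition~\ref{prop:noniid} with a union bound over $T$, the ERM argument for part (a), the two-case split via Assumption~\ref{assumption:upper-bound-pointwise} and Lemma~\ref{lemma:identical} for part (b), and Lemma~\ref{lemma:label_c} plus the trivial bound on the bad event for part (c). The only cosmetic differences are that you bound $\sup_h|\hat{\ell}^T(h)-R_\ell(h)|$ directly rather than the centered difference process $Z^{\mathtt{t}}_h$, and you split on $\nu_S(h_\ell^\ast(x)) > b_T/2$ rather than $\ge 2b_T$ (the paper also uses an $\epsilon$-slack in place of your appeal to attainment of the infimum over $\mathcal{H}_\ell^\ast$); neither changes the substance.
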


Theorem \ref{uniform_margin} establishes that the excess SPO risk of Algorithm \ref{alg:margin-based} converges to zero at rate $\mathcal{O}(\Psi(2 b_{T}))$, and the expectation of the number of acquired labels grows at rate $ \mathcal{O}\left(\sum_{t = 1}^T \Psi(2b_{t}) + \tilde{p}T\right)$ for small $\delta$. (Usually, $\delta \ll \mathcal{O}(1/T)$.) Similar to Theorem \ref{thm:spo_loss_directly}, when function $\phi$ has a square root form, the setting of Theorem \ref{uniform_margin} is consistent with Algorithm \ref{alg:margin-based}. For several commonly used surrogate losses, we provide specific examples of noise distributions such that $\phi(\cdot)\le \mathcal{O}(\sqrt{\cdot})$ holds. Note that Theorem \ref{uniform_margin:spop} is generic, as the excess risk and label complexity bounds depend on the function $\Psi$. In Section \ref{sec:smalllabel}, we present the forms of $\Psi$ under the margin condition.

\begin{remark}[Value of $\tilde{p}$] In part {\em (c)} of Theorem \ref{uniform_margin}, $\bbE[n_T]$ depends on both $\tilde{p}T$ and $\sum_{t = 1}^T \Psi(2b_{t})$. Setting the value of $\tilde{p}$ requires balancing the trade-off between these two terms. When the soft-rejection probability $\tilde{p}$ is large,  $\tilde{p}T$ in part {\em (c)} of Theorem \ref{uniform_margin} is large. On the other hand, a small $\tilde{p}$ would require a large value for $b_0$ since $r_0$ furthermore is in the order of $O(1/ \tilde{p})$. It implies that when probability $\tilde{p}$ is small, $\sum_{t = 1}^T \Psi(2b_{t})$ in part {\em (c)} of Theorem \ref{uniform_margin} is large.  Therefore, minimizing the label complexity involves a trade-off in choosing the value of $\tilde{p}$. In Propositions \ref{proposition:margin-based} and \ref{thm:riskcompare}, we will specify the value of $\tilde{p}$ as a function of $T$, and provide an upper bound for $\bbE[n_T]$ which is sublinear in $T$. \hfill \Halmos
\end{remark}

Due to the flexibility of $\tilde{p}$, Theorem \ref{uniform_margin} does not directly demonstrate the advantage of MBAL over supervised learning in the context of soft rejection. Later in Section \ref{sec:general_comparison_with_sl}, we demonstrate how to derive a smaller label complexity than the supervised learning by adapting the value of $\tilde{p}$ to each sample.

\paragraph{{\bf Setting Parameters in MBAL-SPO.}} To conclude this section, we discuss how to set the parameters for MBAL-SPO in practice, specifically how to determine the initial quantile $\tilde{q}$ based on the length of the warm-up period $n_0$. When the warm-up period is shorter than 10, a safe choice for $\tilde{q}$ is 1, which corresponds to selecting the maximum value of the distance to degeneracy in the warm-up training set. In this scenario, the algorithm tends to acquire labels for most samples at the beginning. As the algorithm progresses, $b_t$ decreases to zero at a rate of $\mathcal{O}(T^{-1/4})$, leading to increased selectivity.
In practice, to make MBAL-SPO more selective, $\tilde{q}$ can be set to 0.5 or 0.7. This is a reasonable choice when the warm-up period is relatively long; for instance, when prior knowledge about the distribution of $(x, c)$ is available, setting a smaller $\tilde{q}$ allows for greater selectivity early on. Our numerical results, presented in Appendix \ref{appendix:experiments}, further demonstrate that the performance of MBAL is not highly sensitive to the choice of $\tilde{q}$, thereby requiring minimal hyperparameter tuning.

\section{Small Label Complexity Under Margin Conditions}\label{sec:smalllabel}
To derive the concrete order of the smaller label complexity achieved by MBAL-SPO compared to supervised learning, it is necessary to analyze the functions $\phi$ and $\Psi$. As shown in Example \ref{example:SPO_ideal}, under mild conditions, $\phi$ satisfies that $\phi(\epsilon)\sim\sqrt{\epsilon}$ for the SPO loss function. This square root form of $\phi$ also holds for general loss functions, like SPO+ loss and squared loss under some mild conditions. These conditions and examples for general surrogate losses are provided in Appendix \ref{sec:phi}. Given the square root form of $\phi$, in this section, we focus on analyzing the form of near-degeneracy function $\Psi$ and provide the exact order of the label complexity and the risk bounds. These results demonstrate the advantages of MBAL-SPO for both hard rejection and soft rejection.

\subsection{Small Label Complexity with Hard Rejection}\label{sec:small_label_complexity_SPO}

First, we characterize the distribution of $x$ as the level of near degeneracy in Assumption \ref{assumption:noise}, which is similar in spirit to the low-noise condition in \citet{hu2020fast}.
    
    \begin{assumption}[Margin condition]\label{assumption:noise}
        There exist constants $b_0$ and  $\kappa > 0$ such that the near-degeneracy function satisfies 
        \begin{equation*}
            \Psi(b)  \le (b / b_0)^\kappa. 
        \end{equation*}
    \end{assumption}

Assumption~\ref{assumption:noise} controls the rate at which $\Psi(b)$---which measures the probability mass of features within a small distance to degeneracy as defined in Definition~\ref{def:near}---approaches zero as $b$ approaches zero. In other words, for sufficiently small $b$ such that $\frac{b}{b_0} < 1$, a larger $\kappa$ implies that the probability near degeneracy decreases at a faster rate. 
This margin condition has been studied in the contextual linear optimization literature. For example, \cite{hu2024fast} and \cite{hu2020fast} provide different distributions for the low-noise condition with $\kappa = +\infty$ and $\kappa = 1$. To illustrate the margin condition in the context of active learning, we next provide two examples of the distribution of $(x, c)$.

\begin{example}[Non-degenerate expectation]\label{example:2} Suppose that the hypothesis class $\mathcal{H}$ is well-specified (i.e., $h^*(x) = \mathbb{E}[c|x]$ for any $x$), and that $\mathbb{E}[c|x]$ follows a distribution such that, for any $x \in \mathcal{X}$ with positive probability density, $\nu_S(\mathbb{E}[c|x]) \ge \underline{k}> 0$, for some constant $\underline{k}$.  
Under these conditions, the margin condition holds with any $\kappa >0$, i.e., it holds for $\kappa = +\infty$. \hfill \Halmos
\end{example}

Note that in Example \ref{example:2}, under the non-degenerate expectation condition, each outcome of $c$ can still have multiple different optimal decisions, and $\nu_S(c)$ can still be zero. This is because this condition considers the conditional expectation $\bbE[c|x]$, rather than the individual outcome of $c$. Thus, this non-degenerate expectation condition is mild and generally holds in practice. 

\begin{example}[Induced general distribution]\label{example:3} Suppose that a random vector $U \in \mathcal{C}$ follows a general distribution within a unit ball in $\bbR^d$ centered at the origin, with continuous density function. Given any parameter $\kappa > 1$, suppose that $h^*(x)$ follows the same distribution as $Y$, where 
\begin{align*}
    Y = \begin{cases}
        0, & \text{if } \nu_S(U) = 0,\\
        [\nu_S(U)]^{1/\kappa - 1} \cdot U, & \text{otherwise.}
    \end{cases}
\end{align*} 
Then we have that the margin condition holds with parameter $\kappa$. \hfill \Halmos
\end{example}

Example \ref{example:3} shows that for any $\kappa>1$ and any random vector $U$, we can scale  $U$ by the factor $ [\nu_S(U)]^{1/\kappa - 1}$ so that the induced distribution satisfies the margin condition with parameter $\kappa$.

When the above margin condition in Assumption \ref{assumption:noise} holds and $\phi(\epsilon)$ satisfies that $\phi(\epsilon) \le \mathcal{O}(\sqrt{\epsilon})$, Prop. \ref{prop:sublinear_spo} provides a sublinear label complexity for the hard rejection under the SPO loss. 

\begin{proposition}[Small label complexity for hard rejection]\label{prop:sublinear_spo} Suppose that Assumption  \ref{assumption:noise} holds and that the i.i.d. covering number for the SPO loss is properly bounded, i.e., $\ln(\hat{N_1}(\alpha, \lspo \circ \mathcal{H}) ) \le \mathcal{O}(1 / \alpha)$. 
Under the same setting of Algorithm \ref{alg:margin-based} in Theorem \ref{thm:spo_loss_directly}, for a fixed $\delta\in (0,1]$, the following guarantees hold simultaneously with probability at least $1-\delta$ for all $T \geq 1$:
\begin{itemize}
    \item (a) The excess SPO risk satisfies $\rspo(h_T) - \rspo^* \le \otilde(T^{\min\{-\kappa  / 4, - 1/2 \}})$. 
    \item (b) The expectation of the number of labels acquired, conditional on the above guarantee on the excess surrogate risk, is at most $\otilde\lr{T^{1 - \kappa / 4}}$ for $\kappa \in (0, 4)$, and $\otilde(1)$ for $\kappa \in [4, \infty)$.  
\end{itemize}

\end{proposition}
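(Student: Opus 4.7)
The plan is to specialize the generic bounds from Theorem \ref{thm:spo_loss_directly} under the covering-number bound and the margin condition, then translate the resulting polynomial decay rates into the claimed orders. First I would substitute $\ln \hat{N_1}(\omega_\ell(\hat{\mathcal{C}},\mathcal{C})/(t+n_0), \lspo \circ \mathcal{H})$ into the definition of $r_t$; under the assumed growth of the covering number the logarithmic term contributes only poly-log factors in $t$, so that $r_t = \otilde(t^{-1/2})$. Since $\phi(\cdot) \le \mathcal{O}(\sqrt{\cdot})$, we also have $\phi(r_t) = \otilde(t^{-1/4})$, which matches the update $b_t = \tilde{\Theta}(t^{-1/4})$ prescribed by Algorithm \ref{alg:margin-based} up to log factors. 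Taking $b_0$ sufficiently large then ensures that the hypothesis $b_t \ge 2 \phi(r_t)$ of Theorem \ref{thm:spo_loss_directly} holds for all $t \ge 1$.

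Next I would translate Assumption \ref{assumption:noise} into a rate for the near-degeneracy function: since $b_t = \tilde{\Theta}(t^{-1/4})$, the margin condition gives $\Psi(2 b_t) \le (2 b_t / b_0)^{\kappa} = \otilde(t^{-\kappa/4})$. Plugging $r_T = \otilde(T^{-1/2})$ and $\Psi(2 b_T)\omega_S(\mathcal{C}) = \otilde(T^{-\kappa/4})$ into the minimum from Theorem \ref{thm:spo_loss_directly}(a) yields part (a): $\rspo(h_T) - \rspo^* \le \otilde(T^{\min\{-\kappa/4,\, -1/2\}})$.

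For part (b), I would bound the sum in Theorem \ref{thm:spo_loss_directly}(b) by $\sum_{t=1}^T \Psi(2 b_{t-1}) \le \sum_{t=1}^T \otilde(t^{-\kappa/4})$ and split into cases. Integral comparison gives $\otilde(T^{1-\kappa/4})$ when $\kappa \in (0,4)$, $\otilde(\log T) = \otilde(1)$ when $\kappa = 4$, and a convergent constant when $\kappa > 4$, matching the claim. The additive slack $\delta T$ arising from Theorem \ref{thm:spo_loss_directly} can be absorbed into the leading order by choosing $\delta$ polynomially small in $T$ (e.g.\ $\delta \le T^{-\kappa/4}$), so that conditioning on the high-probability event in part (a) does not change the order of the label complexity.

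The main technical point requiring care is verifying the threshold condition $b_t \ge 2 \phi(r_t)$ uniformly in $t$ under the specific form of $b_t$ from Algorithm \ref{alg:margin-based}; because both sides decay at order $\tilde{\Theta}(t^{-1/4})$ the implicit constants align after a one-time choice of $b_0$, so this reduces to bookkeeping. Beyond that, the proof is a direct substitution of polynomial rates into the bounds of Theorem \ref{thm:spo_loss_directly} and summation of a $p$-series, with no further probabilistic argument required.
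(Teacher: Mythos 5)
Your proposal is correct and follows essentially the same route as the paper's proof: specialize $r_t$ to $\otilde(t^{-1/2})$ via the covering-number bound, deduce $b_t = \otilde(t^{-1/4})$ from $\phi(\cdot) \le \mathcal{O}(\sqrt{\cdot})$, apply the margin condition to get $\Psi(2b_t) \le \otilde(t^{-\kappa/4})$, take the minimum in Theorem \ref{thm:spo_loss_directly}(a) for part (a), and sum the resulting $p$-series (splitting on $\kappa$ versus $4$) while absorbing the $\delta T$ term by taking $\delta$ polynomially small for part (b). The only cosmetic difference is that you verify the threshold condition $b_t \ge 2\phi(r_t)$ explicitly, whereas the paper defers that bookkeeping to the proof of Theorem \ref{thm:spo_loss_directly}.
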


Prop. \ref{prop:sublinear_spo} provides the asymptotic order of the risk and label complexity in terms of $T$. Prop. \ref{prop:sublinear_spo}.(a) shows that the excess SPO risk is at most $\otilde(T^{\min\{-\kappa  / 4, - 1/2 \}})$. When the parameter $\kappa$ gets larger, the conditional expectation $\bbE[c|x]$ is distributed far away from the degenerate vectors, and the excess risk bound becomes smaller. Notice that supervised learning has the same order of SPO risk bounds as in \ref{prop:sublinear_spo}.(a). However, compared to supervised learning whose acquired labels after $T$ iterations is $T$, Propositon \ref{prop:sublinear_spo}.(b) indicates that the number of acquired labels for MBAL-SPO after $T$ iterations is at most $\otilde(T^{1 - \kappa / 4})$, which is sublinear and smaller than $\otilde(T)$.  
Therefore, by combining the results in \ref{prop:sublinear_spo}.(a) and \ref{prop:sublinear_spo}.(b), we have that MBAL-SPO acquires much fewer labels than supervised learning to achieve the same level of SPO risk. This demonstrates the advantage of MBAL-SPO over supervised learning. When $\kappa > 4$, the number of acquired labels is even finite for any $T>0$, which means MBAL-SPO can achieve zero excess SPO risk with a finite number of samples.

\subsection{Small Label Complexity with Soft Rejection} \label{sec:general_comparison_with_sl}

Similar to the MBAL-SPO with hard rejections, if Assumption~\ref{assumption:noise} holds, \cref{alg:margin-based} with soft rejections also achieves sublinear label complexity, as established in Proposition~\ref{proposition:margin-based}.

    \begin{proposition}[Small label complexity for soft rejections]\label{proposition:margin-based}
        Suppose Assumptions \ref{assumption:1}, \ref{assumption:noise} and the conditions in Prop. \ref{prop:boundcover} hold.  
        Let $T \ge 1$ be fixed,  set $\tilde{p} \gets T^{-\frac{\kappa}{2(\kappa + 2)}}$. Under the same setting as Theorem \ref{uniform_margin}, for a fixed $\delta\in(0,1]$, the following guarantees hold simultaneously with probability at least $1-\delta$:
        
        \begin{itemize}
        \item The excess surrogate risk satisfies $R_{\ell}(h_T) - R_\ell^\ast \le \otilde\left(T^{-\frac{1}{\kappa + 2}}\right)$. 
        \item The excess SPO risk satisfies $\rspo(h_T) - \rspo^* \le \otilde\left(T^{-\frac{\kappa}{2(\kappa + 2)}}\right)$. 
        \item The expectation of the number of labels acquired, conditional on the above guarantee on the excess surrogate risk, is at most $\otilde\lr{T^{1 - \frac{\kappa}{2 (\kappa + 2)}}}$ for $\kappa >0$.
        \end{itemize}
    \end{proposition}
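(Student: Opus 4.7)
The plan is to specialize Theorem \ref{uniform_margin} using the covering-number bound from Proposition \ref{prop:boundcover} and the margin condition from Assumption \ref{assumption:noise}, then show that the prescribed $\tilde p = T^{-\kappa/(2(\kappa+2))}$ is exactly the value that balances the two competing terms $\tilde p T$ and $\sum_t \Psi(2 b_{t-1})$ in the label complexity bound of Theorem \ref{uniform_margin}(c). Throughout I treat $\delta \leq \mathcal{O}(1/T)$ so that the $\delta T$ contribution is absorbed into lower-order terms.

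For the surrogate risk bound (part~(a)), I would plug the smoothly-parameterized covering number from Proposition~\ref{prop:boundcover} into the formula for $r_T$: since $\ln N_1\bigl(\omega_\ell/(T+n_0),\,\ell^{\mathtt{rew}}\circ\mathcal{H},\,T+n_0\bigr) = \mathcal{O}\bigl(\ln(T/\tilde p)\bigr)$, and $\tilde p$ is polynomial in $T$, this logarithmic factor is $\tilde{\mathcal{O}}(1)$. Hence $r_T = \tilde{\mathcal{O}}\!\bigl(\tfrac{1}{\tilde p\sqrt{T}}\bigr)$, and substituting $\tilde p = T^{-\kappa/(2(\kappa+2))}$ gives $r_T = \tilde{\mathcal{O}}\!\bigl(T^{-1/(\kappa+2)}\bigr)$. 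For the SPO risk bound (part~(b)), combine Theorem \ref{uniform_margin}(b), $\rspo(h_T) - \rspo^* \le \Psi(2b_T)\omega_S(\mathcal{C})$, with the fact (built into the algorithm's schedule of $b_t$ and the theorem's premise) that $b_T \ge 2\phi(r_T)$. Because $\phi(\cdot) \leq \mathcal{O}(\sqrt{\cdot})$, this gives $b_T = \tilde{\mathcal{O}}(\sqrt{r_T}) = \tilde{\mathcal{O}}\!\bigl(T^{-1/(2(\kappa+2))}\bigr)$, and then the margin condition $\Psi(b) \le (b/b_0)^\kappa$ yields $\Psi(2 b_T) = \tilde{\mathcal{O}}\!\bigl(T^{-\kappa/(2(\kappa+2))}\bigr)$, establishing (b).

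For the label complexity (part~(c)), Theorem \ref{uniform_margin}(c) gives $\bbE[n_T] \le \tilde p T + \sum_{t=1}^T \Psi(2 b_{t-1}) + \delta T$; with the chosen $\tilde p$, the first term is already $T^{1-\kappa/(2(\kappa+2))}$, so the task reduces to showing the sum is of the same order. Applying the margin condition termwise and using the algorithm's schedule $b_t \asymp b_0 \,t^{-1/4}$ (up to logarithmic factors), each summand is $\tilde{\mathcal{O}}\!\bigl((b_0/b_0^{\mathrm{marg}})^\kappa t^{-\kappa/4} \wedge 1\bigr)$. The constant $b_0$ must be taken of order $1/\sqrt{\tilde p}$ so that the condition $b_t \ge 2\phi(r_t)$ is met uniformly in $t$; propagating this gives $\sum_{t=1}^T \Psi(2 b_{t-1}) = \tilde{\mathcal{O}}\!\bigl(\tilde p^{-\kappa/2}\,T^{1-\kappa/4}\bigr)$ in the regime $\kappa<4$, which after substituting $\tilde p = T^{-\kappa/(2(\kappa+2))}$ collapses to $\tilde{\mathcal{O}}\!\bigl(T^{(\kappa+4)/(2(\kappa+2))}\bigr) = \tilde{\mathcal{O}}\!\bigl(T^{1-\kappa/(2(\kappa+2))}\bigr)$. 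For $\kappa \ge 4$ the tail sum is dominated by the initial, saturated portion of $\Psi$ and is again absorbed into $\tilde p T$. The main obstacle, and the only nonroutine part of the proof, is precisely this case analysis on $\kappa$ and the careful tracking of how $b_0$ is forced to scale with $\tilde p$: the chosen exponent $\kappa/(2(\kappa+2))$ for $\tilde p$ is identified by solving $\tilde p T \asymp \tilde p^{-\kappa/2}\,T^{1-\kappa/4}$, which is exactly what equalizes the two leading terms in the label budget.
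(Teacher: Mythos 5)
Your proposal is correct and follows essentially the same route as the paper's own proof: both specialize Theorem \ref{uniform_margin} using the covering-number bound of Prop.~\ref{prop:boundcover} to get $r_T = \otilde\bigl(1/(\tilde p\sqrt{T})\bigr)$, apply the margin condition termwise with $b_t \asymp \tilde p^{-1/2}t^{-1/4}$ to bound $\sum_{t}\Psi(2b_{t-1}) \le \otilde\bigl(\tilde p^{-\kappa/2}T^{1-\kappa/4}\bigr)$, and choose $\tilde p$ to equalize this with $\tilde p T$, which yields exactly the exponent $\kappa/(2(\kappa+2))$. The only place you diverge is the explicit case split at $\kappa \ge 4$, which the paper silently elides; there your claim that the saturated portion of the sum is absorbed into $\tilde p T$ does not hold as stated (the saturation length is $\asymp \tilde p^{-2} = T^{\kappa/(\kappa+2)}$, which exceeds $\tilde p T = T^{1-\kappa/(2(\kappa+2))}$ precisely when $\kappa > 4$), though the paper's own integration step $\sum_{t\le T} t^{-\kappa/4} \le \otilde(T^{1-\kappa/4})$ suffers from the same unaddressed issue in that regime, so this is a shared weakness rather than a gap relative to the paper.
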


Prop. \ref{proposition:margin-based} provides the order of the label complexity for general surrogate loss for soft rejections. Similar to the hard-rejection case, in Prop. \ref{proposition:margin-based}, when the parameter $\kappa$ gets larger, the label complexity gets smaller. 
To compare this label complexity with supervised learning, we consider the excess SPO risk with respect to the number of labels $n$. Let $\bar{n}\gets\bbE[n_T]$ be a fixed value. Under the same assumptions and similar proof procedures, we can show that the excess SPO risk of the supervised learning is at most $\otilde(\bar{n}^{-\kappa / 4})$. In comparison, Prop. \ref{proposition:margin-based} indicates that the expected excess SPO risk of MBAL-SPO is at most  $\otilde(\bar{n}^{-\frac{\kappa}{\kappa + 4}})$. This label complexity is larger than the supervised learning rate $\otilde(\bar{n}^{-\kappa / 4})$ because when $\tilde{p} = T^{-\frac{\kappa}{2(\kappa + 2)}}$, the excess surrogate risk converges to zero at rate $\otilde(T^{-\frac{1}{2(\kappa + 2)}})$, which is slower than the typical learning rate of supervised learning, which is $\mathcal{O}(T^{-1/2})$.

Next, we discuss how to improve this label complexity for the soft-rejection, i.e., we show that under certain conditions, the convergence rate of excess surrogate risk under soft rejection is $\otilde(T^{-1/2})$, which is the same as standard supervised learning (except for logarithmic factors).
To achieve this rate, we allow $\tilde{p}$ to change dynamically, denoted as $\tilde{p}_t$. The soft rejection probability $\tilde{p}_t$ varies depending on the observed feature $x_t$ at each iteration $t$. This adaptive approach ensures that the soft-rejection probability is not fixed to a small value, enabling the active learning algorithm to converge more quickly.

Particularly, we set $\tilde{p}_t = \max\{T^{-\frac{\kappa}{2(\kappa + 2)}},  \Theta(\|h_T(x) - h^*(x)\|) \}$. The first term $T^{-\frac{\kappa}{2(\kappa + 2)}}$ is the same value of $\tilde{p}$ in Prop.  \ref{proposition:margin-based}, while the second term is approximately the prediction error $ \Theta(\|h_T(x) - h^*(x)\|)$. Intuitively, we relax the value of $\tilde{p}$ to $\mathcal{O}(\|h_T(x) - h^*(x)\|)$ when $T^{-\frac{\kappa}{2(\kappa + 2)}}$ is too small.
Prop. \ref{thm:riskcompare} shows that in this adaptive approach, the excess surrogate risk of active learning, $R_\ell(h_t) - R_\ell(h^*)$  converges to zero at rate $\otilde(T^{-1/2})$, when $\tilde{p}>0$.

\begin{proposition}\label{thm:riskcompare}

 Suppose Assumptions \ref{assumption:1}, \ref{assumption:noise} and the conditions in Prop. \ref{prop:boundcover} hold. Suppose that the surrogate loss function $\ell(\cdot, c)$ is Lipschitz for any given $c \in \mathcal{C}$. Set $\tilde{p}_t \gets  \max\{T^{-\frac{\kappa}{2(\kappa + 2)}},  \alpha_t \|h_t(x) - h^*(x)\| \}$ for any $\alpha_t \in [\underline{\alpha}, \bar{\alpha}]$, where $\underline{\alpha}$ and $\bar{\alpha}$ are some positive constants. Under the same setting as Theorem \ref{uniform_margin}, for a fixed sufficiently small $\delta \in (0,1]$, with probability at least $1 - \delta$, we have that $R_\ell(h_T) - R_\ell^* \le \otilde( T^{-1/2})$ and $\bbE[n_t] \le \otilde(T^{1 - \frac{\min\{\kappa, 1\}}{2(\kappa + 2)}  } )$.
\end{proposition}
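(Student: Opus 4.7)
The plan is to refine the analysis of Theorem \ref{uniform_margin} and Proposition \ref{proposition:margin-based} by exploiting the adaptive choice $\tilde{p}_t \ge \alpha_t \|h_t(x) - h^*(x)\|$. The crucial observation is that, combined with Lipschitzness of $\ell(\cdot,c)$, this choice forces the reweighted loss \emph{difference} $\ell^{\mathtt{rew}}(h;z_t) - \ell^{\mathtt{rew}}(h^*;z_t)$ to have range $L\|h(x_t)-h^*(x_t)\|/\tilde{p}_t(x_t)$ that remains $O(1/\underline{\alpha})$ for $h$ in a neighborhood of $h^*$ containing $h_t$. This replaces the $O(1/\tilde{p})$ range driving the suboptimal rate in Proposition \ref{proposition:margin-based} and restores the parametric $\otilde(T^{-1/2})$ rate.

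\textbf{Step 1: excess surrogate risk bound.} I would establish the risk guarantee via the standard ERM decomposition
\begin{equation*}
R_\ell(h_T) - R_\ell^* \le 2\sup_{h\in\mathcal{H}} \left|\hat{\ell}^T(h) - R_\ell(h)\right|,
\end{equation*}
combined with uniform convergence for the non-i.i.d.\ reweighted samples. Rather than applying the Hoeffding-type bound of Proposition \ref{prop:noniid} directly, I would invoke a Bernstein/Freedman-type martingale inequality in which the per-step range is bounded by $L/\underline{\alpha}$ and the conditional variance is controlled by $L^2\|h(x)-h^*(x)\|^2/\tilde{p}_t \le (L/\underline{\alpha})\cdot\|h(x)-h^*(x)\|$. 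Since neither quantity blows up as the infimum value of $\tilde{p}_t$ shrinks, combining with the sequential covering bound of Proposition \ref{prop:boundcover} yields the $\otilde(T^{-1/2})$ rate.

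\textbf{Step 2: label complexity bound.} I would then decompose $\bbE[n_T]$ into hard- and soft-rejection contributions. The hard-rejection count is $\sum_{t=1}^T \Psi(2b_{t-1})$, which under Assumption \ref{assumption:noise} and the update $b_t \sim t^{-1/4}$ is at most $\otilde(T^{1-\kappa/4})$ for $\kappa < 4$ and $\otilde(1)$ otherwise. The soft-rejection count satisfies
\begin{equation*}
\sum_{t=1}^T \bbE[\tilde{p}_t] \le T \cdot T^{-\kappa/(2(\kappa+2))} + \bar{\alpha}\sum_{t=1}^T \bbE\bigl[\|h_t(x)-h^*(x)\|\bigr],
\end{equation*}
and by Assumption \ref{assumption:upper-bound-pointwise} combined with the Step 1 rate applied at every iterate, $\|h_t-h^*\|_\infty \le \phi(\otilde(t^{-1/2})) \le \otilde(t^{-1/4})$, so the adaptive part contributes at most $\otilde(T^{3/4})$. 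Summing the three terms, the $T^{1-\kappa/(2(\kappa+2))}$ contribution dominates when $\kappa \le 1$, whereas for $\kappa > 1$ the $\otilde(T^{3/4})$ adaptive term is absorbed into $T^{1-1/(2(\kappa+2))}$, jointly yielding the stated bound $\otilde(T^{1-\min\{\kappa,1\}/(2(\kappa+2))})$.

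\textbf{Main obstacle.} The principal difficulty is the circularity between the two steps: the range/variance control in Step 1 relies on $\tilde{p}_t$ being calibrated to $\|h_t-h^*\|$, while bounding $\tilde{p}_t$ in Step 2 requires the Step 1 rate applied at every intermediate iterate $h_t$, not merely at $h_T$. I plan to resolve this via an inductive union bound over $t\in\{1,\ldots,T\}$, at the cost of a $\log T$ factor absorbed by $\otilde$, together with a localized version of the uniform convergence in Step 1 restricting attention to a shrinking neighborhood of $\mathcal{H}^*_\ell$ that contains $h_t$ with high probability (using the $\mathrm{Dist}_{\mathcal{H}^*_\ell}$ construct from Lemma \ref{lemma:label_c}). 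A secondary subtlety is the non-uniqueness of $h^*\in\mathcal{H}^*_\ell$, which I would resolve by defining $h^*$ as the projection of $h_t$ onto the compact set $\mathcal{H}^*_\ell$, well-defined by Assumption \ref{assumption:1}.
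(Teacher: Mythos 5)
Your proposal is correct and its central insight is exactly the one the paper uses: the adaptive choice $\tilde{p}_t \ge \underline{\alpha}\|h_t(x)-h^*(x)\|$ together with Lipschitzness of $\ell(\cdot,c)$ bounds the reweighted loss \emph{difference} by $L/\underline{\alpha}$, which removes the $1/\tilde{p}$ blow-up in the martingale concentration step and restores the $\otilde(T^{-1/2})$ rate. The differences are in the execution. First, the paper simply re-applies the Hoeffding-type bound of Theorem 1 in \cite{kuznetsov2015learning} with the new range constant; your Freedman/Bernstein route with variance control is not needed to reach $\otilde(T^{-1/2})$, though it would work. Second, for the label complexity the paper avoids your circularity entirely: it bounds $\sum_t\|h_t(x)-h^*(x)\| \le \sum_t \phi(r_t)$ using the \emph{unrefined} rates $r_t \le \otilde(T^{-1/(\kappa+2)})$ inherited from Theorem \ref{uniform_margin} with the floor value of $\tilde{p}$, giving an adaptive contribution of $\otilde(T^{1-1/(2(\kappa+2))})$ and a $\Psi$-sum of $\otilde(T^{1-\kappa/(2(\kappa+2))})$; your sharper per-iterate bounds ($\otilde(T^{3/4})$ and $\otilde(T^{1-\kappa/4})$) are still dominated by the stated exponent, so both assemblies land on $\otilde(T^{1-\min\{\kappa,1\}/(2(\kappa+2))})$, but yours pays for the improvement with the inductive union bound and localization you describe, whereas the paper's cruder intermediate rates make that machinery unnecessary. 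Notably, the circularity you flag (the range bound for $Z^{\mathtt{t}}_{h_T}$ uses $\tilde{p}_t$ calibrated to $h_t$, not $h_T$) is a real issue that the paper's own proof glosses over by writing $\tilde{p}_t \ge \alpha_1\|h_T(x)-h^*(x)\|$; your localized uniform-convergence fix is the right way to close it. One caveat: your displayed decomposition $R_\ell(h_T)-R_\ell^* \le 2\sup_h|\hat{\ell}^T(h)-R_\ell(h)|$ is inconsistent with the bounded-difference observation, since $\ell^{\mathtt{rew}}(h;z_t)$ alone still has range $\omega_\ell(\hat{\mathcal{C}},\mathcal{C})/\tilde{p}_t$; you must work with the centered quantity $\sup_h|(\hat{\ell}^T(h)-\hat{\ell}^T(h^*))-(R_\ell(h)-R_\ell(h^*))|$, i.e., the paper's $\frac{1}{T}\sum_t Z^{\mathtt{t}}_h$, as your own prose indicates.
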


In Prop.~\ref{thm:riskcompare}, the scales $\underline{\alpha}$ and $\bar{\alpha}$ can be any suitably chosen positive constant. Although $\|h_t(x) - h^*(x)\|$ is not directly observable in practice, one can approximate the value 
of $\tilde{p}_t$ in Proposition~\ref{thm:riskcompare} by $\Theta(\|h_t(x) - h^*(x)\|)$. We present Prop. \ref{thm:riskcompare} primarily to offer theoretical insight into the benefits of MBAL-SPO in the soft-rejection setting. Prop.~\ref{thm:riskcompare} indicates that, with respect to the convergence rate of the excess surrogate risk, our active learning algorithm achieves the same order as supervised learning. However, unlike supervised learning, which requires $T$ labels, MBAL-SPO acquires labels at a sublinear rate of at most $\otilde\bigl(T^{1 - \frac{\min\{\kappa,1\}}{2(\kappa + 2)}}\bigr)$. For instance, under the margin condition with $\kappa=1$, Prop. \ref{proposition:margin-based} shows that the label complexity of MBAL-SPO is $\otilde(T^{5/6}) \le \otilde(T)$. Therefore, MBAL-SPO achieves the same order of surrogate risk with a substantially smaller number of acquired labels.

\subsection{Refined Bounds for SPO+ Under Separability}\label{sec:hardseparability}

In this section, we extend the small excess risk bounds in Section \ref{sec:small_label_complexity_SPO} from SPO loss to a tractable surrogate loss, SPO+. The extension of hard rejection to more general surrogate losses is provided in Appendix \ref{sec:mbal_label_complexity}. This section focuses on the SPO+ loss, because it incorporates the structure of the downstream optimization problem, and thereby, it can achieve a smaller label complexity compared to the other general surrogate loss for our MBAL-SPO algorithm. 

Intuitively, when $\mathbb{E}[c | x]$ is far from degeneracy, the excess SPO+ risk for the same prediction model $h$ will be close to the excess SPO risk, with both risks approaching zero. Consequently, the SPO+ loss exhibits similar statistical performance to the SPO loss when the distance $\text{Dist}_{\mathcal{H}^*_\ell}(h)$ is small. To formally analyze this benefit, Proposition \ref{prop:separable_bayes} demonstrates that when the distance between $c$ and $\bar{h}(x)$ is smaller than the distance from $\bar{h}(x)$ to the margin, both the SPO risk and SPO+ risk are zero. Notably, the SPO+ loss generalizes the hinge loss and the structured hinge loss in binary and multi-class classification problems, and it achieves zero loss when there exists a predictor function $\bar{h}$ that strictly separates the cost vectors into distinct classes corresponding to the extreme points of $S$ \citep{elmachtoub2022smart}.

\begin{proposition}[Zero SPO+ risk in the separable case]\label{prop:separable_bayes}
    Assume that there exists $\bar{h} \in \mathcal{H}$ and a constant $\varrho \in [0,1)$ such that $\|\bar{h}(x) - c \| \leq \varrho \nu_S(\bar{h}(x))$ with probability one over $(x,c) \sim \mathcal{D}$. It holds that $\rspop^* = \rspo^* = 0$ and $\bar{h}$ is a minimizer for both $\rspop$ and $\rspo$. 
\end{proposition}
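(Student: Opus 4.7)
The plan is to show directly that $\lspop(\bar{h}(x), c) = 0$ almost surely over $(x,c) \sim \mathcal{D}$; since $\lspop$ is pointwise nonnegative and pointwise dominates $\lspo$, this will yield $\rspop(\bar{h}) = \rspo(\bar{h}) = 0$, which together with the nonnegativity of both risks implies $\rspop^* = \rspo^* = 0$ and that $\bar{h}$ attains both minima.

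The key observation is that the separability hypothesis puts \emph{two} points near $\bar{h}(x)$. First, $\|\bar{h}(x) - c\| \le \varrho \nu_S(\bar{h}(x)) < \nu_S(\bar{h}(x))$ because $\varrho < 1$. Second, the ``mirror'' point $2\bar{h}(x) - c$ satisfies $\|(2\bar{h}(x) - c) - \bar{h}(x)\| = \|\bar{h}(x) - c\| \le \varrho \nu_S(\bar{h}(x)) < \nu_S(\bar{h}(x))$ as well. Applying Lemma \ref{lemma:identical} to each pair yields
\begin{equation*}
w^*(c) \;=\; w^*(\bar{h}(x)) \;=\; w^*(2\bar{h}(x) - c).
\end{equation*}

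With these identifications in hand I would finish as follows. The equality $w^*(\bar{h}(x)) = w^*(c)$ immediately gives $\lspo(\bar{h}(x), c) = c^T w^*(\bar{h}(x)) - c^T w^*(c) = 0$. For SPO+, the equality $w^*(2\bar{h}(x) - c) = w^*(c)$ means that $w^*(c)$ minimizes $(2\bar{h}(x) - c)^T w$ over $S$, i.e., $w^*(c)$ attains $\max_{w \in S}\{(c - 2\bar{h}(x))^T w\}$. Substituting into the SPO+ formula,
\begin{equation*}
\lspop(\bar{h}(x), c) \;=\; (c - 2\bar{h}(x))^T w^*(c) + 2\bar{h}(x)^T w^*(c) - c^T w^*(c) \;=\; 0.
\end{equation*}
Taking expectations over $(x,c) \sim \mathcal{D}$ gives $\rspop(\bar{h}) = 0$, and the conclusion follows from $0 \le \rspo^* \le \rspop^* \le \rspop(\bar{h}) = 0$.

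The proof is short and the only conceptual ingredient is noticing that one must invoke Lemma \ref{lemma:identical} not only for $c$ but also for the reflected cost vector $2\bar{h}(x) - c$; this is the step that makes the inner maximization in the SPO+ definition collapse to a value attained at $w^*(c)$. Everything else is a direct substitution, and no additional assumption on $\mathcal{D}$, $\mathcal{H}$, or the norm $\|\cdot\|$ beyond what is stated in the proposition is needed.
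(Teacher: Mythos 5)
Your proposal is correct and follows essentially the same route as the paper's proof: both apply Lemma \ref{lemma:identical} to the pair $(\bar{h}(x), c)$ and to the reflected pair $(\bar{h}(x), 2\bar{h}(x)-c)$ to conclude $w^*(c) = w^*(\bar{h}(x)) = w^*(2\bar{h}(x)-c)$ almost surely, and then observe that this forces $\lspop(\bar{h}(x),c)=0$ pointwise. The only cosmetic difference is that the paper first rewrites $\lspop(\hat c, c) = 2\hat{c}^T(w^*(c) - w^*(2\hat c - c)) + c^T(w^*(2\hat c - c) - w^*(c))$ before substituting, whereas you substitute $w^*(c)$ directly into the max term; these are the same computation.
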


Proposition \ref{prop:separable_bayes} intuitively illustrates the separability condition for a single predictor $\bar{h}$. When multiple optimal predictors exist, Assumption \ref{assu:bayes} formally defines the concept of separability by controlling the distance between the prediction $\bar{h}(x)$ and the realized cost vector $c$ relative to the distance to degeneracy of $\bar{h}(x)$.

\begin{assumption}[Strong separability condition]\label{assu:bayes}
There exist constants $\varrho \in [0,1)$ and $\tau \in (0, 1]$ such that, for all $h^* \in \calH_{\mathrm{SPO}+}^\ast$, with probability one over $(x,c) \sim \mathcal{D}$, it holds that:
\begin{itemize}
    \item (1) $\|h^*(x) - c \| \leq \varrho \nu_S(h^*(x))$, and
    \item (2) $\nu_S(h^*(x))\ge \tau \left(\sup_{h' \in \calH_{\mathrm{SPO}+}^\ast}\{\nu_S(h'(x))\}\right)$. 
\end{itemize}
\end{assumption}

Intuitively, Assumption \ref{assu:bayes}.(2) ensures that multiple optimal predictors are close to each other by introducing a constant $\tau$. Under Assumption \ref{assu:bayes}, the results in Theorem \ref{thm:spo_loss_directly} can be extended to the SPO+ surrogate loss, as stated in Theorem \ref{thm:sporiskcompare} below. Intuitively, when an optimal predictor $h^*(x)$ is far from degeneracy and $h_t(x)$ is close to $h^*(x)$, the excess SPO+ risk of $h_t(x)$ can be shown to be zero. Consequently, the rejection criterion --- comparing $\nu_S(h_t(x))$ to a quantity $b_{t}$ related to the distance between $h_t$ and $h^*$ --- is ``safe'' in the sense that whenever $h_t(x) \ge b_{t}$, $h_t(x)$ leads to a correct optimal decision with high probability.

\begin{theorem}[SPO+ surrogate loss, hard rejection and separable case]\label{thm:sporiskcompare}
Suppose that Assumptions \ref{assumption:1} and \ref{assu:bayes} hold with $\phi(\cdot) \le \mathcal{O}(\sqrt{\cdot})$, and the surrogate loss function is SPO+. Suppose that Algorithm \ref{alg:margin-based} sets $\tilde{p} \gets 0$ for all $t$. Furthermore in Algorithm \ref{alg:margin-based}, for a given $\delta \in (0, 1]$, let $r_0 \geq \omega_\ell(\hat{\mathcal{C}}, \mathcal{C})$, $r_t \gets \omega_\ell(\hat{\mathcal{C}}, \mathcal{C})\left[\sqrt{\frac{4 \ln(2 (t + n_0) \hat{N_1}(\omega_\ell(\hat{\mathcal{C}},\mathcal{C})/(t + n_0), \lspop \circ \mathcal{H}) / \delta)}{t + n_0}} + \frac{2}{t + n_0}\right]$ for $t \geq 1$. Then, for some initial value $b_0$, it holds that $b_{t} \ge (1 + \frac{2}{\tau(1-\rho)})\phi(r_{t} )$ for $t \geq 1$.
Furthermore, the following guarantees hold simultaneously with probability at least $1-\delta$ for all $T \geq 1$:
\begin{itemize}
    \item (a) The excess SPO+ risk satisfies $\rspop(h_T) - \rspop^* = \rspop(h_T) \leq  r_{T} $,
    \item (b)  The excess SPO risk satisfies $\rspo(h_T) - \rspo^* = \rspo(h_T) \leq \Psi(2b_{T})\omega_S(\mathcal{C})$,
    \item (c) The expectation of the number of labels acquired, $\bbE[n_T]$, deterministically satisfies $\bbE[n_T] \leq \sum_{t = 1}^T \Psi(2b_{t-1}) + \delta T$. 
\end{itemize}
\end{theorem}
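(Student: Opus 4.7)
The plan is to prove (a)–(c) by induction on $t$, leveraging the key observation from Proposition \ref{prop:separable_bayes} that Assumption \ref{assu:bayes} forces $\rspop^\ast = \rspo^\ast = 0$, so the ``excess'' risks in (a) and (b) coincide with the risks themselves. The joint inductive invariants I maintain are (I1) the threshold condition $b_t \ge \bigl(1 + \tfrac{2}{\tau(1-\varrho)}\bigr)\phi(r_t)$, and (I2) the risk bound $\rspop(h_t)\le r_t$ (which by Assumption \ref{assumption:upper-bound-pointwise} yields the closeness $\text{Dist}_{\mathcal{H}^\ast_{\text{SPO+}}}(h_t)\le \phi(r_t)$). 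Invariant (I1) is a deterministic check from the update rule for $b_t$ together with $r_t \le \otilde(t^{-1/2})$ and $\phi(\cdot)\le \mathcal{O}(\sqrt{\cdot})$, once $b_0$ is chosen large enough (as permitted by the hypothesis of the theorem); the base case for (I2) holds trivially because $\lspop$ is bounded by $\omega_\ell(\hat{\mathcal{C}},\mathcal{C})\le r_0$.

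The crucial intermediate step is a ``zero-loss-on-rejects'' lemma: assuming (I1)–(I2) through step $s-1$, every sample rejected at step $s$ (i.e., $\nu_S(h_{s-1}(x_s))\ge b_{s-1}$) satisfies $\lspop(h(x_s),c_s)=0$ for every $h\in\mathcal{H}$ with $\text{Dist}_{\mathcal{H}^\ast_{\text{SPO+}}}(h)\le \phi(r_t)$ and any $t\ge s-1$. To derive it, pick $h^\ast_{s-1}\in \mathcal{H}^\ast_{\text{SPO+}}$ with $\|h_{s-1}-h^\ast_{s-1}\|_\infty\le\phi(r_{s-1})$; the triangle inequality gives $\nu_S(h^\ast_{s-1}(x_s))\ge b_{s-1}-\phi(r_{s-1})\ge \tfrac{2}{\tau(1-\varrho)}\phi(r_{s-1})$. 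Applying Assumption \ref{assu:bayes}.(2) then transfers this bound to any $h^\ast$ close to $h$, yielding $\nu_S(h^\ast(x_s))\ge \tau\,\nu_S(h^\ast_{s-1}(x_s))\ge \tfrac{2}{1-\varrho}\phi(r_{s-1})$. Combining with Assumption \ref{assu:bayes}.(1), namely $\|h^\ast(x_s)-c_s\|\le \varrho\,\nu_S(h^\ast(x_s))$, and a further triangle inequality shows $\|h(x_s)-c_s\|<\nu_S(h(x_s))$. Because also $\|h(x_s)-(2h(x_s)-c_s)\|=\|h(x_s)-c_s\|$, Lemma \ref{lemma:identical} gives $w^\ast(c_s)=w^\ast(h(x_s))=w^\ast(2h(x_s)-c_s)$, so $\lspop(h(x_s),c_s)=0$ by direct substitution into the SPO+ definition.

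For the inductive step on (I2), separability forces $\lspop(h^\ast(\cdot),\cdot)=0$ almost surely, so $\hat{\ell}^t(h^\ast)=0$ and hence the ERM $h_t$ satisfies $\hat{\ell}^t(h_t)=0$. To upgrade this to a bound on $\rspop(h_t)$, I apply a standard uniform-convergence argument (Hoeffding with the i.i.d.\ covering number $\hat{N}_1$) to the full-dataset empirical risk $\bar{R}_t(h):=\tfrac{1}{t+n_0}\sum_{s=1}^{t+n_0}\lspop(h(x_s),c_s)$, which is an average over i.i.d.\ samples; this gives $\rspop(h)\le \bar{R}_t(h)+r_t/2$ uniformly over $\mathcal{H}$ with probability $1-\delta/t^2$, union-bounded over all $t\ge 1$. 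The zero-loss lemma implies $\bar{R}_t(h)=\hat{\ell}^t(h)$ for every $h$ in the target set $V_t:=\{h:\rspop(h)\le r_t\}$, and since $h^\ast\in V_t$ we have $\bar{R}_t(h^\ast)=0$. A bootstrap contradiction then closes the induction: if $h_t\notin V_t$, uniform convergence forces $\bar{R}_t(h_t)>r_t/2>0$, but $\hat{\ell}^t(h_t)=0$ means some rejected sample would have to carry positive $\lspop(h_t(x_s),c_s)$—contradicting the zero-loss lemma applied in a neighborhood of $V_t$, because the rejection decision depended only on the inductively controlled $h_{s-1}$ and is independent of $h_t$, so a continuity/path argument on the ERM rules out boundary crossings without triggering accepted-sample loss.

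Parts (b) and (c) are then quick. For (b), the closeness $\|h_T-h^\ast_T\|_\infty\le \phi(r_T)<b_T$ and Lemma \ref{lemma:identical} give $w^\ast(h_T(x))=w^\ast(h^\ast_T(x))=w^\ast(c)$ (the last equality by Assumption \ref{assu:bayes}.(1)) whenever $\nu_S(h^\ast_T(x))>2\phi(r_T)\le 2b_T$, producing zero SPO loss there; the complementary event has probability at most $\Psi(2b_T)$ and contributes at most $\omega_S(\mathcal{C})$ per instance. Part (c) is Lemma \ref{lemma:label_c} specialized to $\tilde{p}=0$. The main obstacle I expect is the circularity in (I2): the rejected-sample cancellation that keeps $h_t$ inside $V_t$ itself requires $h_t$ already to lie there. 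I resolve this via the bootstrap/continuity argument above, which exploits that $b_{s-1}$ depends only on $h_{s-1}$ (fixed at the time of rejection) and that any ERM escaping $V_t$ would have to incur strictly positive accepted-sample loss, contradicting the zero-valued $\hat{\ell}^t(h_t)$ obtained under separability.
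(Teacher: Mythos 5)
Your overall strategy matches the paper's: use separability (Proposition \ref{prop:separable_bayes}) to get $\rspop^*=\rspo^*=0$, show that every rejected sample carries zero SPO+ loss for any predictor sufficiently close to $\mathcal{H}^*_{\mathrm{SPO}+}$, conclude that $h_T$ is in effect an empirical risk minimizer over the \emph{full} i.i.d.\ sequence, and then invoke i.i.d.\ uniform convergence with the covering number $\hat{N_1}$. Your zero-loss-on-rejects lemma is essentially the paper's Claim ($\bar{A}$) step, and your route through Lemma \ref{lemma:identical} (applied to the pairs $(h(x_s),c_s)$ and $(h(x_s),2h(x_s)-c_s)$, both at distance $\|h(x_s)-c_s\|$ from $h(x_s)$) is a slightly more direct variant of the paper's comparison of $2h(x_T)-c_T$ with $2h^*_0(x_T)-c_T$; it works, modulo a strict-vs.-non-strict inequality at the boundary that the paper handles with an $\epsilon\to 0$ compactness argument. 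Parts (b) and (c) are handled as in Theorem \ref{uniform_margin}, which is also what the paper does.

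The genuine gap is in how you close the induction for (I2). You correctly identify the circularity — the cancellation $\bar{R}_t(h_t)=\hat{\ell}^t(h_t)=0$ requires $h_t$ to already lie in (a neighborhood of) $V_t$ — but your proposed resolution does not work. If $h_t\notin V_t$, the zero-loss lemma simply does not apply to $h_t$, so there is no contradiction in $h_t$ carrying positive loss on a rejected sample; and there is no ``continuity/path argument on the ERM'' to appeal to, since the argmin of $\hat\ell^t$ is a set-valued, generally discontinuous selection with no path connecting it to $V_t$. The fix, which is how the paper proceeds, is to strengthen the induction hypothesis from a statement about the single selected predictor $h_{t}$ to a statement about the entire set $H_t^0$ of predictors with zero empirical loss on the labeled samples: one shows $H_t^0=\bar H_t^0$ (zero loss on \emph{all} of the first $t$ samples). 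Then the nesting $H_t^0\subseteq H_{t-1}^0=\bar H_{t-1}^0$ lets you deduce, for any $h\in H_t^0$ (in particular $h_t$), that $h$ has zero loss on the first $t-1$ i.i.d.\ samples, hence $\rspop(h)\le r_{t-1}$ by uniform convergence, hence $\mathrm{Dist}_{\mathcal{H}^*_{\mathrm{SPO}+}}(h)\le\phi(r_{t-1})$ — and only \emph{then} does the zero-loss lemma apply to the sample rejected at step $t$. The needed closeness of $h_t$ is thus derived from the step-$(t-1)$ hypothesis rather than bootstrapped from the step-$t$ conclusion, which removes the circularity. Your invariant, being about $h_t$ alone, cannot support this because $h_t\neq h_{t-1}$ in general.
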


When using SPO+ in the separable case, Theorem \ref{thm:sporiskcompare} shows that the bound in part {\em (a)} is substantially improved compared to Theorem \ref{uniform_margin}. Specifically, the term $r_t$ in part {\em (a)} is determined by the i.i.d. covering number, rather than the sequential covering number as in Theorem \ref{uniform_margin}. This distinction shows that the excess SPO+ risk converges to zero at a rate of $\otilde(1 / \sqrt{T})$, aligning with the typical learning rate observed in supervised learning. Furthermore,  part {\em (c)} shows that for some $\delta$, the number of acquired labels $\bbE[n_T]$ after $T$ iterations is at most $ \sum_{t = 1}^T \Psi(2b_{t-1})$, which is smaller than $T$, the total number of acquired labels in naive supervised learning.  This result illustrates the advantage of active learning over supervised learning when using SPO+ under the separability condition. 

The key step in the proof of Theorem \ref{thm:sporiskcompare} is to show that $h_T$ achieves zero empirical SPO+ risk on $\{(x_i, c_i)\}_{i=1}^T$, including unlabeled samples, thereby acting as an empirical risk minimizer for SPO+. Therefore, the quantity $r_t$ can be determined by the i.i.d. covering number, which implies that this bound is the same as that of supervised learning --- underscoring the benefits of SPO+ under separability.

If we further assume the margin condition with parameter $\kappa$ under the separability condition, 
we can establish the rate of the sublinear label complexity for the SPO+ loss. Specifically, under the same setting 
as Theorem~\ref{thm:sporiskcompare}, if Prop.~\ref{prop:boundcover} holds, by part (a) in Theorem~\ref{thm:sporiskcompare}, 
we have $\rspop(h_T) - \rspop^* \le \otilde(T^{-1/2})$. 
We can further extend the results in Prop.~\ref{prop:sublinear_spo} to the separability case with hard rejection. 
In particular, we obtain the same order as part (b) in Prop.~\ref{prop:sublinear_spo}. 
This implies that if $\kappa > 4$, a finite number of acquired labels suffices to determine the optimal prediction model.

\section{Numerical Experiments}\label{sec:experiments}

In this section, we present the results of numerical experiments in which we empirically examine the performance of our proposed MBAL-SPO algorithm (Algorithm \ref{alg:margin-based}) under the SPO+ surrogate loss and some other tractable loss.  We use the shortest path problem and personalized pricing problem as our exemplary problem classes. For both problems, we use (sub)gradient descent to minimize the SPO+ loss function in the MBAL-SPO algorithm. We consider the soft-rejection version, and set $\tilde{p}\gets10^{-5}$ according to Theorem \ref{uniform_margin}. The norm $\|\cdot\|$ is set as the $\ell_2$ norm. In both problems, to calculate the distance to the degeneracy, we use the result of Theorem 8 in \cite{el2022generalization}, which was stated in Equation \eqref{equ:thm8}.

\subsection{Shortest Path Problem}\label{sec:shortest_path}

We first present the numerical results for the shortest path problem. We consider a $3\times3$ (later also a $5\times5$) grid network, where the goal is to go from the southwest corner to the northeast corner, and the edges only go north or east. In this case, the feasible region $S$ is composed of network flow constraints, and the cost vector $c$ encodes the cost of each edge. 

\paragraph{Data generation process.} Let us now describe the process used to generate the synthetic experimental data. The dimension of the cost vector $d$ is 12, corresponding to the number of edges in the $3\times3$ grid network. The number of features $p$ is set to 5. The number of distinct paths is 6. Given a coefficient matrix $B \in \mathbb{R}^{d\times p}$, the training data set $\{(x_i,c_i)\}_{i = 1}^n$ and the test data set $\{(\tilde x_i,\tilde c_i)\}_{i = 1}^{n_\text{test}}$ are generated according to the following model. 

1. First, we identify six vectors $\mu_j \in \mathbb{R}^p$, $j = 1,...,6$, such that the corresponding cost vector $B \mu_j$ is far from degeneracy, that is, the distance to the closest degenerate cost vector $\nu_S(B\mu_j)$ is greater than some threshold, and the optimal path under the cost vector $B \mu_j$ is the path $j$. 

2. Each feature vector $x_i \in \mathbb{R}^p$ is generated from a mixed distribution of six multivariate Gaussian distributions with equal weights. Each multivariate Gaussian distribution follows $N(\mu_j, \sigma_m^2 I_p)$, where the variance $\sigma_m^2$ is set as $1/9$.

3. Then, the cost vector $c_j$ is generated according to $c_j = \left[1 + (1 + b_j^T x_i / \sqrt{p})^{\mathrm{deg}} \right] \epsilon_j$, for $j = 1,...,d$, where $b_j$ is the $j^{\text{th}}$ row of the matrix $B$. The degree parameter $\mathrm{deg}$ is set as 1 in our setting and $\epsilon_j$ is a multiplicative noise term, which is generated independently from a uniform distribution $[1 - \bar{\epsilon}, 1 + \bar{\epsilon}]$. Here, $\bar{\epsilon}$ is called the noise level of the labels.

To determine the coefficient matrix $B$, we generate a random candidate matrix $\tilde{B}$ multiple times, whose entries follow the Bernoulli distribution (0.5), and pick the first $B$ such that $\mu_j$ exists in Step 1 for each $j = 1,...,6$. The size of the test data set is 1000 sample points. In the basic setting, we set the initial rejection quantile value $\tilde{q}\gets 0.5$. The length of the warm-up period is 10.

Figure~\ref{fig:experiments} presents our results for this experiment. The left plot in Figure~\ref{fig:experiments} shows the excess SPO risk of MBAL-SPO and supervised learning 
throughout the training process. The x-axis denotes the number of labeled samples (after the warm-up period), and the y-axis shows the logarithm of the excess SPO risk on the test set with 90\% confidence intervals. The results are based on 25 independent trials and each trial is based on independent draws from the training set. As expected, we see that the margin-based algorithm outperforms supervised learning as the number of labeled samples increases. In particular, given the same number of acquired labels, the margin-based approach achieves a substantially lower excess SPO risk compared to supervised learning.

\begin{figure}[ht]
\begin{center}
\centerline{\includegraphics[width=0.5\columnwidth]{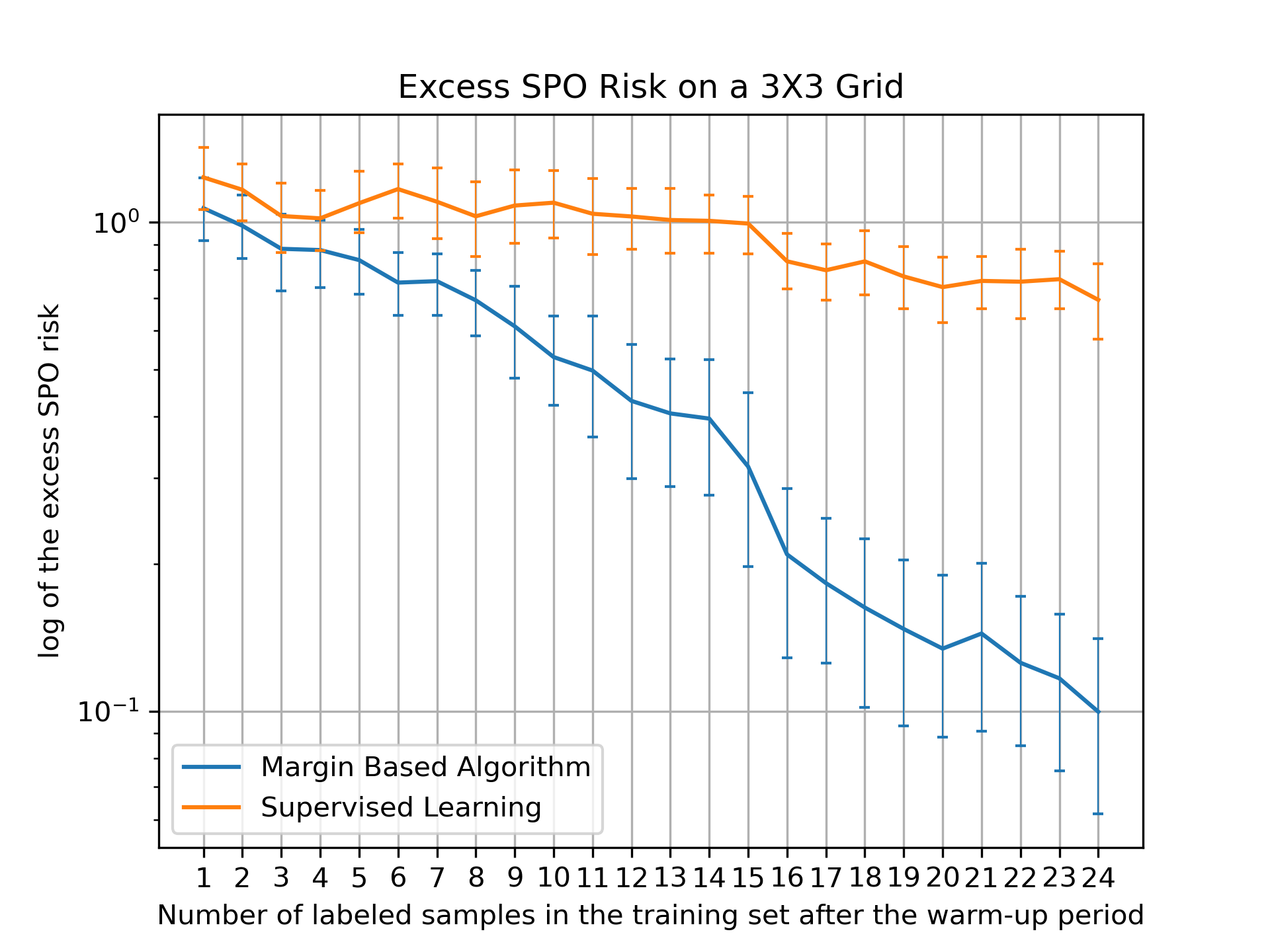}\includegraphics[width=0.5\columnwidth]{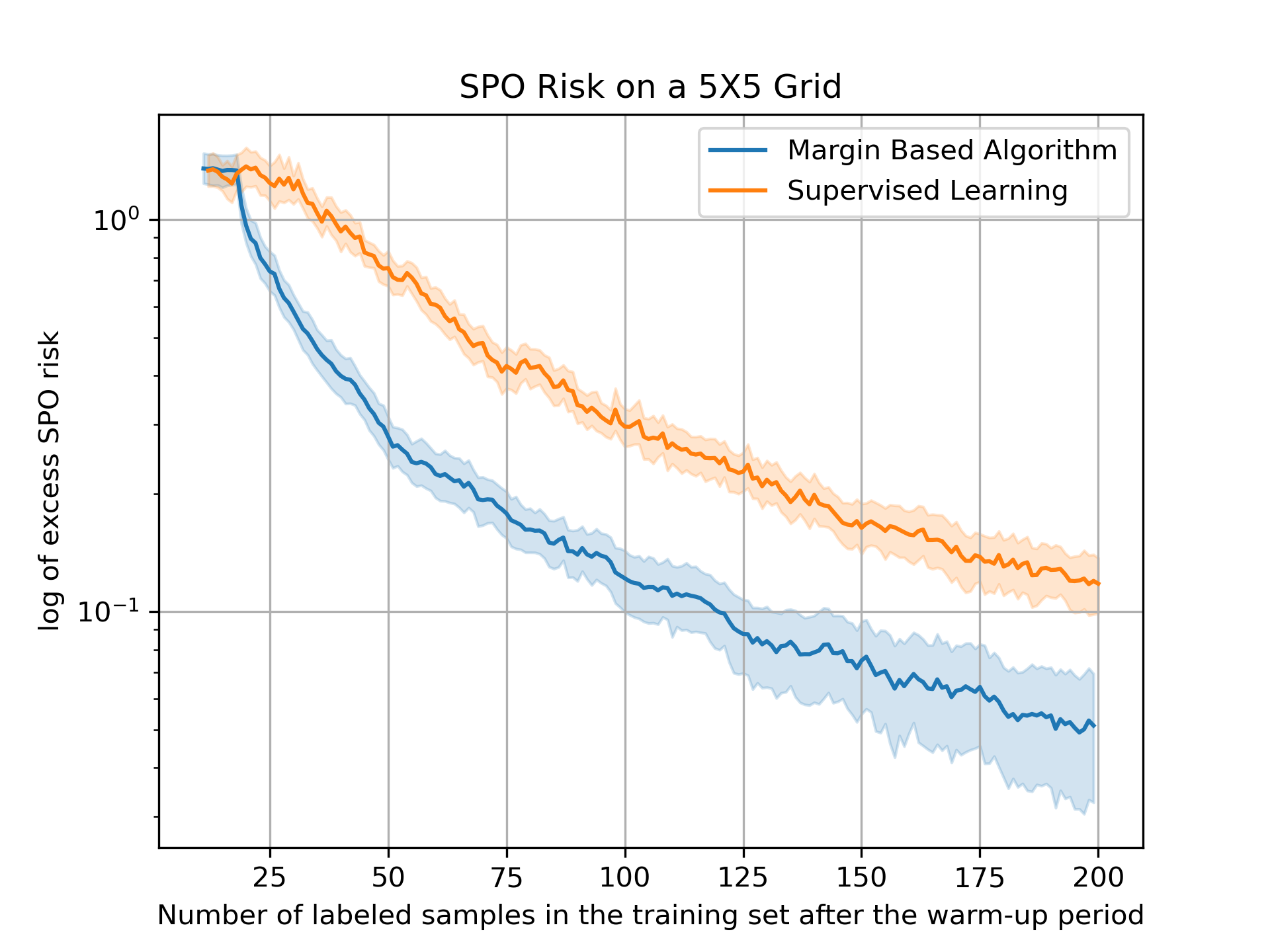}}
\caption{Risk on the test set during the training process in $3 \times 3$ grid, and $5 \times 5$ grid.}
\label{fig:experiments}
\end{center}
\vskip -0.2in
\end{figure}

In our MBAL, the running time of each iteration is the time for minimizing the surrogate loss, since calculating the distance to degeneracy from \eqref{equ:thm8} is very efficient and its running time is negligible. The time for minimizing the surrogate loss is the same as the supervised learning, since our algorithm does not add any additional steps. Thus, our MBAL is efficient in practice, and it usually takes less than 1 second for each iteration of MBAL in the 3-by-3 shortest path and personalized pricing problems. The total running time for selecting 25 labels is usually less than 30 seconds. To further examine the performance of the margin-based algorithm on a larger-scale problem, we conduct a numerical experiment in a $5\times 5$ grid network in the right plot of Figure \ref{fig:experiments}, again shown with an 90 \% confidence interval. We see that although both algorithms converge to the same optimal SPO risk level, the margin-based algorithm has a much faster learning rate than supervised learning and can achieve a lower SPO risk even after 200 labeled samples.

In  Appendix \ref{appendix:setting_parameters}, we further examine the impact of the initial rejection quantile, $\tilde{q}$, on the number of labels and the SPO risk during the training process, which demonstrates that the initial rejection quantile value does not significantly affect the empirical label complexity of our MBAL algorithm. In Appendix \ref{appendix:numerical_experiments_results}, we include more results in which we change the noise levels and variance of the features when generating the data. This verifies the advantages of our algorithms under various conditions.

\subsection{Personalized Pricing Problem}\label{sec:personalizedpricing}

In this section, we present numerical results for the personalized pricing problem. Suppose that we have three types of items, indexed by $j = 1,2,3$. We have three candidate prices for these three items, which are $ \$ 60, \$80$, and $\$90$. Therefore, in total, we have $3^3 = 27$ possible combinations of prices. Suppose that the dimension of the features of the customers is $p = 6$. When a customer is selected to survey, their answers will reveal the purchase probability for all three items at all possible prices. These purchase probabilities are generated on the basis of an exponential function of the form $\mathcal{O}(e^{-p})$. We add additional price constraints between products, such that the first item has the highest price, and the third item has the lowest price.  Please see the details in Appendix \ref{appendix:pricing}.

Because there are three items and three candidate prices, the dimension of the cost vector $d_j(p_i)$ is $9$. Therefore, our predictor $h(x)$ is a mapping from the feature space $\mathcal{X} \subseteq \bbR^6$ to the label space $\mathcal{C} \subseteq [0,1]^9$. We assume that the predictor is a linear function, so the coefficient of $h(x)$ is a $(6 + 1)\times 9$ matrix, including the intercept. Unlike the shortest path problem which can be solved efficiently, the personalized pricing problem is NP-hard in general due to the binary constraints. In our case, since the dimensions of products and prices are only three, we enumerate all the possible solutions to determine the prices with the highest revenue.

The test set performance is calculated on $1000$ samples. In MBAL-SPO, we set the initial rejection quantile $\tilde{q}$ as 0.4. The length of the warm-up period is $40$.  
The excess SPO risks of our MBAL-SPO and supervised learning on the test set are shown in Figure \ref{fig:pricing}. The x-axis represents the number of acquired labels including the 40 samples in the warm-up period. The error bars in Figure \ref{fig:pricing} represent 90\% confidence intervals. Notice that the demand function is in an exponential form but our hypothesis class is linear, so the hypothesis class is misspecified. The results in Figure \ref{fig:pricing} show that MBAL-SPO achieves a smaller excess SPO risk than supervised learning even when the hypothesis class is misspecified. 

\begin{figure}[ht]
 
\begin{center}
\centerline{\includegraphics[width=0.7\columnwidth]{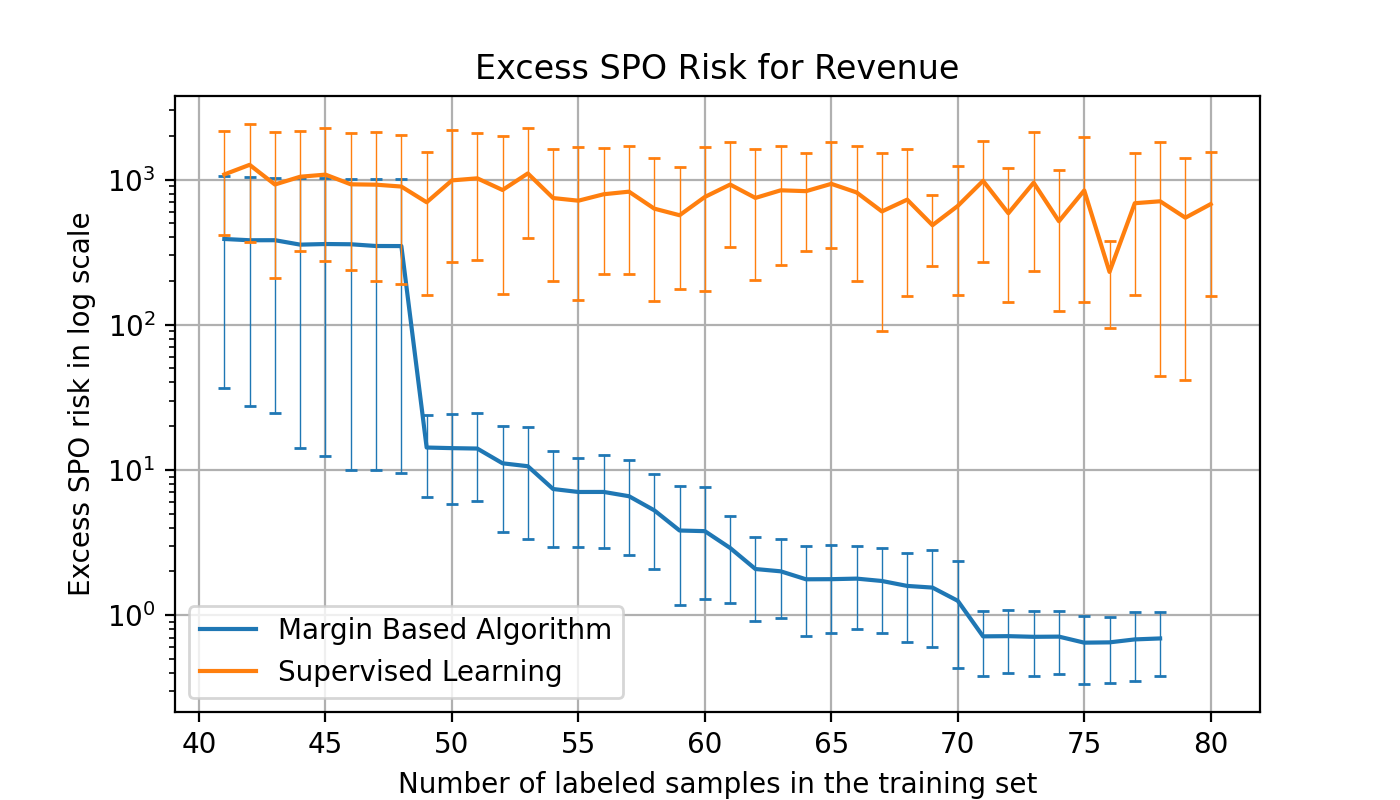}}
\caption{Excess test set risk during the training process in personalized pricing.}
\label{fig:pricing}
\end{center}
\vskip -0.2in
\end{figure}

\subsection{Comparisons of different surrogate loss functions}

The results in Sections \ref{sec:shortest_path} and \ref{sec:personalizedpricing} demonstrate empirically that our MBAL can reduce the SPO risk effectively compared to the supervised learning algorithm given the same size of training set when using the SPO+ loss as the surrogate loss. Since our Theorem \ref{uniform_margin} and Prop. \ref{proposition:margin-based} hold for general surrogate loss functions, in this section, we further provide the empirical results under different surrogate loss functions using the same synthetic data. We consider some tractable surrogate loss functions for regression problems, including squared loss, mean absolute loss (MAE), and Huber loss. By running our MBAL and supervised learning algorithm under these different surrogate losses for 25 trials, we calculate the SPO risk in the test set when the number of the acquired labeled samples is 24. Then, we calculate the ratio of the SPO risk between the supervised learning and the MBAL, as shown in Table \ref{tab:ratios}. If the ratio gets higher, it means the supervised learning has a larger SPO risk than the MBAL under the same sample sizes, which means better performance of MBAL. 

\begin{table}[ht]
\centering
\begin{tabular}{llcccc}
\toprule
\multicolumn{2}{c}{\textbf{Surrogate Loss Functions}} & \textbf{SPO+} & \textbf{Squared} & \textbf{MAE} & \textbf{Huber} \\ 
\midrule
\multirow{2}{*}{SPO Risk Ratio} 
    & Shortest Path          & 10.617 & 2.751 & 1.362 & 1.207 \\ 
    & Personalized Pricing   & 1781.844 & 1.303 & 1.827 & 7.071 \\
\bottomrule
\end{tabular}
\caption{Comparison of Surrogate Losses: SPO Risk Ratio of MBAL vs. Supervised Learning with the Same Training Set Size}
\label{tab:ratios}
\end{table}

The ratios in Table \ref{tab:ratios} are all larger than one, which implies the advantages of MBAL over supervised learning under these surrogate loss functions.
Table \ref{tab:ratios} further shows that for both the shortest path problem and the personalized pricing problem, the SPO+ loss has the largest ratio. This is consistent with the fact that SPO+ loss incorporates the margin structure and shares more similarity with the original SPO loss than the other surrogate losses. Thus, SPO+ loss still enjoys a larger benefit from the MBAL than the common surrogate losses that ignore the linear optimization structures. This benefit gets larger when the hypothesis class is misspecified, as shown in the personalized pricing problem where the hypothesis class is linear while the true demand model is nonlinear.  The details about the numerical experiments are provided in Appendix \ref{appendix:experiments}.

\section{Conclusions and Future Directions}\label{sec:future}

Our work develops the first active learning algorithms in the CLO setting. Specifically, we focus on the SPO loss function and its tractable surrogate loss functions, proposing a practical margin-based active learning algorithm (MBAL-SPO).
We present two versions of the MBAL-SPO algorithm and develop excess risk guarantees for both. Additionally, we derive upper bounds on the label complexity for each version and demonstrate that, under certain natural margin conditions, the label complexity of our algorithms is lower than that of the supervised learning approach. Numerical experiments further validate the practical effectiveness of the proposed algorithm. There are several intriguing future directions. One is to extend the results and algorithms to the case where the feasible region is a convex set. Another avenue is the incorporation of local properties, beyond the margin structure, into the active learning algorithm.  While our work focuses on stream-based active learning, it is also worthwhile to consider pool-based active learning, where all feature vectors are revealed at once before training, in the future.

\ACKNOWLEDGMENT{PG acknowledges the support of NSF AI Institute for Advances in Optimization Award 2112533.}

\bibliography{reference}
\bibliographystyle{informs2014}

\newpage
\renewcommand{\theHsection}{A\arabic{section}}
\begin{APPENDICES}
 
\section{Hard Rejection Extensions for General Surrogate Loss}\label{sec:mbal_label_complexity}

In the main body of the paper, we analyze the theoretical performance of the hard rejection when we minimize the SPO or SPO+ loss. We defer
the discussion of general surrogate loss for the hard rejection to this appendix, because considering the general surrogate loss for the hard rejection introduces both the theoretical and computational challenges. These challenges stem from the mismatch between the surrogate risk and the SPO risk, which implies that setting $\tilde{p} = 0$ naively will drop some critical information for the surrogate risk.

\subsection{MBAL-SPO with Hard Rejections}

In this section of the appendix, we develop excess risk bounds for the general surrogate loss, and present label complexity results, for MBAL-SPO with hard rejections. Our excess risk bounds for the surrogate risk hold for general feasible regions $S$. To further develop risk bounds for the SPO risk from the surrogate risk bound, we consider the case where $S$ is polyhedral.
In addition, we make the following assumption concerning the surrogate loss function, which states the uniqueness of the surrogate risk minimizer and a relaxation of H\"{o}lder continuity.
\begin{assumption}[Unique minimizer and H\"{o}lder-like property] \label{assumption:holder} 
There is a unique minimizer $h^*$ of the surrogate risk, i.e., the set $\mathcal{H}_\ell^*$ is a singleton, and there exists a constant $\eta>0$ such that the surrogate loss function $\ell$ satisfies 
\begin{equation*}
\left|\bbE[\ell(\hat{c},c) - \ell(h^*(x),c)|x]\right| ~\le~ \eta\|\hat{c} - h^*(x)\|^2 \text{ for all }  x\in \mathcal{X}, \text{ and } \hat{c} \in \mathcal{\hat{C}}.
\end{equation*}
\end{assumption}
It is easy to verify that the common squared loss satisfies Assumption \ref{assumption:holder} with $\eta = 1$ when the hypothesis class is well-specified. In Lemma \ref{lemma:holder} in Appendix \ref{sec:phi}, we further show that the SPO+ loss satisfies Assumption \ref{assumption:holder} under some noise conditions.

Theorem \ref{uniform_margin:spop} is our main theorem concerning MBAL-SPO with hard rejections and with general surrogate losses satisfying Assumption \ref{assumption:holder}. Theorem \ref{uniform_margin:spop} presents bounds on the excess surrogate and SPO risks as well as the expected label complexity after $T$ iterations.

\begin{algorithm}[tb]
\caption{MBAL-SPO (MBAL-SPO) for general surrogate loss under hard rejection}\label{alg:margin-based_hard_rejection}
\begin{algorithmic}[1]
\STATE \textbf{Input:} A sequence of cut-off values $\{b_t\}$, a sequence $\{r_t\}$, and a constant $\vartheta$.

\STATE 
Initialize the working sets $W_0 \gets \emptyset$, $\tilde{W}_0\gets \emptyset$ and $H_0 \gets \mathcal{H}$.

\STATE Arbitrarily pick one $h_0 \in \mathcal{H}$, $n_0 \gets 0$. 

\FOR{$t$ from $1,2,..., T$}

    \STATE Draw one sample $x_t$ from $\mathcal{D}_{\mathcal{X}}$. 
    \IF{$\nu_S(h_{t-1}(x_t)) \ge b_{t-1}$}
        \STATE Reject $x_t$. Set $n_t \gets n_{t-1}$.
    \ELSE
        \STATE Acquire a ``true'' label $c_t$ of $x_t$.
    \STATE Update working set $W_t \gets W_{t-1} \cup \{(x_t,c_t)\}$. Set $n_t \gets n_{t-1} + 1$.
        \ENDIF
    \STATE Let $\hat{\ell}_{}^t(h) \gets \frac{1}{t} \sum_{(x,c) \in W_t} \ell_{}(h(x), c) $.

    \STATE Update $h_t \gets \arg\min_{h \in H_{t-1}} \hat{\ell}_{}^t (h)$ and $\hat{\ell}_{}^{t, *} \gets \min_{h \in H_{t-1}} \hat{\ell}_{}^t (h)$.
   
    \STATE Update the confidence set of the predictor $H_t$ by $H_t \gets \{h\in H_{t-1}: \hat{\ell}_{}^{t}(h) \le \hat{\ell}_{}^{t,*} + r_{t} +  \frac{ \vartheta}{t}\sum_{i=0}^{t-1} b_i^2 \}$.\label{algline}

\ENDFOR
\STATE \textbf{Return} $h_T$.

\end{algorithmic}
\end{algorithm}

\begin{theorem}[General surrogate loss, hard rejection]\label{uniform_margin:spop} 
Suppose that Assumptions \ref{assumption:1} and \ref{assumption:holder} hold, and that Algorithm \ref{alg:margin-based} sets $\tilde{p} \gets 0$ and updates the set of predictors according to the optional update rule in Line 20 with $\vartheta \gets \eta$. Furthermore in Algorithm \ref{alg:margin-based}, for a given $\delta \in (0, 1]$, let $r_0  \geq \omega_\ell(\hat{\mathcal{C}},\mathcal{C})$, $r_t \gets 2\omega_\ell(\hat{\mathcal{C}}, \mathcal{C})\left[\sqrt{\frac{4 \ln(2 t N_1(\omega_\ell(\hat{\mathcal{C}},\mathcal{C})/t, \ell^{\mathtt{rew}} \circ \mathcal{H},t) / \delta)}{t}} + \frac{2}{t}\right]$ for $t \geq 1$, $b_0 \gets \max\{\phi(r_0), \sqrt{r_0/\eta}\}$, and $b_{t} \gets \phi(2r_{t} +  \frac{2\eta}{t}\sum_{i=0}^{t-1} b_i^2 )$ for $t \geq 1$.
Then, the following guarantees hold simultaneously with probability at least $1-\delta$ for all $T \geq 1$:
\begin{itemize}
    \item (a) The excess surrogate risk satisfies $R_\ell(h_T) - R_\ell^* \le r_{T}  +  \frac{ \eta}{T}\sum_{t=0}^{T-1} b_t^2 $,
    \item (b)  If the feasible region $S$ is polyhedral, then the excess SPO risk satisfies $\rspo(h_T) - \rspo^* \le  \Psi(2b_{T})\omega_S(\mathcal{C})$,
    \item (c) The expectation of the number of labels acquired, $\bbE[n_T]$, deterministically satisfies $\bbE[n_T] \leq \sum_{t = 1}^T \Psi(2b_{t-1}) + \delta T$. 
\end{itemize}
\end{theorem}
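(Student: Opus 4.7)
The strategy is to establish parts (a), (b), (c) simultaneously by induction on $t$, maintaining the following high-probability invariants: (I) $h^* \in H_t$, (II) every $h \in H_t$ satisfies $R_\ell(h) - R_\ell^* \le 2r_t + \frac{2\eta}{t}\sum_{i=0}^{t-1} b_i^2$, and hence (III) $\|h - h^*\|_\infty \le b_t$ via Assumption \ref{assumption:1} together with the definition $b_t = \phi(2r_t + \frac{2\eta}{t}\sum b_i^2)$. The base case $t=0$ is handled by $r_0 \ge \omega_\ell(\hat{\mathcal{C}},\mathcal{C})$ (making (II) trivial) and the initialization $b_0 \ge \phi(r_0)$. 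Given these invariants at $t = T$, part (a) follows as a sharper bound for the specific minimizer $h_T$; part (b) reduces to a triangle-inequality argument combined with Lemma \ref{lemma:identical}; and part (c) is a direct application of Lemma \ref{lemma:label_c}.

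The heart of the proof is the inductive step, driven by a bias/concentration decomposition. For each iteration $s$, with $\tilde p = 0$ the conditional expectation of the per-sample loss difference is
\begin{align*}
\bbE\bigl[\mathbb{I}\{d_s^M = 1\}(\ell(h(x_s),c_s) - \ell(h^*(x_s),c_s)) \bigm| \mathcal{F}_{s-1}\bigr] = (R_\ell(h) - R_\ell(h^*)) - \mathrm{Bias}_s(h),
\end{align*}
where the bias is $\mathrm{Bias}_s(h) := \bbE[\mathbb{I}\{d_s^M = 0\}(\ell(h(x_s),c_s) - \ell(h^*(x_s),c_s)) \mid \mathcal{F}_{s-1}]$. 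Applying Assumption \ref{assumption:holder} pointwise in $x$ and invoking the inductive invariant (III) at step $s-1$ (so $\|h - h^*\|_\infty \le b_{s-1}$ for all $h \in H_{s-1}$) yields $|\mathrm{Bias}_s(h)| \le \eta b_{s-1}^2$. Averaging in $s$ produces the cumulative bias $\frac{\eta}{t}\sum_{i=0}^{t-1} b_i^2$, while Proposition \ref{prop:noniid} applied to the reweighted loss class provides a uniform concentration slack $r_t$, so that with high probability
\begin{align*}
\bigl|(R_\ell(h) - R_\ell(h^*)) - (\hat\ell^t(h) - \hat\ell^t(h^*))\bigr| \le r_t + \tfrac{\eta}{t}\sum_{i=0}^{t-1} b_i^2 \quad \text{for all } h \in H_{t-1}.
\end{align*}
Since $\hat\ell^t(h_t) \le \hat\ell^t(h^*)$ and $R_\ell(h_t) - R_\ell(h^*) \ge 0$, this simultaneously delivers $R_\ell(h_T) - R_\ell^* \le r_T + \frac{\eta}{T}\sum b_i^2$ (part (a)) and the reverse control $\hat\ell^t(h^*) - \hat\ell^t(h_t) \le r_t + \frac{\eta}{t}\sum b_i^2$ needed to certify $h^* \in H_t$ (invariant (I)). Chaining with the $H_t$-membership slack then gives (II), and Assumption \ref{assumption:1} yields (III).

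Parts (b) and (c) follow cleanly from invariant (III) at $T$ and at each step $t-1$, respectively. For (b), fix $x$ with $\nu_S(h^*(x)) > 2b_T$: by (III) and the triangle inequality, $\nu_S(h_T(x)) \ge \nu_S(h^*(x)) - \|h_T(x) - h^*(x)\| > b_T \ge \|h_T(x) - h^*(x)\|$, so Lemma \ref{lemma:identical} gives $w^*(h_T(x)) = w^*(h^*(x))$ and the pointwise SPO loss vanishes; the remaining $x$ contribute at most $\omega_S(\mathcal{C})$ each and have probability at most $\Psi(2b_T)$. For (c), the same argument shows that whenever $\nu_S(h^*(x_t)) > 2b_{t-1}$ the sample is necessarily hard-rejected, so on the high-probability event the conditional probability of labeling $x_t$ is at most $\Psi(2b_{t-1})$; on the complementary event (probability at most $\delta$ per step), we trivially bound the indicator by one, giving the additive $\delta T$ term.

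The main technical obstacle is the recursive coupling between the invariants and the definitions of $H_t$ and $b_t$: bounding the bias at step $t$ requires all of $H_0, \ldots, H_{t-1}$ to lie in balls of the prescribed radii around $h^*$, which is the conclusion we are proving. This circularity is what dictates the specific shape of the slack $r_t + \frac{\eta}{t}\sum b_i^2$ in the definition of $H_t$ in Line 14 of Algorithm \ref{alg:margin-based_hard_rejection}, and the factor of two in the definition of $b_t$ (needed to convert the $H_t$-membership slack into a pointwise distance via $\phi$). A secondary subtlety is that $H_{t-1}$ is data-dependent, so Proposition \ref{prop:noniid} must be applied uniformly over the full class $\mathcal{H}$ (with the sequential covering number $N_1$) and then restricted to $H_{t-1}$ inside the induction; union-bounding the resulting failure probabilities across $t = 1, \ldots, T$ (and over all $T$ simultaneously) is what accounts for the explicit $\ln(2t N_1(\cdot)/\delta)$ factor baked into $r_t$.
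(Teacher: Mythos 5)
Your proposal is correct and follows essentially the same route as the paper: your bias/concentration decomposition is the paper's Lemma \ref{lemma:decompose} (with $\mathrm{Bias}_s(h)$ equal to $\ell^{\mathrm{f}}_s(h)-\ell^{\mathrm{f}}_s(h^*)$), your inductive invariants (I)--(III) are exactly what the paper's Lemma \ref{lemma:excessspop} establishes by strong induction, and the assembly via Proposition \ref{prop:noniid} over the full class $\mathcal{H}$ with a union bound in $t$, plus the Lemma \ref{lemma:identical} and Lemma \ref{lemma:label_c} arguments for parts (b) and (c), matches the paper's proof.
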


Compared to the soft rejection case in Theorem \ref{uniform_margin}, the positive $\tilde{p}$ n Theorem \ref{uniform_margin} will lead to a larger label complexity than Theorem \ref{uniform_margin:spop}.
On the other hand, when $\tilde{p}$ is positive, we do not have to construct the confidence set $H_t$ of the predictors at each iteration. In other words, $H_t$ can be set as $\mathcal{H}$, for all $t$ as in Theorem \ref{thm:sporiskcompare}. Thus, in the soft rejection case, we do not have to consider $t$ additional constraints when minimizing the empirical re-weighted risk, which will reduce the computational complexity significantly.

\begin{remark}[Updates of $H_t$]
In Theorem \ref{uniform_margin:spop}, the set of predictors is updated according to Line 20 in Algorithm \ref{alg:margin-based}. This is a technical requirement for the convergence when setting $\tilde{p} = 0$. This update process means that $h_t \in H_{t-1} \subseteq H_{t-2} ... \subseteq H_{0} = \mathcal{H}$. By constructing these shrinking sets $H_t$ of predictors, we are able to utilize the information from previous iterations. Particularly, Lemma \ref{lemma:excessspop} below shows that these shrinking sets $H_{t-1}$ always contain the true optimal predictor $h^*$ under certain conditions.
\Halmos
\end{remark}

\paragraph{{\bf Auxiliary Results for the Proof of Theorem \ref{uniform_margin:spop}.}}

To achieve the risk bound in part {\em (a)} of Theorem \ref{uniform_margin:spop}, we decompose the excess surrogate risk into three parts. First, we denote the re-weighted surrogate risk for the features that are far away from degeneracy by $\ell^{\text{f}}_t(h)$, defined by:
\begin{equation*}
\ell^{\mathrm{f}}_t(h) := \bbE[\ell(h; z_t)\mathbb{I}\{\nu_S(h_{t-1}(x_t)) \ge b_{t-1}\}|\mathcal{F}_{t-1}] = \bbE[\ell(h; z_t)(1 - d^M_t)|\mathcal{F}_{t-1}],
\end{equation*}
where we use $\ell(h; z_t)$ to denote $\ell(h(x_t),c_t)$ and the expectation above is with respect to $z_t$.
Since $x_t$ and $c_t$ are i.i.d. random variables, and only $d^M_t$ depends on $\mathcal{F}_{t-1}$, $\ell^{\text{f}}_t(h)$ can further be written as
$\ell^{\text{f}}_t(h) = \bbE[\ell(h(x_t),c_t) |d^M_t=0]\mathbb{P}(d^M_t = 0 | \mathcal{F}_{t-1})$.
Note also that, since $\tilde{p}=0$, the re-weighted loss function can be written as $\ell^{\mathtt{rew}}(h; z_t) = \ell(h(x_t),c_t) d^M_t = \ell(h(x_t),c_t) \mathbb{I}\{\nu_S(h_{t-1}(x_t)) < b_{t-1}\}$, for a given $h \in \mathcal{H}$.
Next, for given $h \in \mathcal{H}$ and $h^\ast \in \mathcal{H}^\ast_\ell$, we denote the discrepancy between the conditional expectation and the realized excess re-weighted loss of predictor $h$ at time $t$ by $Z^{\mathtt{t}}_h$, i.e., $Z^{\mathtt{t}}_h := \bbE[\ell^{\mathtt{rew}}(h; z_t) - \ell^{\mathtt{rew}}(h^*; z_t)| \mathcal{F}_{t-1}] -  (\ell^{\mathtt{rew}}(h; z_t) - \ell^{\mathtt{rew}}(h^*; z_t))$. Lemma \ref{lemma:decompose} shows that the excess surrogate risk can be decomposed into three parts.

\begin{lemma}[Decomposition of the excess surrogate risk]\label{lemma:decompose}
In the case of hard rejections, i.e., $\tilde{p} \gets 0$ in Algorithm \ref{alg:margin-based}, for any given $h^\ast \in \mathcal{H}^\ast_\ell$ and $T\ge1$, the excess surrogate risk of any predictor $h\in\mathcal{H}$ can be decomposed as follows:
\begin{align*}
    R_{\ell}(h) - R_{\ell}(h^\ast) ~=~ \frac{1}{T}\sum_{t = 1}^T \left(\ell^{\mathrm{f}}_t(h) - \ell^{\mathrm{f}}_t(h^*)\right) + \frac{1}{T} \sum_{t = 1}^TZ^{\mathtt{t}}_h + \frac{1}{T}\sum_{t = 1}^T\left(\ell^{\mathtt{rew}}(h; z_t) - \ell^{\mathtt{rew}}(h^*; z_t)\right).
\end{align*}
\end{lemma}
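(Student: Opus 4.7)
The plan is to verify the identity pointwise in $t$ (before averaging) by exploiting the fact that the data stream $(x_t, c_t)$ is i.i.d.\ from $\mathcal{D}$ and hence independent of the filtration $\mathcal{F}_{t-1}$, so that the true risk can be realized as a conditional expectation at every iteration. Concretely, I would first observe that since $(x_t,c_t)$ is independent of $\mathcal{F}_{t-1}$, for every $t\ge 1$ and every fixed $h, h^\ast \in \mathcal{H}$,
\begin{equation*}
R_\ell(h) - R_\ell(h^\ast) \;=\; \mathbb{E}\bigl[\ell(h(x_t),c_t) - \ell(h^\ast(x_t),c_t)\,\big|\,\mathcal{F}_{t-1}\bigr].
\end{equation*}

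Next, I would insert the trivial identity $1 = d^M_t + (1 - d^M_t)$ inside this conditional expectation and split it into two pieces. The $(1 - d^M_t)$ piece is, by the definition given just before the lemma, exactly $\ell^{\mathrm{f}}_t(h) - \ell^{\mathrm{f}}_t(h^\ast)$. The $d^M_t$ piece is $\mathbb{E}[\ell^{\mathtt{rew}}(h;z_t) - \ell^{\mathtt{rew}}(h^\ast;z_t)\,|\,\mathcal{F}_{t-1}]$, because under hard rejection $\tilde p = 0$, the reweighted loss reduces to $\ell^{\mathtt{rew}}(h;z_t) = \ell(h(x_t),c_t)\,d^M_t$. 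Combining these two observations gives the pointwise-in-$t$ decomposition
\begin{equation*}
R_\ell(h) - R_\ell(h^\ast) \;=\; \bigl(\ell^{\mathrm{f}}_t(h) - \ell^{\mathrm{f}}_t(h^\ast)\bigr) + \mathbb{E}\bigl[\ell^{\mathtt{rew}}(h;z_t) - \ell^{\mathtt{rew}}(h^\ast;z_t)\,\big|\,\mathcal{F}_{t-1}\bigr].
\end{equation*}

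Then I would substitute the definition $Z^{\mathtt{t}}_h = \mathbb{E}[\ell^{\mathtt{rew}}(h;z_t) - \ell^{\mathtt{rew}}(h^\ast;z_t)\,|\,\mathcal{F}_{t-1}] - (\ell^{\mathtt{rew}}(h;z_t) - \ell^{\mathtt{rew}}(h^\ast;z_t))$ to rewrite the conditional expectation as $Z^{\mathtt{t}}_h$ plus the realized reweighted excess loss. Finally, averaging the resulting identity over $t = 1,\dots,T$ and noticing that the left-hand side is independent of $t$ yields exactly the three-term decomposition claimed in Lemma \ref{lemma:decompose}.

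There is no significant obstacle here; the result is essentially algebra plus one application of the tower property. The only point that deserves care is checking that the independence of $(x_t,c_t)$ from $\mathcal{F}_{t-1}$ is really what allows $R_\ell(h) - R_\ell(h^\ast)$ to be expressed as a conditional expectation given $\mathcal{F}_{t-1}$ (so that the two subsequent conditional-expectation rearrangements are legitimate), and that $d^M_t$, being determined by $x_t$ and the $\mathcal{F}_{t-1}$-measurable predictor $h_{t-1}$, sits properly inside this conditional expectation in both splits.
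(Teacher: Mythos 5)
Your proposal is correct and follows essentially the same route as the paper's proof: exploit independence of $(x_t,c_t)$ from $\mathcal{F}_{t-1}$ to write the risk as a conditional expectation, split via $1 = d^M_t + (1-d^M_t)$ into the far-from-degeneracy term and the reweighted term, insert the definition of $Z^{\mathtt{t}}_h$, and average over $t$. The only cosmetic difference is that the paper performs the split for $h$ and $h^\ast$ separately before differencing, whereas you difference first; this changes nothing.
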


\begin{proof}{\bfseries Proof of Lemma \ref{lemma:decompose}}
Let $t \in \{1, \ldots, T\}$ be fixed. Since $\tilde{p}=0$, the re-weighted loss function can be written as $\ell^{\mathtt{rew}}(h; z_t) = \ell(h(x_t),c_t) d^M_t = \ell(h(x_t),c_t) \mathbb{I}\{\nu_S(h_{t-1}(x_t)) < b_{t-1}\}$, where $h$ refers to a generic $h \in \mathcal{H}$ throughout.
Recall that $\ell(h; z_t)$ denotes $\ell(h(x_t),c_t)$ and notice that we have the following simple decomposition:
\begin{equation*}
\ell(h; z_t) ~=~ \ell(h; z_t)(1 - d^M_t) + \ell(h; z_t)d^M_t, 
\end{equation*}
Hence, by the definition of $\ell^{\text{f}}_t(h)$, we have
\begin{equation*}
\bbE[\ell(h; z_t) | \mathcal{F}_{t-1}] ~=~ \ell^{\text{f}}_t(h) + \bbE[\ell(h; z_t)d^M_t| \mathcal{F}_{t-1}] ~=~ \ell^{\text{f}}_t(h) + \bbE[\ell^{\mathtt{rew}}(h; z_t) | \mathcal{F}_{t-1}].
\end{equation*}
Since $(x_1,c_t), \ldots, (x_T,c_T)$ are i.i.d. random variables following distribution $\mathcal{D}$, $(x_t,c_t)$ is independent of $\calF_{t-1}$ and hence
\begin{equation}\label{equ:spo_weighted}
R_{\ell}(h) = \bbE[\ell(h; z_t)] = \bbE[\ell(h; z_t) | \mathcal{F}_{t-1}] = \ell^{\text{f}}_t(h) + \bbE[\ell^{\mathtt{rew}}(h; z_t) | \mathcal{F}_{t-1}].
\end{equation}
Consider \eqref{equ:spo_weighted} applied to both $h \in \mathcal{H}$ and $h^\ast \in \mathcal{H_\ell^\ast}$ and averaged over $t \in \{1, \ldots, T\}$ to yield:
\begin{equation}\label{equ:spo_weighted_averaged}
R_{\ell}(h) - R_{\ell}(h^\ast) ~=~ \frac{1}{T}\sum_{t = 1}^T \left(\ell^{\text{f}}_t(h) - \ell^{\text{f}}_t(h^*)\right) + \frac{1}{T}\sum_{t = 1}^T \left(\bbE[\ell^{\mathtt{rew}}(h; z_t) | \mathcal{F}_{t-1}] - \bbE[\ell^{\mathtt{rew}}(h^*; z_t) | \mathcal{F}_{t-1}]\right)
\end{equation}

Thus, by the definition of $Z^{\mathtt{t}}_h$, \eqref{equ:spo_weighted_averaged} is equivalently written as:
\begin{equation}\label{equ:rspoexc}
R_{\ell}(h) - R_{\ell}(h^\ast) ~=~ \frac{1}{T}\sum_{t = 1}^T \left(\ell^{\text{f}}_t(h) - \ell^{\text{f}}_t(h^*)\right) + \frac{1}{T} \sum_{t = 1}^TZ^{\mathtt{t}}_h + \frac{1}{T}\sum_{t = 1}^T\left(\ell^{\mathtt{rew}}(h; z_t) - \ell^{\mathtt{rew}}(h^*; z_t)\right).
\end{equation}
\hfill \Halmos
\end{proof}

The first part in Lemma \ref{lemma:decompose} is the averaged excess surrogate risk for the hard rejected features at each iteration. Lemma \ref{lemma:excessspop} below further shows that $\left|\ell^{\text{f}}_t(h) - \ell^{\text{f}}_t(h^*)\right|$ is close to zero when $h \in H_{T-1}$.

\begin{lemma}\label{lemma:excessspop}
Suppose that Assumptions \ref{assumption:1} and \ref{assumption:holder} hold where $h^*$ denotes the unique minimizer of the surrogate risk, and that Algorithm \ref{alg:margin-based} sets $\tilde{p} \gets 0$ and updates the set of predictors according to the optional update rule in Line 20 with $\vartheta \gets \eta$. Furthermore, suppose that that $r_0  \geq \omega_\ell(\hat{\mathcal{C}},\mathcal{C})$, $r_t \ge \sup_{h \in \mathcal{H}} \left\{\left|\frac{1}{t} \sum_{i = 1}^tZ^{\mathtt{i}}_h  \right|\right\}$ for $t \geq 1$, $b_0 \gets \max\{\phi(r_0), \sqrt{r_0/\eta}\}$, and $b_{t} \gets \phi(2r_{t} +  \frac{2\eta}{t}\sum_{i=0}^{t-1} b_i^2 )$ for $t \geq 1$.
Then, for all $t \geq 1$, it holds that {\em (a)} $h^* \in H_{t-1}$, and {\em (b)} $\sup_{h \in H_{t-1}}\left\{\left|\ell^{\mathrm{f}}_t(h) - \ell^{\mathrm{f}}_t(h^*)\right|\right\} \le \eta b_{t-1}^2$.
\end{lemma}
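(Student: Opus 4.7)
The plan is to prove (a) and (b) jointly by induction on $t \geq 1$, exploiting the nested structure $H_t \subseteq H_{t-1} \subseteq \cdots \subseteq H_0 = \mathcal{H}$ built into the confidence-set update rule. The base case $t = 1$ is immediate: $h^* \in H_0 = \mathcal{H}$ yields (a), while for any $h \in \mathcal{H}$ the loss is bounded by $\omega_\ell(\hat{\mathcal{C}}, \mathcal{C})$, so $|\ell_1^f(h) - \ell_1^f(h^*)| \leq \omega_\ell(\hat{\mathcal{C}},\mathcal{C}) \leq r_0 \leq \eta b_0^2$ by the choices $r_0 \geq \omega_\ell(\hat{\mathcal{C}},\mathcal{C})$ and $b_0 \geq \sqrt{r_0/\eta}$, yielding (b).

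For the inductive step, suppose (a) and (b) hold for $s = 1, \ldots, t$; I would then establish them at $s = t+1$. To verify $h^* \in H_t$, apply Lemma~\ref{lemma:decompose} with $h = h_t$ and rearrange:
\begin{equation*}
\hat{\ell}^t(h^*) - \hat{\ell}^t(h_t) = -\bigl(R_\ell(h_t) - R_\ell(h^*)\bigr) + \frac{1}{t}\sum_{s=1}^t \bigl(\ell_s^f(h_t) - \ell_s^f(h^*)\bigr) + \frac{1}{t}\sum_{s=1}^t Z_{h_t}^s.
\end{equation*}
Since $R_\ell(h_t) \geq R_\ell(h^*)$, since $h_t \in H_{t-1} \subseteq H_{s-1}$ for every $s \leq t$ by nesting, and since the inductive (b) gives $|\ell_s^f(h_t) - \ell_s^f(h^*)| \leq \eta b_{s-1}^2$, combining with the assumed uniform bound $\sup_{h \in \mathcal{H}} |\frac{1}{t}\sum_{s=1}^t Z_h^s| \leq r_t$ delivers $\hat{\ell}^t(h^*) - \hat{\ell}^{t,*} \leq r_t + \frac{\eta}{t}\sum_{i=0}^{t-1} b_i^2$, which is exactly the defining inequality of $H_t$.

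To prove (b) at $s = t+1$, fix any $h \in H_t$. Its defining inequality together with $\hat{\ell}^{t,*} \leq \hat{\ell}^t(h^*)$ (from the just-proven (a)) yields $\hat{\ell}^t(h) - \hat{\ell}^t(h^*) \leq r_t + \frac{\eta}{t}\sum_{i=0}^{t-1} b_i^2$. Applying Lemma~\ref{lemma:decompose} to $h$ and bounding $\ell_s^f$ and the $Z$-average as before produces $R_\ell(h) - R_\ell(h^*) \leq 2 r_t + \frac{2\eta}{t}\sum_{i=0}^{t-1} b_i^2$. Assumption~\ref{assumption:upper-bound-pointwise} then yields $\|h - h^*\|_\infty \leq \phi\bigl(2r_t + \frac{2\eta}{t}\sum_{i=0}^{t-1} b_i^2\bigr) = b_t$, and conditioning on $x_{t+1}$ inside the expectation defining $\ell_{t+1}^f(h) - \ell_{t+1}^f(h^*)$ and invoking Assumption~\ref{assumption:holder} gives $|\ell_{t+1}^f(h) - \ell_{t+1}^f(h^*)| \leq \eta\|h - h^*\|_\infty^2 \leq \eta b_t^2$.

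The main obstacle will be orchestrating the mutual feedback between (a) and (b): both induction steps require a bound on $\ell_s^f$ at every past time $s \leq t$, not just at the current step, and this is only meaningful because every relevant predictor --- $h_t$ and each element of $H_t$ --- lies in the full nested chain back to $H_0$. A secondary care point is that Assumption~\ref{assumption:holder} is a pointwise-in-$x$ bound on $\bbE[\ell(\hat{c},c) - \ell(h^*(x),c) \mid x]$, so one must condition on $x_{t+1}$ before invoking the $\|h - h^*\|_\infty^2$ bound; the indicator $\mathbb{I}\{\nu_S(h_t(x_{t+1})) \geq b_t\}$ defining $\ell_{t+1}^f$ drops out under this bound, making the estimate independent of the realization of the margin threshold.
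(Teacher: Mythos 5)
Your proposal is correct and follows essentially the same route as the paper's proof: joint strong induction on (a) and (b), using Lemma~\ref{lemma:decompose} with the nesting $H_t \subseteq H_{s-1}$ to place $h^*$ in $H_t$, and then the confidence-set inequality plus Assumptions~\ref{assumption:upper-bound-pointwise} and~\ref{assumption:holder} to get the $\eta b_t^2$ bound on the far-from-degeneracy risk gap. The only difference is cosmetic indexing and the order in which you invoke the $\phi$ and H\"{o}lder-like bounds within part (b).
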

With Lemma \ref{lemma:excessspop}, we can appropriately bound the first average of terms in Lemma \ref{lemma:decompose}, involving the expected surrogate risk when far from degeneracy. Thus, Lemmas \ref{lemma:decompose} and \ref{lemma:excessspop} enable us to prove the excess surrogate risk bound in part {\em (a)}. The proofs of the remaining parts follow by translating the excess surrogate risk bound to guarantees on the excess SPO risk and the label complexity.

\begin{proof}{\bfseries Proof of Lemma \ref{lemma:excessspop}}
The proof is by strong induction. For the base case of $t = 1$, part {\em (a)} follows since $H_0 = \mathcal{H}$ and part {\em (b)} follows since $b_0 \geq \sqrt{r_0/ \eta} \ge  \sqrt{\omega_\ell(\hat{\mathcal{C}},\mathcal{C})/\eta}$, and thus $\sup_{h \in H_{0}}\left\{\left|\ell^{\text{f}}_{1}(h) - \ell^{\text{f}}_{1}(h^*)\right|\right\} \le \omega_{\ell}(\mathcal{\hat{C}},\mathcal{C}) \le \eta b_{0}^2$.

Now, consider $t \geq 2$ and assume that parts {\em (a)} and {\em (b)} hold for all $\tilde{t} \in \{1, \ldots, t-1\}$. Namely, for all $\tilde{t} \in \{1, \ldots, t-1\}$, the following two conditions hold: {\em (a)} $h^* \in H_{\tilde{t}-1}$, and {\em (b)} $\sup_{h \in H_{\tilde{t}-1}}\left\{\left|\ell^{\text{f}}_{\tilde{t}}(h) - \ell^{\text{f}}_{\tilde{t}}(h^*)\right|\right\} \le \eta b_{\tilde{t}-1}^2$. Then, our goal is to show that the two claims hold for $t$.

First, we prove {\em (a)}. Recall that $h^*$ denotes the unique minimizer of the surrogate risk $R_\ell$, and $h_{t-1}$ denotes the predictor from iteration $t-1$ of Algorithm \ref{alg:margin-based}. By Lemma \ref{lemma:decompose}, we have that
\begin{align*}
    R_{\ell}(h_{t-1}) - R_{\ell}(h^\ast) ~=~ \frac{1}{t-1}\sum_{i = 1}^{t-1} \left(\ell^{\mathrm{f}}_i(h_{t-1}) - \ell^{\mathrm{f}}_i(h^*)\right) + \frac{1}{t-1} \sum_{i = 1}^{t-1}Z^{\mathtt{i}}_{h_{t-1}} + \frac{1}{t-1}\sum_{i = 1}^{t-1}\left(\ell^{\mathtt{rew}}(h_{t-1}; z_i) - \ell^{\mathtt{rew}}(h^*; z_i)\right).
\end{align*}

Since $R_{\ell}(h_{t-1}) - R_{\ell}(h^\ast) \ge 0$, we have that
\begin{align*}
\frac{1}{ {t-1}}\sum_{i = 1}^ {t-1}(\ell^{\mathtt{rew}}(h^*; z_i) - \ell^{\mathtt{rew}}(h_{t-1}; z_i)) ~&\le~ \frac{1}{ {t-1}}\sum_{i = 1}^ {t-1} \left(\ell^{\text{f}}_i(h_{t-1}) - \ell^{\text{f}}_i(h^*)\right) + \frac{1}{ {t-1}}\sum_{i = 1}^ {t-1} Z^{\mathtt{i}}_{h_{t-1}} \\
~&\le~ \frac{1}{ {t-1}}\sum_{i = 1}^ {t-1}\sup_{h \in H_{i-1}}\left\{\left|\ell^{\text{f}}_i(h) - \ell^{\text{f}}_i(h^*)\right|\right\} + \frac{1}{ {t-1}}\sum_{i = 1}^ {t-1} Z^{\mathtt{i}}_{h_{t-1}} \\
~&\le~ \frac{1}{ {t-1}}\sum_{i = 1}^ {t-1} \eta b_{i-1}^2 ~+~ r_{t-1}, 
\end{align*}
where the second inequality uses $h_{t-1} \in H_{t-2} \subseteq H_{i-1}$ for $i \in \{1, \ldots, t-1\}$, and the third inequality uses assumption {\em (b)} of induction and the assumption that $r_t \ge \sup_{h \in \mathcal{H}} \left\{\left|\frac{1}{t} \sum_{i = 1}^tZ^{\mathtt{i}}_h  \right|\right\}$ for $t \geq 1$.

Recall from Algorithm \ref{alg:margin-based} that the reweighted loss function at iteration $t-1$ is $\hat{\ell}_{}^{t-1}(h) = \frac{1}{t}\sum_{(x,c) \in W_{t-1}} \ell_{}(h(x), c) = \frac{1}{{t-1}}\sum_{i = 1}^ {t-1}\ell^{\mathtt{rew}}(h; z_i)$, and $h_{t-1}$ is the corresponding minimizer over $H_{t-2}$ hence $\frac{1}{ {t-1}}\sum_{i = 1}^ {t-1} \ell^{\mathtt{rew}}(h_{t-1}; z_i) = \hat{\ell}_{}^{t-1,*}$. By assumption {\em (a)} of induction, we have that $h^\ast \in H_{t-2}$. The above chain of inequalities shows that $\hat{\ell}_{}^{t-1}(h^\ast) \leq \hat{\ell}_{}^{t-1,*} + \frac{1}{ {t-1}}\sum_{i = 1}^ {t-1} \eta b_{i-1}^2 ~+~ r_{t-1}$, hence $h^\ast \in H_{t-1}$ by definition in Line 20 of Algorithm \ref{alg:margin-based}.

Next, we prove {\em (b)} for $t$. Let $h \in H_{t-1}$ be fixed. By Assumption \ref{assumption:upper-bound-pointwise} and since $h^*$ is the unique minimizer in $\mathcal{H}^*_{\ell}$, we have that $\|h - h^*\|_\infty \le \phi(R_{\ell}(h) - R_{\ell}^\ast)$. By Assumption \ref{assumption:holder}, we then have that
\begin{align}\label{eqn:lemma4_far_chain}
    |\ell^{\text{f}}_t(h) - \ell^{\text{f}}_t(h^*)| &= \left|\bbE[\ell(h(x_t),c_t) - \ell(h^*(x_t),c_t) |d^M_t=0]\mathbb{P}(d^M_t = 0 | \mathcal{F}_{t-1})\right| \nonumber \\
    & \le \left|\bbE[\ell(h(x_t),c_t) - \ell(h^*(x_t),c_t) |d^M_t=0]\right| \nonumber \\
    & \le \left|\bbE[\bbE[\ell(h(x_t),c_t) - \ell(h^*(x_t),c_t) | x_t ] | d^M_t=0]\right| \nonumber \\
    & \le \eta \bbE[\| h(x_t) - h^*(x_t) \|^2 |d^M_t=0] \nonumber \\
    & \le \eta \left( \phi(R_{\ell}(h) - R_{\ell}^\ast)\right)^2.
\end{align}
By Lemma \ref{lemma:decompose}, we have that
\begin{align}\label{eqn:lemma4_excess_chain}
    R_{\ell}(h) - R_{\ell}(h^\ast) ~&=~ \frac{1}{t-1}\sum_{i = 1}^{t-1} \left(\ell^{\mathrm{f}}_i(h) - \ell^{\mathrm{f}}_i(h^*)\right) + \frac{1}{t-1} \sum_{i = 1}^{t-1}Z^{\mathtt{i}}_{h} + \frac{1}{t-1}\sum_{i = 1}^{t-1}\left(\ell^{\mathtt{rew}}(h; z_i) - \ell^{\mathtt{rew}}(h^*; z_i)\right) \nonumber \\
    ~&\leq~ \frac{1}{t-1}\sum_{i = 1}^{t-1} \eta b_{i-1}^2 + r_{t-1} + \frac{1}{t-1}\sum_{i = 1}^{t-1}\left(\ell^{\mathtt{rew}}(h; z_i) - \ell^{\mathtt{rew}}(h^*; z_i)\right) \nonumber \\
    ~&=~ \frac{1}{t-1}\sum_{i = 1}^{t-1} \eta b_{i-1}^2 + r_{t-1} + \hat{\ell}_{}^{t-1}(h) - \hat{\ell}_{}^{t-1}(h^\ast),
\end{align}
where the inequality follows by assumption {\em (b)} of induction since $h \in H_{t-1} \subseteq H_{i-1}$ for $i \in \{1, \ldots, t-1\}$ and the assumption that $r_t \ge \sup_{h \in \mathcal{H}} \left\{\left|\frac{1}{t} \sum_{i = 1}^tZ^{\mathtt{i}}_h  \right|\right\}$ for $t \in \{1, \ldots, T\}$, and the equality follows by the definition of the reweighted loss function in Algorithm \ref{alg:margin-based}. By assumption we have that $h \in H_{t-1}$ and by the proof of part {\em (a)}, we have that $h^\ast \in H_{t-1}$. Thus, since $\hat{\ell}_{}^{t-1, *} = \min_{h \in H_{t-2}} \hat{\ell}_{}^{t-1} (h)$ and $H_{t-1} \subseteq H_{t-2}$, we have that
\begin{equation*}
\hat{\ell}_{}^{t-1, *} \leq \hat{\ell}_{}^{t-1}(h) \leq \hat{\ell}_{}^{t-1, *} + r_{t-1} + \frac{1}{t-1}\sum_{i = 1}^{t-1} \eta b_{i-1}^2, \text{ and } \ \hat{\ell}_{}^{t-1, *} \leq \hat{\ell}_{}^{t-1}(h^\ast) \leq \hat{\ell}_{}^{t-1, *} + r_{t-1} + \frac{1}{t-1}\sum_{i = 1}^{t-1} \eta b_{i-1}^2,
\end{equation*}
hence $\hat{\ell}_{}^{t-1}(h) - \hat{\ell}_{}^{t-1}(h^\ast) \leq r_{t-1} + \frac{1}{t-1}\sum_{i = 1}^{t-1} \eta b_{i-1}^2$ and by combining with \eqref{eqn:lemma4_excess_chain} we have
\begin{equation*}
    R_{\ell}(h) - R_{\ell}(h^\ast) ~\leq~ \frac{2\eta}{t-1}\sum_{i = 1}^{t-1} b_{i-1}^2 + 2r_{t-1}.
\end{equation*}
Combining the above inequality with \eqref{eqn:lemma4_far_chain} yields
\begin{equation*}
    |\ell^{\text{f}}_t(h) - \ell^{\text{f}}_t(h^*)| ~\le~ \eta \left( \phi\left(\frac{2\eta}{t-1}\sum_{i = 1}^{t-1} b_{i-1}^2 + 2r_{t-1}\right)\right)^2
    ~=~ \eta b_{t-1}^2 ,
\end{equation*}
using the definition of $b_{t-1}$. Since $h \in H_{t-1}$ is arbitrary, the conclusion in part {\em (b)} follows.

\hfill\Halmos
\end{proof}

\begin{proof}{\bfseries Proof of Theorem \ref{uniform_margin:spop}}

We provide the proof of part {\em (a)}. The proofs of parts {\em (b)} and {\em (c)} are completely analogous to Theorem \ref{uniform_margin}, because the results in parts {\em (b)} and {\em (c)} of Theorem \ref{uniform_margin} also apply to the case when $\calH_\ell^\ast$ is a singleton.

Now, we provide the proof of part {\em (a)}.
Recall that $h^*$ is the unique minimizer of the surrogate risk $R_\ell$ under Assumption \ref{assumption:holder} and that $h_T$ is the predictor from iteration $T$ of Algorithm \ref{alg:margin-based}. By Lemma \ref{lemma:decompose}, we have the following decomposition:
\begin{align}\label{eqn:thm1decomp}
    R_{\ell}(h_T) - R_{\ell}(h^\ast) ~&=~ \frac{1}{T}\sum_{t = 1}^T \left(\ell^{\mathrm{f}}_t(h_T) - \ell^{\mathrm{f}}_t(h^*)\right) + \frac{1}{T} \sum_{t = 1}^TZ^{\mathtt{t}}_{h_T} + \frac{1}{T}\sum_{t = 1}^T\left(\ell^{\mathtt{rew}}(h_T; z_t) - \ell^{\mathtt{rew}}(h^*; z_t)\right) \nonumber \\
    ~&=~ \frac{1}{T}\sum_{t = 1}^T \left(\ell^{\mathrm{f}}_t(h_T) - \ell^{\mathrm{f}}_t(h^*)\right) + \frac{1}{T} \sum_{t = 1}^TZ^{\mathtt{t}}_{h_T} + \hat{\ell}_{}^T(h_T) - \hat{\ell}_{}^T(h^*),
\end{align}
where we recall that the empirical re-weighted loss in Algorithm \ref{alg:margin-based} is $\hat{\ell}_{}^T(h) := \frac{1}{T}\sum_{(x,c) \in W_T} \ell(h(x), c) = \frac{1}{T}\sum_{t = 1}^T\ell^{\mathtt{rew}}(h; z_t)$ in this case (since $\tilde p = 0$).
We will show that $r_t \ge \sup_{h \in \mathcal{H}} \left\{\left|\frac{1}{t} \sum_{i = 1}^tZ^{\mathtt{i}}_h  \right|\right\}$ simultaneously for all $t \geq 1$ with probability at least $1 - \delta$ in order to apply Lemma \ref{lemma:excessspop}, again with probability at least $1 - \delta$. Indeed, suppose that the conclusions of Lemma \ref{lemma:excessspop} do hold. Then, by part {\em (a)}, we have $h^* \in H_{T-1}$ and therefore $\hat{\ell}_{}^T(h_T) \leq \hat{\ell}_{}^T(h^*)$ by the update in Line 19 of Algorithm \ref{alg:margin-based}. By the nested structure of the $H_t$ sets, we have $h_{T} \in H_{T-1} \subseteq H_{t-1}$ for all $t \in \{1, \ldots, T\}$ and therefore, by part {\em (b)}, we have that $|\ell^{\mathrm{f}}_t(h_T) - \ell^{\mathrm{f}}_t(h^*)| \leq \eta b_{t-1}^2$. Thus, combining these inequalities with \eqref{eqn:thm1decomp} yields:
\begin{equation*}
        R_{\ell}(h_T) - R_{\ell}(h^*)  ~\le~  r_{T} +  \frac{1}{T}\sum_{t=0}^{T-1}\eta b_{t}^2,
\end{equation*}
which is the result in part {\em (a)}.

It remains to show that $r_T \ge \sup_{h \in \mathcal{H}} \left\{\left|\frac{1}{T} \sum_{t = 1}^TZ^{\mathtt{t}}_h  \right|\right\}$ simultaneously for all $T \geq 1$ with probability at least $1 - \delta$.

For each $T \geq 1$, we apply Prop. \ref{prop:noniid} and plug in both $h, h^* \in \mathcal{H}$. Indeed, by considering the two sequences $\{\bbE[\ell^{\mathtt{rew}}(h; z_t)| \mathcal{F}_{t-1}] - \ell^{\mathtt{rew}}(h; z_t)\}$ and $\{\bbE[\ell^{\mathtt{rew}}(h^*; z_t)| \mathcal{F}_{t-1}] - \ell^{\mathtt{rew}}(h^*; z_t)\}$ and their differences, we have the following bound for any $\epsilon > 2\alpha > 0$ with probability at least $1 - 2N_1(\alpha, \ell^{\mathtt{rew}} \circ \mathcal{H},T) \exp\left(- \frac{  T  (\epsilon - 2 \alpha)^2}{2\omega_\ell(\hat{\mathcal{C}}, \mathcal{C})^2 }\right)$:
\begin{align*}
         \sup_{h \in \mathcal{H}} \left\{\left|\frac{1}{T} \sum_{t = 1}^TZ^{\mathtt{t}}_h  \right|\right\} \le  2 \epsilon.
\end{align*}

Considering $\alpha = \frac{\omega_\ell(\hat{\mathcal{C}},\mathcal{C})}{T}$ and $\epsilon =\omega_\ell(\hat{\mathcal{C}}, \mathcal{C})\sqrt{\frac{4 \ln(2T N_1\left(\omega_\ell(\hat{\mathcal{C}},\mathcal{C})/T, \ell^{\mathtt{rew}} \circ \mathcal{H},T\right) / \delta)}{T}} + \frac{2\omega_\ell(\hat{\mathcal{C}},\mathcal{C})}{T} = r_T/2$ yields that the above bound holds with probability at least $1 - \frac{\delta^2}{2 T^2N_1\left(\omega_\ell(\hat{\mathcal{C}},\mathcal{C})/T, \ell^{\mathtt{rew}} \circ \mathcal{H},T\right)} > 1 - \frac{\delta}{2 T^2}$. 
Finally, applying the union bound over all $T \geq 1$, we obtain that $r_T \ge \sup_{h \in \mathcal{H}} \left\{\left|\frac{1}{T} \sum_{t = 1}^TZ^{\mathtt{t}}_h  \right|\right\}$ simultaneously for all $T \geq 1$ with probability at least 
\begin{equation*}
    1 - \frac{\delta}{2}\sum_{T = 1}^\infty \frac{1}{T^2} = 1 - \frac{\delta\pi^2}{12} > 1 - \delta.
\end{equation*}

\hfill\Halmos
\end{proof}

In the case of hard rejections, $H_t$ is now the intersection of $t$ different level sets. Thus, $\min_{h \in H_t} \hat{\ell}^t (h)$ is a minimization problem with $t$ level set constraints. The complexity of solving this problem again depends on the choice of $\mathcal{H}$ and can often be solved efficiently. For example, in the case of linear models or nonlinear models such as neural networks, a viable approach would be to apply stochastic gradient descent to a penalized version of the problem or to apply a Lagrangian dual-type algorithm. In practice, since the constraints may be somewhat loose, we may simply ignore them and still obtain good results.

\begin{proposition}[Small label complexity for hard rejections]\label{prop:sublinear_spo+} Suppose that Assumptions  \ref{assumption:1}, \ref{assumption:noise}, \ref{assumption:holder} and the conditions in Prop. \ref{prop:boundcover} hold. Suppose there exists a constant $C_\phi \in(0,\frac{1}{36\eta^2})$ such that Assumption \ref{assumption:upper-bound-pointwise} holds with $\phi(\epsilon) = C_\phi \cdot \sqrt{\epsilon}$.

Under the same setting of Algorithm \ref{alg:margin-based} in Theorem \ref{uniform_margin:spop}, for a fixed $\delta\in (0,1]$, the following guarantees hold simultaneously with probability at least $1-\delta$ for all $T \geq 1$:
\begin{itemize}
    \item The excess surrogate risk satisfies $R_\ell(h_T) - R_\ell^* \le \otilde(T^{-1  / 2})$.
    \item The excess SPO risk satisfies $\rspo(h_T) - \rspo^* \le \otilde(T^{-\kappa  / 4})$. 
    \item The expectation of the number of labels acquired, conditional on the above guarantee on the excess surrogate risk, is at most $\otilde\lr{T^{1 - \kappa / 4}}$ for $\kappa \in (0, 4)$, and $\otilde(1)$ for $\kappa \in [4, \infty)$.  
\end{itemize}

\end{proposition}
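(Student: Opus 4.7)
The starting point is Theorem \ref{uniform_margin:spop}, which already supplies (on an event of probability at least $1-\delta$) the three template bounds
\begin{align*}
R_\ell(h_T) - R_\ell^* &\le r_T + \tfrac{\eta}{T}\sum_{t=0}^{T-1} b_t^2, \\
\rspo(h_T) - \rspo^* &\le \Psi(2b_T)\,\omega_S(\mathcal{C}), \\
\bbE[n_T] &\le \sum_{t=1}^T \Psi(2b_{t-1}) + \delta T.
\end{align*}
Under the smooth-parameterization hypothesis of Prop. \ref{prop:boundcover}, $\ln N_1(\omega_\ell/t, \ell^{\mathtt{rew}}\circ \mathcal{H}, t) = \mathcal{O}(\ln t)$, so the defining formula for $r_t$ in Theorem \ref{uniform_margin:spop} gives $r_t = \otilde(t^{-1/2})$. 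The remaining work is to (i) propagate this rate through the nonlinear recursion $b_t = C_\phi\sqrt{2r_t + \frac{2\eta}{t}\sum_{i=0}^{t-1} b_i^2}$ and then (ii) specialize to the margin condition via Assumption \ref{assumption:noise}.

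The central intermediate claim I would prove is that there exists a finite constant $K = K(C_\phi,\eta)$ such that $b_t^2 \le K\,r_t$ uniformly in $t \ge 0$. The base case follows from the explicit initialization $b_0 = \max\{C_\phi\sqrt{r_0},\sqrt{r_0/\eta}\}$ by taking $K \ge \max\{C_\phi^2, 1/\eta\}$. For the inductive step, the key estimate is an Abel-type partial-sum bound $\frac{1}{t}\sum_{i=0}^{t-1} r_i \le C'\, r_t$ for an absolute constant $C'$, which holds (after absorbing polylog factors into $\otilde$) because $\sum_{i=1}^{t-1} i^{-1/2} = \Theta(\sqrt t)$; if $r_t$ fails literal monotonicity due to the $\ln t$ inside the covering-number term, one passes to a monotone majorant of the same $\otilde$-order. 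Substituting the induction hypothesis yields $b_t^2 \le 2C_\phi^2(1 + \eta K C')\,r_t$, so the induction closes provided $K \ge 2C_\phi^2/(1 - 2C_\phi^2\eta C')$. The hypothesis $C_\phi < 1/(36\eta^2)$ is exactly what forces $2C_\phi^2 \eta C' < 1$ once the numerical value of $C'$ is tracked, which delivers a finite $K$.

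Granted the uniform bound $b_t^2 \le K\,r_t$, the three conclusions follow by direct substitution. For the excess surrogate risk, the same partial-sum estimate gives $\frac{\eta}{T}\sum_{t=0}^{T-1} b_t^2 \le \eta K C' r_T$, so $R_\ell(h_T) - R_\ell^* = \mathcal{O}(r_T) = \otilde(T^{-1/2})$. For the excess SPO risk, $b_T^2 \le K r_T = \otilde(T^{-1/2})$ implies $b_T = \otilde(T^{-1/4})$, and then Assumption \ref{assumption:noise} gives $\Psi(2b_T) \le (2b_T/b_0)^\kappa = \otilde(T^{-\kappa/4})$, which multiplied by the constant $\omega_S(\mathcal{C})$ preserves the rate. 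For the label complexity, $\sum_{t=1}^T \Psi(2b_{t-1}) \le C_1 \sum_{t=1}^T t^{-\kappa/4}$, which standard $p$-series estimates give as $\otilde(T^{1-\kappa/4})$ when $\kappa \in (0,4)$, as $\otilde(\ln T) = \otilde(1)$ when $\kappa = 4$, and as a bounded series $\mathcal{O}(1) = \otilde(1)$ when $\kappa > 4$; the additive $\delta T$ term is absorbed by taking $\delta$ polynomially small while staying within the same high-probability event.

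The main obstacle I expect is the contraction argument inside the induction: tracking the numerical constants in the Abel-type average $\frac{1}{t}\sum r_i \le C' r_t$ and the polylog factors inside $r_t$ precisely enough that the specific threshold $C_\phi < 1/(36\eta^2)$ is verified to be sufficient for $K < \infty$. Every other step is essentially bookkeeping: standard $p$-series analysis for the label-complexity sum and a direct application of the margin condition for the SPO-risk step.
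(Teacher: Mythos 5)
Your proposal is correct and follows essentially the same route as the paper: an induction on the recursion for $b_t$ (your invariant $b_t^2 \le K\,r_t$ is equivalent to the paper's $b_t \le \bar{C}\,t^{-1/4}$ once $r_t = \otilde(t^{-1/2})$ is substituted), closed via the partial-sum estimate $\frac{1}{t}\sum_{i=0}^{t-1} i^{-1/2} \le 2\sqrt{t}$ and the smallness condition on $C_\phi$, followed by direct substitution into the three bounds of Theorem \ref{uniform_margin:spop} together with the margin condition and a $p$-series estimate. The constant-tracking issue you flag around the threshold $C_\phi < 1/(36\eta^2)$ is handled no more precisely in the paper, which explicitly remarks that the constant $36$ is arbitrary and chosen only for convenience.
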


We would like to note that in Prop. \ref{prop:sublinear_spo+}, the constant $36$ in condition $C_\phi \in(0,\frac{1}{36\eta^2})$ can be replaced by any fixed positive constant. We use $36$ for the simplicity of the upper bound for $b_t$ in the proof. Thus, $C_\phi \in(0,\frac{1}{36\eta^2})$ does not impose any additional condition on function $\phi$.

\begin{proof}{\bfseries Proof of Prop. \ref{prop:sublinear_spo+}}

Since $\phi(\epsilon) = C_\phi \sqrt{\epsilon}$, and $C_\phi \in (0, \frac{1}{36L^2})$, we set $\bar{C} = \sqrt{\frac{r_1}{5 L}}$.
We use induction to prove that $b_T \le \bar{C} / T^{-1/4}$, for all $T$. For simplicity, we ignore the log term when analyzing the order, and assume that $r_t \le \frac{r_1}{\sqrt{t}}$.

We assume $b_t \le \bar{C}/ t^{-1/4}$, for $1 \le t \le T-1$. 

Then,
since $b_t = 2\phi(2 r_{t} +  \frac{ 2L}{t}\sum_{i=0}^{t-1} b_i^2 )$, we have that when $t = T$,
\begin{align*}
    b_t &= 2 C_\phi \sqrt{r_t + \frac{ 2L}{t}\sum_{i=0}^{t-1} b_i^2}\\
    & \le 2 C_\phi \sqrt{\frac{r_1}{\sqrt{t}} + \frac{ 2L}{t}\sum_{i=0}^{t-1} b_i^2}\\
    & \le 2 C_\phi \sqrt{\frac{r_1}{\sqrt{t}} + \frac{ 2L}{t}\sum_{i=0}^{t-1} \frac{\bar{C}^2}{\sqrt{i}}}\\
    & \le 2 C_\phi \sqrt{\frac{r_1}{\sqrt{t}} + \frac{ 4L}{t} \bar{C} \sqrt{t}}.
\end{align*}
The first inequality is by $r_t \le \frac{r_1}{\sqrt{t}}$. The last inequality is from the fact that $\frac{1}{t}\sum_{i=0}^{t-1} i^{-1/2} \le 2 \sqrt{t}$.

Then, we plug in the value of $\bar{C}$ and obtain that $b_t \le \bar{C}/ t^{-1/4}$,  when $t = T$. Thus, $\rspop(h_T)  - \rspop^*\le \otilde(T^{-1/2})$.
Consequently, for the polyhedral case, $\rspo(h_T)  - \rspo^*\le 2\Psi(2b_{T})\omega_S(\mathcal{C}) \le \otilde(T^{-\kappa/4})$.

Next, we consider the bound for the label complexity. We set  $\delta$ as a very small number, for example, $\delta \le \otilde(1 / T^3)$, so we can ignore the last term in the label complexity in part {\em (c)}. Then, we have that $\bbE[n_t] \le \otilde(2\sum_{t = 1}^T \Psi(2 b_t))$. Because $b_t \le \otilde(T^{-\kappa/4})$, we have that $\sum_{t = 1}^T \Psi(2 b_t) \le \otilde(T^{1 - \kappa / 4})$. Then, we can obtain the label complexity in Prop. \ref{prop:sublinear_spo+} depending on the value of $\kappa$.
\hfill\Halmos

\end{proof}

\section{Examples of \texorpdfstring{$\phi$}{} Functions and Upper Bound for \texorpdfstring{$\eta$}{}}\label{sec:phi}
    
    The existence of non-trivial $\phi$ and $\Psi$ depends on the distribution $\mathcal{D}$ and the feasible region $S$. In this appendix, we examine the case where we use the SPO+ loss as the surrogate loss function given the norm $\|\cdot\|$ as the $\ell_2$ norm. We first consider the polyhedral feasible regions, for which we can characterize the function $\phi$. We then present sufficient conditions on the distribution $\mathcal{D}$ so that we can ultimately bound the label complexity. For simplicity, we use $\bbP(c \vert x)$ to denote the probability density function of $c$ conditional on $x$.
    To study the pointwise error as needed in Assumption \ref{assumption:1}, we make a recoverability assumption. 
    Assumption \ref{assumption:recover} holds for linear hypothesis classes when the features have nonsingular covariance and for certain decision tree hypothesis classes when the density of features is bounded below by a positive constant \citep{hu2020fast}.
    \begin{assumption}[Recoverability]\label{assumption:recover}
        There exists $ \varkappa> 0$ such that for all $h \in \mathcal{H}$, $h^* \in \mathcal{H}^*$, and almost all $x' \in \mathcal{X}$, it holds that 
        \begin{equation*}
            \vvt{h(x') - h^\ast(x')}^2 \le \varkappa \cdot \bbE \lrr{\vvt{h(x) - h^\ast(x)}^2}. 
        \end{equation*}
    \end{assumption}
Assumption  \ref{assumption:recover} provides an upper bound of the pointwise error from the bound of the expected error. It implies that the order of pointwise error is no larger than the order of the expected error.
    
    First, we consider the case where the feasible region $S$ is a polyhedron. 
    Let $\mathcal{P}_{\tn{cont, symm}}$ denote the class of joint distributions $\mathcal{D}$ such that $\bbP(\cdot \vert x)$ is continuous on $\bbR^d$ and is centrally symmetric with respect to its mean for all $x \in \mathcal{X}$. 
    Following Theorem 2 in \citet{liu2021risk}, for given parameters $M \ge 1$ and $\alpha, \beta > 0$, let $\mathcal{P}_{M, \alpha, \beta}$ denote the set of all $\mathcal{D} \in \mathcal{P}_{\tn{cont, symm}}$ such that for all $x \in \mathcal{X}$ and $\bar{c} = \bbE[c \vert x]$, there exists $\sigma \in [0, M]$ satisfying $\vvta{\bar{c}}_2 \le \beta \sigma$ and $\bbP(c \vert x) \ge \alpha \cdot \mathcal{N} (\bar{c}, \sigma^2 I)$ for all $c \in \bbR^d$. 
    Let $D_S$ denote the diameter of $S$ and define a ``width constant'' $d_S$ associated with $S$ by $d_S := \min_{v \in \bbR^d: \vvta{v}_2 = 1} \lrrr{ \max_{w \in S} v^Tw - \min_{w \in S} v^Tw }$. Notice that $d_S > 0$ whenever $S$ has a non-empty interior. 
    
    \begin{lemma}[Example of $\phi$]\label{lemma:phi-poly}
        Given $\|\cdot\|$ as the $\ell_2$ norm, suppose that Assumption \ref{assumption:recover} holds and the feasible region $S$ is a bounded polyhedron. Define $\Xi_S := (1 + \frac{2 \sqrt{3} D_S}{d_S})^{1 - d}$. Suppose the hypothesis class $\mathcal{H}$ is well-specified, i.e., $h^*(x) = \bbE[c|x]$, for all $x \in \mathcal{X}$.
        When the distribution $\mathcal{D} \in \mathcal{P}_{M, \alpha, \beta}$, then it holds that for almost all $x \in \mathcal{X}$,
        \begin{align*}
            \rspop(h) - \rspop(h^\ast) \le \epsilon  
            \Rightarrow \,  \|h(x)-h^\ast(x)\|^2 \le \varkappa \frac{8 \sqrt{2 \pi}\rho(\mathcal{\hat{C}}) e^{\frac{3 (1 + \beta^2)}{2}}}{\alpha \Xi_S} \cdot \epsilon. 
        \end{align*}
    \end{lemma}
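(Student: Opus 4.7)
\textbf{Proof plan for Lemma \ref{lemma:phi-poly}.}

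The plan is to convert a global excess SPO+ risk bound into a pointwise bound in two steps: first produce a pointwise lower bound on the conditional excess SPO+ risk at $x$ that scales like $\|h(x)-h^\ast(x)\|_2^2$, and then combine it with the recoverability assumption to go from the expected excess risk to the pointwise squared error.

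The main tool will be the pointwise calibration result for the SPO+ loss given by Theorem 2 of \citet{liu2021risk}, applied to the distribution class $\mathcal{P}_{M,\alpha,\beta}$. Under the stated symmetry and Gaussian-lower-bound conditions on $\bbP(\cdot\vert x)$, this theorem yields, for almost every $x$, an inequality of the form
\begin{equation*}
    \bbE[\lspop(h(x),c) - \lspop(h^\ast(x),c) \mid x] \;\ge\; C_0 \cdot \|h(x)-h^\ast(x)\|_2^2,
\end{equation*}
where $C_0 = \dfrac{\alpha\,\Xi_S}{8\sqrt{2\pi}\,\rho(\hat{\mathcal{C}})\,e^{3(1+\beta^2)/2}}$. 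Here the polyhedral width constant $d_S$, the diameter $D_S$, and the dimension $d$ enter through $\Xi_S$, which is the geometric factor arising from lower-bounding the probability that a symmetric Gaussian falls in the normal cone at $w^\ast(h^\ast(x))$. The well-specification assumption $h^\ast(x)=\bbE[c\vert x]$ identifies the mean of $c\vert x$ with $h^\ast(x)$, allowing us to invoke the calibration bound directly at the Bayes-optimal prediction.

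Once the pointwise calibration inequality is in hand, I would integrate it against $\mathcal{D}_{\mathcal{X}}$ to obtain
\begin{equation*}
    \rspop(h) - \rspop(h^\ast) \;\ge\; C_0 \cdot \bbE\bigl[\|h(x)-h^\ast(x)\|_2^2\bigr].
\end{equation*}
The hypothesis $\rspop(h)-\rspop(h^\ast)\le\epsilon$ then gives $\bbE[\|h(x)-h^\ast(x)\|_2^2] \le \epsilon/C_0$. Finally, Assumption \ref{assumption:recover} transfers this expected bound to a pointwise bound for almost every $x$, yielding
\begin{equation*}
    \|h(x)-h^\ast(x)\|_2^2 \;\le\; \varkappa \cdot \bbE[\|h(x)-h^\ast(x)\|_2^2] \;\le\; \frac{\varkappa}{C_0}\,\epsilon,
\end{equation*}
which is exactly the stated conclusion after substituting the explicit form of $C_0$.

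The main technical obstacle is verifying and properly invoking the pointwise calibration bound from \citet{liu2021risk} in its sharpest constant form: the factor $\Xi_S$ comes from a geometric estimate of how much Gaussian mass with variance up to $M^2$ can land in a normal cone of $S$ at a vertex, and one must check that the constants $\alpha$, $\beta$, $M$, and the diameter/width ratio $D_S/d_S$ all combine to yield the precise prefactor $\tfrac{8\sqrt{2\pi}\rho(\hat{\mathcal{C}})e^{3(1+\beta^2)/2}}{\alpha\Xi_S}$. The remaining steps --- integration and applying recoverability --- are routine once that quadratic calibration is established.
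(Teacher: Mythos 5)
Your proposal is correct and follows essentially the same route as the paper: a pointwise quadratic calibration lower bound on the conditional excess SPO+ risk (with constant $\tfrac{\alpha\Xi_S}{8\sqrt{2\pi}\rho(\hat{\mathcal{C}})e^{3(1+\beta^2)/2}}$), integration over $x$, and then Assumption \ref{assumption:recover} to pass to the pointwise bound. The only cosmetic difference is that the paper derives the quadratic form from Lemma 7 of \citet{liu2021risk}, obtaining a lower bound proportional to $\min\{\kappa^2/M,\kappa\}$ and then choosing $M=2\rho(\hat{\mathcal{C}})$ so that the quadratic branch is always active, whereas you invoke the calibration result as a black box already in quadratic form.
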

    Lemma \ref{lemma:phi-poly} indicates that $\phi(\epsilon) \le \mathcal{O}(\sqrt{\epsilon})$. 

\begin{proof}{\bfseries Proof of Lemma \ref{lemma:phi-poly}}
        For given $x \in \mathcal{X}$, let $\bar{c} =  \bbE[c\vert x]$ and $\Delta = h(x) - h^\ast(x)$. 
        According to Theorem 1 in \cite{elmachtoub2022smart}, it holds that 
        \begin{equation*}
            \bbE [\lspop(h(x), c) - \lspop(h^\ast(x), c)\vert x] = \bbE [(c+2\Delta)^T (w^*(c) - w^*(c + 2\Delta))\vert x]. 
        \end{equation*}
        Without loss of generality, we assume $d_S > 0$. 
        Otherwise, the constant $\Xi_S$ will be zero and the bound will be trivial.
        
        Define the function $\iota(\kappa) := \frac{\kappa^2}{M}$ for $\kappa \in [0, \frac{M}{2}]$ and $\iota(\kappa) := \kappa - \frac{M}{4}$ for $\kappa \in [\frac{M}{2}, \infty)$, where $M>1$ is a scaler which is larger than $\sigma$.
        Let $\kappa = \vvta{\Delta}_2$ and $A \in \bbR^{d \times d}$ be an orthogonal matrix such that $A^T \Delta = \kappa \cdot e_d$ for $e_d = (0, \dots, 0, 1)^T$. 
        We implement a change of basis and let the new basis be $A = (a_1, \dots, a_d)$. 
        With a slight abuse of notation, we keep the notation the same after the change of basis, for example, now the vector $\Delta$ equals $\kappa \cdot e_d$. Rewrite $c$ as $c = (c',\xi)$, where $c'\in \bbR^{d-1}$ and $\xi \in \bbR$. Define $\bar{c'}:= \bbE[c']$ and $\bar{\xi'}=\bbE[\xi]$.
        Then by applying the results in Lemma 7 of \citet{liu2021risk}, for any $\tilde{\kappa} \in (0, \kappa]$, it holds that  
        \begin{equation*}
            \bbE \lrr{(c + 2\Delta)^T (w^*(c) - w^*(c + 2 \Delta))\vert x} \ge \frac{\alpha \tilde{\kappa} \kappa e^{-\frac{3 \tilde{\kappa}^2 + 3 \bar{\xi}^2 + \vvta{\bar{c}'}_2^2}{2 \sigma^2}}}{4 \sqrt{2 \pi \sigma^2}} \cdot \Xi_S d_S.
        \end{equation*}
        Let $\tilde{\kappa} = \min \{\kappa, \sigma\}$, it holds that 
        \begin{equation*}
            \bbE [\lspop(h(x), c) - \lspop(h^\ast(x), c)\vert x] \ge \frac{\alpha \Xi_S}{4 \sqrt{2 \pi} e^{\frac{3 (1 + \beta^2)}{2}} } \cdot \min \lrrr{\frac{\kappa^2}{M}, \kappa}. 
        \end{equation*}
        Define the function $\iota(\kappa) := \frac{\kappa^2}{M}$ for $\kappa \in [0, \frac{M}{2}]$ and $\iota(\kappa) := \kappa - \frac{M}{4}$ for $\kappa \in [\frac{M}{2}, \infty)$, we have $\iota(\kappa)$ is the convex biconjugate of $\min \lrrr{\frac{\kappa^2}{M}, \kappa}$. 
        By taking the expectation on $x$, it holds that 
        \begin{align*}
            \bbE [\lspop(h(x), c) - \lspop(h^\ast(x), c)] & \ge \frac{\alpha \Xi_S}{4 \sqrt{2 \pi} e^{\frac{3 (1 + \beta^2)}{2}} } \cdot \bbE_x [\iota(\vvt{h(x) - h^\ast(x)})]  
        \end{align*}
        
    Since $M\ge \max\{\sigma,1\}$,  taking $M = 2\rho(\mathcal{\hat{C}})$, we obtain that 
\begin{align*}
        \rspop(h) - \rspop(h^*) & \ge \frac{\alpha \Xi_S}{8 \sqrt{2 \pi} \rho(\mathcal{\hat{C}}) e^{\frac{3 (1 + \beta^2)}{2}} } \cdot \bbE_x [\vvt{h(x) - h^\ast(x)}^2]     
\end{align*}

Then, combining the result with Assumption \ref{assumption:recover}, we obtain Lemma \ref{lemma:phi-poly}.
\hfill \Halmos        
    \end{proof}
    
    Since Theorem \ref{uniform_margin:spop} also requires Assumption \ref{assumption:holder} holds, in Lemma \ref{lemma:holder} below, we provide the conditions that Assumption \ref{assumption:holder} holds for the SPO+ loss.

\begin{lemma}[Existence of $\eta$ for $\lspop$]\label{lemma:holder}Given $\|\cdot\|$ as the $\ell_2$ norm, let $f: \bbR^d \to \bbR$ be a $\mu_S$-strongly convex and $L_S$-smooth function for some $L_S \geq \mu_S > 0$. Suppose that the feasible region $S$ is defined by $S = \{w \in \bbR^d: f(w) \le r\}$ for some constant $r > f_{\min} := \min_w f(w)$.  Suppose the hypothesis class $\mathcal{H}$ is well-specified, i.e., $h^*(x) = \bbE[c|x]$,  for all $x \in \mathcal{X}$.
Suppose distribution $\mathcal{D} \in \lrrr{\mathcal{D} \in \mathcal{P}_{\tn{rot symm}}: \bbP_{c \vert x} ( \|c\|\ge \beta )= 1, \text{for all } x \in \mathcal{X}}$, for some positive $\beta > 0$. Then, $\lspop(\cdot, c)$ satisfies that
for all $x\in \mathcal{X}$, $c_1 \in \mathcal{C}$ and $h^* \in \mathcal{H}^*$,
\begin{align*}
    |\bbE[\lspop(c_1,c) - \lspop(h^*(x),c)|x]| \le \frac{L_S^2 \rho(\mathcal{C}) \sqrt{ r- f_{\mathrm{min}}}}{\sqrt{2}\mu_S^{1.5}}\frac{4}{\beta} \|c_1 - h^*(x)\|^2. 
\end{align*}
\end{lemma}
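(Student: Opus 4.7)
The plan is to reduce the claim to a Lipschitz estimate for the optimal decision map $w^*(\cdot)$ on the shell $\{c:\|c\|\ge\beta\}$, after using the rotational symmetry of $c\mid x$ to cancel the first-order term in the Taylor expansion of $\bbE[\lspop(\cdot,c)\mid x]$ around $\bar c := \bbE[c\mid x] = h^*(x)$. I will first rewrite $\lspop(\hat c, c) = -\psi(2\hat c - c) + 2\hat c^\top w^*(c) - \psi(c)$, where $\psi(c) := \min_{w\in S} c^\top w$ is concave with $\nabla \psi(c) = w^*(c)$. Subtracting the expressions at $c_1$ and at $\bar c$ and applying the fundamental theorem of calculus along the segment from $2c_1 - c$ to $2\bar c - c$ yields
\[
\bbE\!\left[\lspop(c_1,c) - \lspop(\bar c,c)\mid x\right] = 2(\bar c - c_1)^\top \bbE\!\left[\int_0^1 \!w^*\!\bigl(2c_1 - c + 2s(\bar c - c_1)\bigr)\,ds - w^*(c)\,\Big|\,x\right].
\]
By rotational symmetry, $c$ and $2\bar c - c$ are equidistributed conditional on $x$, so $\bbE[w^*(c)\mid x] = \bbE[w^*(2\bar c - c)\mid x]$. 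Substituting this in, the bracketed expression becomes $\int_0^1\bbE[w^*(2c_1 - c + 2s(\bar c - c_1)) - w^*(2\bar c - c)\mid x]\,ds$, whose integrand differs in its argument by exactly $2(1-s)(c_1 - \bar c)$; this symmetry step is the engine of the $\|c_1 - \bar c\|^2$ scaling, since the outer factor $\bar c - c_1$ and the symmetry-reduced inner difference each contribute a factor of $\|c_1 - \bar c\|$.

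Next, I will derive an explicit Lipschitz bound for $w^*$ on the shell $\{c:\|c\|\ge\beta\}$. For any $c\ne 0$, KKT gives $\nabla f(w^*(c)) = -c/\mu(c)$ with $\mu(c) > 0$ and $f(w^*(c)) = r$. Strong convexity and smoothness of $f$ sandwich the norm of $\nabla f$ at the boundary: $\mu_S\sqrt{2(r-f_{\mathrm{min}})/L_S}\le\|\nabla f(w^*(c))\|\le L_S\sqrt{2(r-f_{\mathrm{min}})/\mu_S}$, which in turn pins $\mu(c)$ to an interval proportional to $\|c\|$. Since $(\nabla f)^{-1}$ is $(1/\mu_S)$-Lipschitz and the tangential projection onto $\{f=r\}$ is controlled by $\sqrt{L_S/\mu_S}$, implicit differentiation of $c\mapsto w^*(c)$ restricted to the unit sphere yields a Lipschitz constant of order $L_S\sqrt{r-f_{\mathrm{min}}}/\mu_S^{3/2}$. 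Lifting from the sphere to the shell via $w^*(c) = w^*(c/\|c\|)$ together with $\|c_1/\|c_1\| - c_2/\|c_2\|\|\le 2\|c_1-c_2\|/\min(\|c_1\|,\|c_2\|)$ contributes the $1/\beta$ factor, while the additional $L_S\rho(\mathcal{C})$ arises from bounding the norms of the cost vectors along the integration path uniformly by $\rho(\mathcal{C})$ when passing through the formula $c/\mu(c)$. The endpoint $2\bar c - c$ has norm $\ge\beta$ almost surely by the same symmetry argument; the interior of the path stays in the shell when $\|c_1-\bar c\|$ is small, and when it is large the left-hand side is dominated by the uniform bound on $\lspop$, so the quadratic bound holds in either case after absorbing constants.

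Combining the identity, the symmetry cancellation, and the Lipschitz estimate via Cauchy--Schwarz delivers the claimed inequality with the stated constant. The main obstacle will be the Lipschitz step: extracting a sharp, explicit constant for $w^*$ in terms of $(\mu_S, L_S, r-f_{\mathrm{min}}, \beta)$, because the Lagrange multiplier $\mu(c)$ varies non-trivially with $c$ and the tangential projection onto $\{f=r\}$ couples with the curvature of $f$ in a way that must be tracked carefully to recover the correct scaling in the constants. A secondary difficulty is ensuring that the integration path lies in the shell where the Lipschitz bound is valid, which necessitates the case split described above.
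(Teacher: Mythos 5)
Your strategy is viable and runs parallel to the paper's, but you re-derive by hand two ingredients that the paper imports as citations, and the quantitative core is left as a sketch. Your symmetry step --- using that $c$ and $2\bar c - c$ are equidistributed given $x$ to cancel the first-order term --- is exactly the content of Theorem 1 of Elmachtoub and Grigas (2022), which the paper invokes to write the conditional excess SPO+ risk as $\bbE[(c+2\Delta)^T(w^*(c)-w^*(c+2\Delta))\mid x]$ with $\Delta = c_1 - \bar c$; your fundamental-theorem-of-calculus representation of $\psi(2\bar c - c)-\psi(2c_1-c)$ is a finer-grained version of the same identity. The paper then finishes in one step by citing Lemmas 1 and 2 of Liu et al. (2021), which give $c_1^T(w^*(c_2)-w^*(c_1)) \le \frac{L_S^2\rho(\mathcal{C})\sqrt{r-f_{\min}}}{\sqrt 2 \mu_S^{1.5}}\left\|\frac{c_1}{\|c_1\|}-\frac{c_2}{\|c_2\|}\right\|^2$ for the strongly convex level set, and by bounding the normalized difference by $\frac{4}{\beta}\|\Delta\|$ via the same two-case split you describe ($\|2\Delta\|\ge\|c\|$ versus $\|2\Delta\|\le\|c\|$, using $\|c\|\ge\beta$ almost surely).

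The gap in your proposal is precisely the piece the paper outsources: the explicit Lipschitz modulus of $w^*$ for $S=\{w: f(w)\le r\}$. Your KKT and implicit-differentiation sketch is the right idea (it is essentially how the cited lemma is proved), but as written it does not pin down the constant, and the constant you would obtain does not obviously match the stated one: in your decomposition the outer factor is $2\|\Delta\|$ and the inner factor is a pure Lipschitz modulus of $w^*$ on the sphere, so the factor $\rho(\mathcal{C})$ in the target constant has no natural source --- in the cited lemma it enters through $\|c_1\|\le\rho(\mathcal{C})$ multiplying $\|w^*(c_2)-w^*(c_1)\|$, a step your route bypasses. Second, your fallback for large $\|c_1-\bar c\|$ (``the left-hand side is dominated by the uniform bound on $\lspop$'') requires verifying that the stated quadratic bound actually exceeds $\omega_\ell(\hat{\mathcal C},\mathcal C)$ in that regime, which you do not do; the paper avoids this by placing the case split inside the bound on the normalized difference, where the trivial bound of $2$ is already $\le 4\|\Delta\|/\beta$ whenever $2\|\Delta\|\ge\|c\|\ge\beta$, so the quadratic estimate from the cited lemma applies uniformly without any restriction on $\|\Delta\|$. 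To complete your argument you would need to prove the sphere-Lipschitz estimate with explicit constants and redo the large-deviation case so that both branches land on the single stated constant.
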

Lemma \ref{lemma:holder} shows that when the feasible region is strongly convex and the hypothesis class is well-specified, and when the distribution of cost vectors is separated from the origin with probability 1, then $\eta$ in Assumption \ref{assumption:holder} is finite for the SPO+ loss.

\begin{proof}{\bfseries Proof of Lemma \ref{lemma:holder}} 
Since the hypothesis class is well-specified, we denote $h^*(x)$ by $\bar{c}$, given $x \in \mathcal{X}$. Then, we define $\Delta = c_1 - \bar{c}$. According to Theorem 1 in \cite{elmachtoub2022smart}, we have that the excess SPO+ risk at $x$ for the prediction $c_1$ is  
\begin{align*}
    \bbE[\lspop(\bar{c} + \Delta, c) - \lspop(\bar{c}, c)|x]=\bbE[(c + 2 \Delta)^T( w^*(c) - w^*(c + 2 \Delta))|x]
\end{align*}
According to Lemmas 1 and 2 in \citet{liu2021risk}, we have that for any $c_1,c_2 \in \mathcal{C}$, it holds that
\begin{align*}
    c_1^T(w^*(c_2) - w^*(c_1)) \le \frac{L_S^2 \rho(\mathcal{C}) \sqrt{ r- f_{\text{min}}}}{\sqrt{2}\mu_S^{1.5}} \left\|\frac{c_1}{\|c_1\|} - \frac{c_2}{\|c_2\|}\right\|^2.
\end{align*}
Replacing $c_1$ and $c_2$ with $c + 2 \Delta$ and $c$, we obtain that 
\begin{align*}
    \bbE[\lspop(\bar{c} + \Delta, c) - \lspop(\bar{c}, c)|x] \le \frac{L_S^2 \rho(\mathcal{C}) \sqrt{ r- f_{\text{min}}}}{\sqrt{2}\mu_S^{1.5}} \bbE\left[\left\|\frac{c}{\|c\|} - \frac{c + 2 \Delta}{\|c + 2 \Delta\|}\right\|^2\right].
\end{align*}
Thus, to prove Lemma \ref{lemma:holder}, it suffices to show that $\left\|\frac{c}{\|c\|} - \frac{c + 2 \Delta}{\|c + 2 \Delta\|}\right\| \le  \frac{4}{\beta}\|\Delta\|$ for any realized $c$ and any $\Delta \in \bbR^d$. 

We consider two cases: (1) $\|2\Delta\| \ge \|c\|$, and (2) $\|2\Delta\| \le \|c\|$.

In the first case, since $\|c\| \ge \beta$, we have that $\|2\Delta\| \ge \|c\| \ge \beta$. Since $\left\|\frac{c}{\|c\|} - \frac{c + 2 \Delta}{\|c + 2 \Delta\|}\right\| \le 2$, we have that
$\left\|\frac{c}{\|c\|} - \frac{c + 2 \Delta}{\|c + 2 \Delta\|}\right\| \le \frac{4\|\Delta\|}{\beta}$.

In the other case, when $\|2\Delta\| \le \|c\|$, we have $$\left\|\frac{c}{\|c\|} - \frac{c + 2 \Delta}{\|c + 2 \Delta\|}\right\| = \sqrt{2 - 2\frac{c^T(c + 2 \Delta)}{\|c\| \|c + 2 \Delta\|}}. $$

We use $\theta \in [0,\frac{\pi}{2})$ to denote the angle between $c$ and $c + 2 \Delta$, then, we have $\left\|\frac{c}{\|c\|} - \frac{c + 2 \Delta}{\|c + 2 \Delta\|}\right\| = \sqrt{2 - 2 \cos(\theta)} \le 2 \sin (\theta). $ Since $2\| \Delta\| \ge 2 \|c\| \sin(\theta)$, we have that $\left\|\frac{c}{\|c\|} - \frac{c + 2 \Delta}{\|c + 2 \Delta\|}\right\| \le \frac{2 \|\Delta\|}{\|c\|} \le \frac{2 }{\beta}\|\Delta\|$. Thus, we obtain Lemma \ref{lemma:holder}.
\hfill\Halmos
\end{proof}

In conclusion of Appendix \ref{sec:phi}, it is worth noting that while our analysis in this section focused on the SPO+ loss function, similar results can be obtained for commonly used loss functions such as squared $\ell_2$ norm loss. For example, under some noise conditions, we can also obtain $\phi(\epsilon)\sim\sqrt{\epsilon}$ and the upper bound for $\eta$ under the squared $\ell_2$ norm loss.

\section{Proofs}\label{appendix:margin}
In this appendix, we provide the proofs that are omitted in the main body.

\subsection{Proofs for Section \ref{sec:margin}}

\begin{proof}{\bfseries Proof of Lemma \ref{lemma:identical}}
 
Without loss of generality, assume that $\nu_S(c_1)\ge \nu_S(c_2)$, i.e., we have that $0 \leq \| c_1- c_2 \| < \nu_S(c_1)$. We also claim that $\nu_S(c_2) > 0$. Indeed, since $\nu_S$ is a $1$-Lipschitz distance function, it holds that $\nu_S(c_1) - \nu_S(c_2) \leq \| c_1- c_2 \| < \nu_S(c_1)$ and hence $\nu_S(c_2) > 0$.
As above, let $\{v_j : j = 1, ..., K\}$ be the extreme points of $S$, i.e., $S = \text{conv}(v_1, \ldots, v_K)$. Since $\nu_S(c_1) > 0$ and $\nu_S(c_2) > 0$, both $w^*(c_1)$ and $w^*(c_2)$ must be extreme points solutions, i.e., $w^*(c_1) = v_{j_1}$ and $w^*(c_1) = v_{j_2}$ for some indices $j_1$ and $j_2$.

We now prove the lemma by contradiction. If $w^*(c_1) \not = w^*(c_2)$, then by \eqref{equ:thm8}, the following two inequalities hold:
\begin{align*}
    \nu_S(c_1) \le  \frac{c_1^T (w^*(c_2) - w^*(c_1))}{\|w^*(c_2) - w^*(c_1)\|_*}, \quad \nu_S(c_2) \le  \frac{c_2^T (w^*(c_1) - w^*(c_2))}{\|w^*(c_1) - w^*(c_2)\|_*}.
\end{align*}
We add up both sides of the above two inequalities, and get 
\begin{align*}
    \nu_S(c_1) + \nu_S(c_2) \le \frac{(c_1 - c_2)^T (w^*(c_2) - w^*(c_1))}{\|w^*(c_2) - w^*(c_1)\|_*} \le \frac{\|c_1 - c_2\| \|w^*(c_2) - w^*(c_1)\|_*}{\|w^*(c_2) - w^*(c_1)\|_*} = \|c_1 - c_2\|,
\end{align*}
where the second inequality uses H\"older's inequality. Because $\| c_1- c_2 \| < \nu_S(c_1)$, we have that $\nu_S(c_1) + \nu_S(c_2) \le \|c_1 - c_2\| < \nu_S(c_1)$. This implies that $\nu_S(c_2) <0$, which contradicts that $\nu_S$ is a non-negative distance function. Thus, we conclude that $w^*(c_1) = w^*(c_2)$. \hfill\Halmos
\end{proof}

\subsection{Proofs for Section \ref{sec:analysis}}
\label{appendix:Sec4}

\begin{proof}{\bfseries Proof of Lemma \ref{lemma:label_c}}
Let $t \geq 0$ be given. First, we show that, for any given $x \in \mathcal{X}$, $\inf_{h^* \in \mathcal{H}^*}\{\nu_S(h^\ast(x))\} \ge 2 b_t$ implies that $\nu_S(h_t(x)) \ge b_t$. Indeed, since $\nu_S$ is a 1-Lipschitz distance function, we have that $|\nu_S(h^*(x)) - \nu_S(h_t(x))| \le \|h_t(x) - h^*(x) \| $ for all $h^* \in \mathcal{H}^*$. Since $\mathcal{H}^*_\ell \subseteq \mathcal{H}^*$ and $\text{Dist}_{\mathcal{H}^*_\ell} (h_t) \le b_t$, we have that $\text{Dist}_{\mathcal{H}^*}(h_t) \leq \text{Dist}_{\mathcal{H}^*_\ell} (h_t) \le b_t$. Hence, for any $\epsilon > 0$, there exists $h^* \in \mathcal{H}^*$ satisfying
\begin{equation*}
\nu_S(h^*(x)) - \nu_S(h_t(x)) \leq \|h_t(x) - h^*(x) \| \leq \|h_t - h^\ast\|_\infty \leq b_t + \epsilon 
\end{equation*}
Since the result holds for all $\epsilon > 0$, we conclude that $\nu_S(h_t(x)) \geq  \inf_{h^* \in \mathcal{H}^*}\{\nu_S(h^\ast(x))\} - b_t$. Furthermore, since $\inf_{h^* \in \mathcal{H}^*}\{\nu_S(h^\ast(x))\} \ge 2 b_t$, it holds that $\nu_S(h_t(x)) \geq  2b_t - b_t = b_t$.

According to Algorithm \ref{alg:margin-based}, a label for $x_t$ is always acquired at iteration $t \geq 1$ if $\nu_S(h_{t-1}(x_t)) <  b_{t-1}$. Otherwise, if $\nu_S(h_{t-1}(x_t)) \ge  b_{t-1}$, then a label is acquired with probability $\tilde{p}$.
 
Therefore, using the argument above, the label probability at iteration $t$ is
\begin{align*}
    \mathbb{P}(\text{acquire a label for $x_t$}) & ~=~ \mathbb{P}(\nu_S(h_{t-1}(x_t)) <  b_{t-1}) + \tilde{p}\mathbb{P}(\nu_S(h_{t-1}(x_t)) \geq  b_{t-1}) \\
    & ~\le~ \mathbb{P}(\nu_S(h_{t-1}(x_t)) <  b_{t-1}) +  \tilde{p} \\
    & ~\le~ \mathbb{P}\left(\inf_{h^* \in \mathcal{H}^*} \{\nu_S(h^*(x_t))\} <  2 b_{t-1}\right) + \tilde{p} ~\le~ \Psi( 2 b_{t-1}) + \tilde{p}
\end{align*}

Then, the expected number of acquired labels after $T$ total iterations is at most $ \sum_{t = 1}^T \mathbb{P}(\text{acquire a label for $x_t$})\le \tilde{p}T  + \sum_{t = 1}^T  \Psi(2b_{t-1}) $. \hfill\Halmos
\end{proof}

\begin{proof}{\bfseries Proof of Theorem \ref{thm:spo_loss_directly}} Before we prove parts (a) and (b), we first prove that $b_{t} \ge 2\phi(r_{t} )$ for $t \geq 1$. Because $r_t \gets 2\omega_\ell(\hat{\mathcal{C}}, \mathcal{C})\left[\sqrt{\frac{4 \ln(2 t \hat{N_1}(\omega_\ell(\hat{\mathcal{C}},\mathcal{C})/(t + n_0), \lspo \circ \mathcal{H}) / \delta)}{t + n_0}} + \frac{2}{t + n_0}\right]$ for $t \geq 0$, we have that $r_t \le C_1\sqrt{\frac{n_0 \ln(n_0+t)}{n_0+ t}}$, for all $t\ge 1$ and some constant $C_1>0$.

Since $b_t \gets b_0 \left(\frac{n_0 \ln (n_0 + t)}{t}\right)^{-1/4}$, we have that $$\frac{b_t}{b_0}= \left(\frac{n_0 \ln (n_0 + t)}{t}\right)^{-1/4} \ge \left(\frac{n_0 \ln (n_0 + t)}{t + n_0}\right)^{-1/4}. $$

Since $\phi(\cdot) \le C_2 \sqrt{\cdot}$ for some $C_2>0$, it implies that $2\phi((\frac{x}{2 C_2})^2) \le x$ for any $x>0$. As $\phi(r_{t})$ is a non-decreasing function, we further have that $$ b_t \ge  b_0\left(\frac{n_0 \ln (n_0 + t)}{t + n_0}\right)^{-1/4} \ge  2\phi\left(\frac{b_0^2}{4 C_2^2}\sqrt{\frac{n_0 \ln (n_0 + t)}{t + n_0}}\right) \ge  2\phi\left(\frac{b_0^2}{4 C_2^2} \frac{r_t}{ C_1}\right).$$

Thus, for some initial value $b_0$ satisfying $b_0 \ge \sqrt{4C_2^2 C_1}$, we have that $b_{t} \ge  2\phi\left(\frac{b_0^2}{4 C_2^2} \frac{r_t}{ C_1}\right)\ge 2\phi(r_{t} )$, for all $t \geq 1$.

Next, we prove parts (a) and (b) separately.

To prove part (a), we first prove $\rspo(h_T) - \rspo^* \leq r_T$ for all $T \ge 1$ by strong induction. When $T = 1$, since the samples are i.i.d. collected during the warm-up period, we can use the concentration inequality for the i.i.d. samples to derive the excess SPO risk bound. For example, as pointed out by \cite{kuznetsov2015learning}, for $n$ i.i.d. samples and for any $\epsilon>2\alpha>0$, we have that:
\begin{align*}
    \mathbb{P}\left( \sup_{h \in \mathcal{H}} \left\{\left| \rspo(h) - \frac{1}{n_0} \sum_{t = 1}^{n_0}  \lspo(h(x_t),c_t) \right|\right\} \ge  \epsilon\right)\le  2 \hat{N_1}(\alpha, \lspo \circ \mathcal{H}) \exp\left\{- \frac{  n_0 (\epsilon - 2 \alpha)^2}{2\omega_\ell(\hat{\mathcal{C}}, \mathcal{C})^2 }\right\}.
\end{align*}
Considering $\alpha = \frac{\omega_\ell(\hat{\mathcal{C}},\mathcal{C})}{n_0}$ and $\epsilon =\omega_\ell(\hat{\mathcal{C}}, \mathcal{C})\sqrt{\frac{4 \ln(2n_0 \hat{N_1}(\alpha, \lspop \circ \mathcal{H}) / \delta)}{n_0}} + \frac{2\omega_\ell(\hat{\mathcal{C}},\mathcal{C})}{n_0} = r_0 / 2$ and we obtain that $\sup_{h \in \mathcal{H}} \left\{\left| \rspo(h) - \frac{1}{n_0} \sum_{t = 1}{n_0}  \lspo(h(x_t),c_t) \right|\right\} \le r_0$ with probability at least $1 - \frac{\delta}{2 T^2}$.

Thus, we have that 
\begin{align*}
    \rspo(h_0) - \rspo^* =& \rspo(h_0) - \frac{1}{n_0} \sum_{t = 1}^{n_0}  \lspo(h_0(x_t),c_t) + \frac{1}{n_0} \sum_{t = 1}^{n_0}  \lspo(h_0(x_t),c_t) - \frac{1}{n_0} \sum_{t = 1}^{n_0}  \lspo(h^*(x_t),c_t) \\ &+ \frac{1}{n_0} \sum_{t = 1}^{n_0}  \lspo(h^*(x_t),c_t) - \rspo^* \\
    &\le \rspo(h_0) - \frac{1}{n_0} \sum_{t = 1}^{n_0}  \lspo(h_0(x_t),c_t) +  \frac{1}{n_0} \sum_{t = 1}^{n_0}  \lspo(h^*(x_t),c_t) - \rspo^* \\
    &\le 2 \sup_{h \in \mathcal{H}} \left\{\left| \rspo(h) - \frac{1}{n_0} \sum_{t = 1}{n_0}  \lspo(h(x_t),c_t) \right|\right\} \\
    &\le r_0.
\end{align*}
The first inequality is because $h_0$ is the minimizer of empirical SPO loss. It implies that the excess SPO risk at $t = 0$ is smaller than $r_0$. 

Next, to prove part (a) by induction, we assume that $\rspo(h_t) - \rspo^* \le r_t $ for any $t \le T - 1$. By the Assumption \ref{assumption:1}, we have that when $\rspo(h_t) - \rspo^* \le r_t$, then $\|h_t(x) - h^*(x)\| \le \phi(r_t)$ for any $x \in \mathcal{X}$. Thus, for a given $x\in \mathcal{X}$, if $\nu_S(h_t(x)) \ge b_t = 2 \phi(r_t)$, we have that $\nu_S(h^*(x)) \ge \nu_S(h_t(x)) - \|h_t(x) - h^*(x)\| \ge 2 \phi(r_t) - \phi(r_t) = \phi(r_t)$. Thus, by Lemma \ref{lemma:identical}, we have that $w^*(h_t(x)) = w^*(h^*(x))$ and thereby $\lspo(h_t(x), c) = \lspo(h^*(x), c)$ for any $c \in \mathcal{C}$. Thus, the excess SPO loss on this sample $x$ is zero. It implies that removing this sample from the training set has no impact on the excess SPO loss. Since the rejection criterion for sample $t$ is based on $h_{t-1}$, it implies that all the rejected samples until $T$ has zero excess SPO loss, and thus we have that for any $t \le T$,
\begin{align*}
    \frac{1}{t + n_0} \sum_{(x,c) \in W_t}  \ell(h_t(x), c) - \frac{1}{t + n_0} \sum_{(x,c) \in W_t} \ell(h^*(x), c) = \frac{1}{t + n_0} \sum_{i = 1}^{t + n_0} \ell(h_t(x_i), c_i) - \frac{1}{t + n_0} \sum_{i = 1}^{t + n_0} \ell(h^*(x_i), c_i). 
\end{align*}
Thus, we can use the i.i.d. concentration inequality again and follow the same analysis as $n_0$. It implies that $\rspo(h_T) - \rspo^* \le r_T$. Thus, we obtain that 
$\rspo(h_t) - \rspo^* \leq r_t$ for $t = T$ and complete the induction. 

Finally, applying the union bound over all $T \geq 1$, we obtain that the events $\rspo(h_t) - \rspo^* \leq r_t$  hold simultaneously for all $T \geq 1$ with probability at least 
\begin{equation*}
    1 - \frac{\delta}{2}\sum_{T = 1}^\infty \frac{1}{T^2} = 1 - \frac{\delta\pi^2}{12} > 1 - \delta.
\end{equation*}

Next, we prove the second upper bound in part (a), i.e., $\rspo(h_T) - \rspo^* \leq  \Psi(2b_{T})\omega_S(\mathcal{C})$. By combining Assumption \ref{assumption:upper-bound-pointwise} with the first upper bound, we have that with probability at least $1 - \delta$, for any $\epsilon>0$ there exists $h^\ast \in \mathcal{H}^*$ such that for almost every  $x \in \calX$,
\begin{equation*}
    \|h_T(x) - h^\ast(x)\| ~\leq~ \phi\left( r_{T} \right) + \epsilon ~\le~ 2 \phi\left( r_{T} \right) + \epsilon ~=~ b_T + \epsilon.
\end{equation*}
For a given $x \in \calX$ we consider two cases:  {\em (i)} $\nu_S(h^*(x)) \geq 2b_T$, and {\em (ii)} $\nu_S(h^*(x)) < 2b_T$. Under case {\em (i)}, we have that $\max\{\nu_S(h_T(x)), \nu_S(h^*(x))\} \geq 2b_T > b_T + \epsilon$ for $\epsilon<b_T$; thus combining Lemma \ref{lemma:identical} and \eqref{eq:phi_dist_bound} yields that $w^\ast(h_T(x)) = w^\ast(h^\ast(x))$, and hence $\rspo(h_T) - \rspo^* = 0$, for almost every $x \in \calX$ under case {\em (i)}. For case {\em (ii)}, we also apply the worst case bound $\rspo(h_T) - \rspo^* \leq \omega_S(\mathcal{C})$ and note that the probability of case {\em (ii)} occurring is at most $\mathbb{P}\left(\inf_{h^* \in \mathcal{H}^*} \{\nu_S(h^*(x))\} <  2 b_{T}\right) \leq \Psi(2b_T)$. Therefore, we have with probability at least $1- \delta$,
\begin{equation*}
    \rspo(h_T) - \rspo^* \leq \Psi(2b_T)\omega_S(\mathcal{C}).
\end{equation*}

Finally, we prove part (b). First note that, by Assumption \ref{assumption:upper-bound-pointwise}, we have that $\mathrm{Dist}_{\mathcal{H}^*_{\ell}}(h_0) \le \phi(R_\ell(h_0) - R_\ell^*) \le \phi(\omega_\ell(\hat{\mathcal{C}}, \mathcal{C})) \leq \phi(r_0) \leq b_0$. 
When $R_\ell(h_T) - R_\ell^* \le r_{T} $ in part (a) holds, by Assumption \ref{assumption:upper-bound-pointwise}, we have that $\mathrm{Dist}_{\mathcal{H}^*_{\ell}}(h_T) \le \phi(R_\ell(h_T) - R_\ell^*) \le\phi( r_{T}) \le b_T$. Thus,
part {\em (a)} implies $\mathrm{Dist}_{\mathcal{H}^*_{\ell}}(h_t) \le b_t$ holds simultaneously for all $t \geq 0$ with probability at least $1 - \delta$. By Lemma \ref{lemma:label_c}, since $\tilde{p} = 0$, conditional on part {\em (a)}, the label complexity is at most $\sum_{t=1}^T\Psi(2b_{t-1})$. With probability at most $\delta$, we consider the worst case label complexity $T$ and hence arrive at the overall label complexity bound of $\sum_{t=1}^T\Psi(2b_{t-1}) + \delta T$.
\hfill \Halmos
\end{proof}

\begin{proof}{\bfseries Proof of Example \ref{example:SPO_ideal}}
To prove that there exists a function $\phi(\cdot) \le k \sqrt{\cdot}$ for some constant $k>0$, we consider its contrapositive. It suffices to show that there exists a non-decreasing function $\phi: \bbR_+ \rightarrow \bbR_+$ with $\phi(0) = 0$ such that for any $h \in \mathcal{H}$, for any $\epsilon>0$, 
\begin{align*}
    \mathrm{Dist}_{\mathcal{H}^*_\ell} (h) > k \sqrt{\epsilon}  ~\Rightarrow~ \rspo(h) - \rspo^* > \epsilon .
\end{align*}

We set $k \gets \rho(\mathcal{C}) \sqrt{\frac{1}{\underline{\mu} k_1}}$.
When $\mathrm{Dist}_{\mathcal{H}^*_\ell} (h) > k\sqrt{\epsilon} >0$, we have that $h \not \in \mathcal{H}^*_\ell$. Thus, $\rspo(h) - \rspo^* > 0$. Since $\mathcal{X}$ has finite support, for simplicity, we use $\mu(x)$ to denote $\bbP(X = x)$ and define $\underline{\mu}:= \min_{x \in \mathcal{X}, \mu(x)>0} \mu(x)$. Since the hypothesis class is well-specified, we observe that $$\rspo(h) - \rspo^* = \bbE\Big[ \bbE[ c^T\big(w^*(h(x)) - w^*(h^*(x))\big) | x] \Big] = \sum_{x \in \mathcal{X}} \mu(x) \bbE[ c|x]^T\big(w^*(h(x)) - w^*(\bbE[c|x])\big).$$

Thus, there must exist some $x \in \mathcal{X}$, such that $w^*(h(x))$ and  $w^*(\bbE[c|x])$ are different. By the third assumption in the example, we have that there exists a constant $k_1>0$, such that for any $x \in \mathcal{X}$ and any extreme point $w_1 \not = w^*(\bbE[c|x])$, $\bbE[c|x]^T (w_1 - w^*(\bbE[c|x])\ge k_1$. Thus, we obtain $\bbE[ c^T\big(w^*(h(x)) - w^*(\bbE[c|x])\big) | x]  \ge k_1 >0$.

Since $\bbE[c|x]$ is the minimizer of the SPO risk for each feature $x$, we have that $\bbE[ c^T\big(w^*(h(x)) - w^*(h^*(x))\big) | x] \ge 0$ for any $x \in \mathcal{X}$. Thus,  $$\rspo(h) - \rspo^* = \sum_{x \in \mathcal{X}} \mu(x) \bbE[ c|x]^T\big(w^*(h(x)) - w^*(\bbE[c|x])\big) \ge \underline{\mu} \bbE[ c|x]^T\big(w^*(h(x)) - w^*(\bbE[c|x])\big) \ge \underline{\mu} k_1.$$ 
Since $\rho(\mathcal{C}) \ge \mathrm{Dist}_{\mathcal{H}^*_\ell} (h) > k\sqrt{\epsilon}$, we have that $\frac{k \sqrt{\epsilon}}{\rho(\mathcal{C})} < 1$. Thus, we further have that $$\rspo(h) - \rspo^* \ge \underline{\mu} k_1 \left(\frac{k \sqrt{\epsilon}}{\rho(\mathcal{C})} \right)^2 = \epsilon.$$ 
The last equation is obtained by plugging in the value of $k \gets \rho(\mathcal{C}) \sqrt{\frac{1}{\underline{\mu} k_1}}$. Thus, we have that when $\mathrm{Dist}_{\mathcal{H}^*_\ell} (h) > k \sqrt{\epsilon}  >0$, 
 $\rspo(h) - \rspo^* > \epsilon$. \hfill \Halmos
\end{proof}

\begin{proof}{\bfseries Proof of Prop. \ref{prop:boundcover}}

We use the results in \cite{rakhlin2015online} to prove that $\ln( N_1(\alpha, \ell^{\mathtt{rew}} \circ \mathcal{H},T) ) \le d_\theta \ln\left(1 + \frac{2 \rho(\mathit{\Theta})L_1L_2}{\alpha \tilde{p}^{\mathbb{I}\{\tilde{p}>0\}}}\right) $.  
We use $N_{\infty}(\alpha, \ell^{\mathtt{rew}} \circ \mathcal{H}, \pmb{z})$ to denote the sequential covering number with respect to the $\ell_{\infty}$ norm on tree $\pmb{z}$, defined analogousy to Definition \ref{def:covering}.
In other words,
$N_{\infty}(\alpha,\ell^{\mathtt{rew}} \circ\mathcal{H},\pmb{z})$ is the size of the minimal sequential cover $V$ of real-valued trees on a tree $\mathbf{z}$ of depth $T$ such that, for all $h \in \mathcal{H}$ and all paths $\pmb{\sigma}\in \{\pm 1\}^T$, there exists a real-valued tree $\pmb{v} \in V$ such that
\begin{align*}
      \max_{t = 1,...,T}\{|\pmb{v}_t(\pmb{\sigma}) - \ell^{\mathtt{rew}}(h; \pmb{z}_t(\pmb{\sigma}))|\} \le  \alpha.
\end{align*}
We further define the sequential covering number with respect to the $\ell_{\infty}$ norm by $N_{\infty}(\alpha, \ell^{\mathtt{rew}} \circ \mathcal{H},T) := \sup_{\pmb{z}}\{ N_{\infty}(\alpha, \ell^{\mathtt{rew}} \circ\mathcal{H}, \pmb{z})\}$.

Since the surrogate loss $\ell$ is an $L_2$-Lipschitz function of $h(x_t)$ with respect to  $\ell_{\infty}$ norm, and $\mathcal{H}$ is a class of functions smoothly-parameterized by $\theta \in \mathit{\Theta}$ with respect to the $\ell_{\infty}$ norm with parameter $L_1$, we have that the composition function $\ell^{\mathtt{rew}}(h; \cdot)$ is also smoothly-parameterized by $\theta \in \mathit{\Theta}$ with respect to the $\ell_{\infty}$ norm with parameter $\frac{L_1L_2}{\tilde{p}^{\mathbb{I}\{\tilde{p}>0\}}}$, for any given $c_t, d^M_t$, and $q_t$.

We denote the i.i.d. $\ell_{\infty}$ covering number of function class $\{\ell^{\mathtt{rew}}(h; \cdot) | h \in \mathcal{H} \}$ at scale $\alpha$ by $\hat{N}_{\infty}(\alpha,\ell^{\mathtt{rew}} \circ\mathcal{H})$, which was defined in Section \ref{sec:noniid} after the Definition \ref{def:covering}.

Since the hypothesis class is smoothly parameterized, the  i.i.d. covering number, $\hat{N}_{\infty}(\alpha,\ell^{\mathtt{rew}} \circ\mathcal{H})$ is at most $\hat{N}_{\infty}(\frac{\alpha \tilde{p}^{\mathbb{I}\{\tilde{p}>0\}}}{L_1L_2},\mathit{\Theta})$, for example, see Theorem 2.7.11 in \cite{wellner2013weak}. By Example 5.8 in \cite{wainwright2019high}, we have that  $\ln(\hat{N}_{\infty}(\alpha,\mathit{\Theta}))\le d_\theta \ln(1 + \frac{2 \rho(\mathit{\Theta})}{\alpha})$, we have that $\ln(\hat{N}_{\infty}(\alpha,\ell^{\mathtt{rew}} \circ\mathcal{H})) \le d_\theta \ln(1 + \frac{2 \rho(\mathit{\Theta})L_1L_2}{\alpha \tilde{p}^{\mathbb{I}\{\tilde{p}>0\}}})$.

Equation (14) in \cite{rakhlin2015online} indicates that the sequential $\ell_{\infty}$ covering number is upper bounded by the i.i.d. $\ell_{\infty}$ covering number, i.e., $\ln(N_{\infty}(\alpha, \ell^{\mathtt{rew}} \circ \mathcal{H},T)) \le \ln(\hat{N}_{\infty}(\alpha,\ell^{\mathtt{rew}} \circ  \mathcal{H}))$, for all $T \ge 1$. (Intuitively, if two functions $f$ and $g$ satisfies $\|f -g\|_{\infty} \le \alpha$, then for the values of nodes on any path, $z_1, z_2, ..., z_T$, we have that $|f(z_i) - g(z_i)| \le \alpha$, for $i = 1,...,T$.) Thus, $\ln(N_{\infty}(\alpha, \ell^{\mathtt{rew}} \circ \mathcal{H},T))$ is at most  $ d_\theta \ln(1 + \frac{2 \rho(\mathit{\Theta})L_1L_2}{\alpha\tilde{p}^{\mathbb{I}\{\tilde{p}>0\}}})$.

Then, by \cite{rakhlin2015sequential}, we have that $ N_1(\alpha, \ell^{\mathtt{rew}} \circ \mathcal{H},T) \le N_{\infty}(\alpha, \ell^{\mathtt{rew}} \circ \mathcal{H},T)$. Thus, $\ln(N_1(\alpha, \ell^{\mathtt{rew}} \circ \mathcal{H},T))$ is at most $ d_\theta \ln(1 + \frac{2 \rho(\mathit{\Theta})L_1L_2}{\alpha \tilde{p}^{\mathbb{I}\{\tilde{p}>0\}}})$. Thus, we have that for any fixed $\alpha>0$, $\ln( N_1(\alpha, \ell^{\mathtt{rew}} \circ \mathcal{H},T) ) \le d_\theta \ln(1 + \frac{2 \rho(\mathit{\Theta})L_1L_2}{\alpha \tilde{p}^{\mathbb{I}\{\tilde{p}>0\}}})\le \mathcal{O}(\ln(\frac{1}{\alpha \tilde{p}^{\mathbb{I}\{\tilde{p}>0\}}}))$. \hfill\Halmos

\end{proof}

\begin{proof}{\bfseries Proof of Theorem \ref{uniform_margin}}

The proof of the statement $b_t \ge 2 \phi(r_t)$ is the same as the first part of the proof for Theorem \ref{thm:spo_loss_directly}.
Next, we provide the proof of each part separately.

\paragraph{\bf{Part {\em (a)}.}}

Let us now prove part {\em (a)}. For simplicity of expression, we reset the index of $t$ and treat samples $x_t$ in the warm-up period as those where $d^M_t = 1$ for $t \le n_0$, since all samples within the warm-up period are included in the training set.
When $T = 1$, part {\em (a)} holds by the generalization error bound for the i.i.d. samples. Otherwise, let $T \geq 2$ be given. For any $t \in \{1, \ldots, T\}$, recall that the re-weighted loss function at iteration $t$ is in this case given by $\ell^{\mathtt{rew}}(h; z_t) := d^M_t\ell(h(x_t), c_t) +(1 - d^M_t)(q_t/\tilde{p})\ell(h(x_t), c_t)$. 
Since $\tilde{p}>0$, and $q_t$ is a random variable that independent of $x_t, c_t$, and $d^M_t$, we condition on the two possible values of $q_t \in \{0,1\}$ and obtain the following decomposition:
\begin{align*}
    \bbE[\ell^{\mathtt{rew}}(h; z_t)|\mathcal{F}_{t-1}] &= \bbE[\ell(h(x_t),c_t)d^M_t|\mathcal{F}_{t-1}] + \bbE[\ell(h(x_t),c_t)(1 - d^M_t)|\mathcal{F}_{t-1}] \\ 
    &=\bbE[\ell(h(x_t),c_t)|\mathcal{F}_{t-1}] = \bbE[\ell(h(x_t),c_t)] = R_{\ell}(h),
\end{align*}
where we have also used that $(x_t, c_t)$ is independent of $\mathcal{F}_{t-1}$. In other words, the conditional expectation of re-weighted surrogate loss at iteration $t$ equals the surrogate risk.
Consider the above applied to both $h \in \mathcal{H}$ and $h^\ast \in \mathcal{H_\ell^\ast}$ and averaged over $t \in \{1, \ldots, T\}$ to yield:
\begin{equation}\label{equ:surrogate_weighted_averaged}
R_{\ell}(h) - R_\ell(h^\ast) ~=~ \frac{1}{T}\sum_{t = 1}^T \left(\bbE[\ell^{\mathtt{rew}}(h; z_t)|\mathcal{F}_{t-1}] - \bbE[\ell^{\mathtt{rew}}(h^\ast; z_t)|\mathcal{F}_{t-1}]\right)
\end{equation}
We denote the discrepancy between the expectation and the true excess re-weighted loss of predictor $h$ at time $t$ by $Z^{\mathtt{t}}_h$, i.e., $Z^{\mathtt{t}}_h := \bbE[\ell^{\mathtt{rew}}(h; z_t) - \ell^{\mathtt{rew}}(h^*; z_t)| \mathcal{F}_{t-1}] -  (\ell^{\mathtt{rew}}(h; z_t) - \ell^{\mathtt{rew}}(h^*; z_t))$.
Recall that the empirical re-weighted loss in Algorithm \ref{alg:margin-based} is 
$\hat{\ell}_{}^T(h) = \frac{1}{T}\left( \sum_{(x,c) \in W_T} \ell_{}(h(x), c) +\frac{1}{\tilde{p}} \sum_{(x,c) \in \tilde{W}_T} \ell_{}(h(x), c) \right) = \frac{1}{T}\sum_{t = 1}^T\ell^{\mathtt{rew}}(h; z_t)$. Thus, \eqref{equ:surrogate_weighted_averaged} is equivalently written as:
\begin{equation}\label{equ:surrogate_exc}
R_{\ell}(h) - R_\ell(h^\ast) ~=~ \frac{1}{T} \sum_{t = 1}^TZ^{\mathtt{t}}_h + \frac{1}{T}\sum_{t = 1}^T\left(\ell^{\mathtt{rew}}(h; z_t) - \ell^{\mathtt{rew}}(h^*; z_t)\right),
\end{equation}
for any $h \in \mathcal{H}$. To bound the term $\frac{1}{T} \sum_{t = 1}^TZ^{\mathtt{t}}_h$, we apply Prop. \ref{prop:noniid} twice to both $h$ and $h^\ast$, with $\alpha \gets \frac{\omega_{\ell}(\mathcal{\hat {C}}, \mathcal{C})}{T}$ and $\epsilon \gets \frac{\omega_\ell(\hat{\mathcal{C}}, \mathcal{C})}{\tilde{p}} \sqrt{\frac{4 \ln(2T N_1(\alpha, \ell^{\mathtt{rew}} \circ \mathcal{H},T) / \delta)}{T}} + 2 \alpha$. Then, by considering their differences using the union bound we have that
\begin{equation*}
    \sup_{h \in \mathcal{H}}\left|\frac{1}{T} \sum_{t = 1}^TZ^{\mathtt{t}}_h\right| ~\le~ 2\epsilon = r_T,
\end{equation*}
with probability at least $1 - \frac{\delta}{2 T^2}$.

Since $h_T$ is the minimizer of the empirical re-weighted loss $\hat{\ell}_{}^T(h)$ over $\mathcal{H}$, we have that $\ell^{\mathtt{rew}}(h_T; z_t) - \ell^{\mathtt{rew}}(h^*; z_t) \le 0$ in \eqref{equ:surrogate_exc} and we obtain that 
$R_{\ell}(h) - R_\ell(h^\ast) \le r_T$ with probability at least $1 - \frac{\delta}{2 T^2}$.
Finally, applying the union bound over all $T \geq 1$, we obtain that $R_{\ell}(h) - R_\ell(h^\ast) \le r_T$ simultaneously for all $T \geq 1$ with probability at least $1-\delta$, which is the result of part {\em (a)}.

\paragraph{\bf{Part {\em (b)}.}}
Recall that for any $h^\ast \in \mathcal{H}^\ast$, the excess SPO risk can be written as
\begin{equation*}
    \rspo(h_T) - \rspo^* ~=~ \bbE_{(x, c) \sim \mathcal{D}} [c^T(w^\ast(h_T(x)) - w^\ast(h^*(x)))].
\end{equation*}
Again, we apply part {\em (a)} with probability at least $1 - \delta$. 
Recall that, by Assumption \ref{assu:consistent}, we have that $\text{Dist}_{\mathcal{H}^*}(h_T) \leq \text{Dist}_{\mathcal{H}^*_\ell} (h_T)$.
Then, by combining Assumption \ref{assumption:upper-bound-pointwise} with part {\em (a)}, with probability at least $1 - \delta$, for any $\epsilon>0$ there exists $h^\ast \in \mathcal{H}^*$ such that for almost every  $x \in \calX$,
\begin{equation}\label{eq:phi_dist_bound}
    \|h_T(x) - h^\ast(x)\| ~\leq~ \phi\left( r_{T} \right) + \epsilon ~\le~ 2 \phi\left( r_{T} \right) + \epsilon ~=~ b_T + \epsilon.
\end{equation}
For a given $x \in \calX$ we consider two cases:  {\em (i)} $\nu_S(h^*(x)) \geq 2b_T$, and {\em (ii)} $\nu_S(h^*(x)) < 2b_T$. Under case {\em (i)}, we have that $\max\{\nu_S(h_T(x)), \nu_S(h^*(x))\} \geq 2b_T > b_T + \epsilon$ for $\epsilon<b_T$; thus combining Lemma \ref{lemma:identical} and \eqref{eq:phi_dist_bound} yields that $w^\ast(h_T(x)) = w^\ast(h^\ast(x))$, and hence $\rspo(h_T) - \rspo^* = 0$, for almost every $x \in \calX$ under case {\em (i)}. For case {\em (ii)}, we also apply the worst case bound $\rspo(h_T) - \rspo^* \leq \omega_S(\mathcal{C})$ and note that the probability of case {\em (ii)} occurring is at most $\mathbb{P}\left(\inf_{h^* \in \mathcal{H}^*} \{\nu_S(h^*(x))\} <  2 b_{T}\right) \leq \Psi(2b_T)$. Therefore, overall we have with probability at least $1- \delta$,
\begin{equation*}
    \rspo(h_T) - \rspo^* \leq \Psi(2b_T)\omega_S(\mathcal{C}).
\end{equation*}

\paragraph{\bf{Part {\em (c)}.}}
First note that, by Assumption \ref{assumption:upper-bound-pointwise}, we have that $\mathrm{Dist}_{\mathcal{H}^*_{\ell}}(h_0) \le \phi(R_\ell(h_0) - R_\ell^*) \le \phi(\omega_\ell(\hat{\mathcal{C}}, \mathcal{C})) \leq \phi(r_0) \leq b_0$. 
Again, we apply part {\em (a)} with probability at least $1 - \delta$. Indeed, when $R_\ell(h_T) - R_\ell^* \le r_{T}  +  \frac{ \eta}{T}\sum_{t=0}^{T-1} b_t^2 $ holds, by Assumption \ref{assumption:upper-bound-pointwise}, we have that $\mathrm{Dist}_{\mathcal{H}^*_{\ell}}(h_T) \le \phi(R_\ell(h_T) - R_\ell^*) \le\phi( r_{T}  +  \frac{ \eta}{T}\sum_{t=0}^{T-1} b_t^2) \le b_T$. Thus,
part {\em (a)} implies $\mathrm{Dist}_{\mathcal{H}^*_{\ell}}(h_t) \le b_t$ holds simultaneously for all $t \geq 0$ with probability at least $1 - \delta$. By Lemma \ref{lemma:label_c}, since $\tilde{p} = 0$, conditional on part {\em (a)}, the label complexity is at most $\sum_{t=1}^T\Psi(2b_{t-1})$. With probability at most $\delta$, we consider the worst case label complexity $T$ and hence arrive at the overall label complexity bound of $\sum_{t=1}^T\Psi(2b_{t-1}) + \delta T$.

\hfill\Halmos
\end{proof}

\subsection{Proofs for Section \ref{sec:smalllabel}}

\begin{proof}{\bfseries Proof of Examples \ref{example:2} and \ref{example:3}} We first consider Example \ref{example:2}. In the margin condition, let $b_0\gets \underline{k}$. When $b<\underline{k}$, since $\bbP(\nu_S(h^*(x)) < \underline{k}  ) = 0$, we have that $\Psi(b) = 0 \le (b / \underline{k})^\kappa$. When $b\ge\underline{k}$, we have that $\Psi(b) \le 1 \le (b / \underline{k})^\kappa$ for any $\kappa >0$. Thus, the margin condition holds with $\kappa > 0$. 

Next, we consider Example \ref{example:3}. When $h^*(x)$ follows a distribution with bounded density, it is well-known that the margin condition holds with $\kappa = 1$, for example, \cite{hu2020fast} and \cite{hu2024fast}. Thus, when $U$ follows uniform distribution, we have that $\bbP(\nu_S(U) < b  ) \le \tilde{\mu} b$, for some constant $\tilde{\mu} > 0$.

Notice that by the property of the distance to degeneracy, we have $\nu_S(k\cdot c) = k \cdot \nu_S(c)$, for any vector $c$ and any real number $k>0$. 
Since  $\nu_S(U) = 0$ has zero measure and we can just ignore the first case and ignore $\bar{C}_0$.
Thus, in Example \ref{example:3}, when $\nu_S(U) > 0$,
$$\nu_S(Y) = \nu_S([\nu_S(U)]^{1/\kappa - 1} \cdot U) = [\nu_S(U)]^{1/\kappa - 1}\nu_S(U) = \nu_S(U)^{1 / \kappa}.$$

Thus, we have that 
\begin{align*}
\bbP(\nu_S(h^*(x)) \le b ) = \bbP(\nu_S(Y) \le b ) &= \bbP( \nu_S(U)^{1 / \kappa} \le b ) \\
 &=  \bbP\left( \nu_S(U) \le b^\kappa \right) \le \tilde{\mu} b^{\kappa} \le \left(\frac{b}{\min\{1, 1/\tilde{\mu}\}}\right)^{\kappa}.   
\end{align*}
Thus, the margin condition holds with $b_0 \gets \min\{1, 1/\tilde{\mu}\}$.  \hfill \Halmos   
\end{proof}

\begin{proof}{\bfseries Proof of Prop. \ref{prop:sublinear_spo}}
    We first prove (a). Since $r_t \le \frac{r_1}{\sqrt{t}}$, by Claim (a) in Theorem \ref{thm:spo_loss_directly}, we have that $\rspo(h_T) - \rspo^* \le r_T \le \otilde( T^{- 1/2})$. Next, since $\phi(\cdot)\le\otilde(\sqrt{\cdot})$, we have that $b_t = 2 \phi(r_t) \le \otilde(\sqrt{r_t}) \le \otilde(\sqrt{ T^{- 1/2} }) \le \otilde( T^{- 1/4}) $. Consequently, by Claim (a) in Theorem \ref{thm:spo_loss_directly}, we have that $\rspo(h_T) - \rspo^* \le \otilde(\Psi(2b_T)) \le \otilde(\Psi(T^{-1/4}))$.  By Assumption  \ref{assumption:noise}, we have $\Psi(b)\le \otilde(b^\kappa)$. Thus, we have that $\rspo(h_T) - \rspo^* \le \otilde(\Psi(T^{-1/4})) \le \otilde(T^{-\kappa/4})$.

Next, we prove part (b) which provides an upper bound for the label complexity. We set  $\delta$ as a very small number, for example, $\delta \le \otilde(1 / T^3)$, so we can ignore the last term in the label complexity in part {\em (b)}. Then, we have that $\bbE[n_t] \le \otilde(2\sum_{t = 1}^T \Psi(2 b_t))$.

 Because $\Psi(2b_t) \le \otilde(T^{-\kappa/4})$, by Theorem \ref{thm:spo_loss_directly}, we have that $\sum_{t = 1}^T \Psi(2 b_t) \le \sum_{t = 1}^T \otilde(t^{-\kappa/4}) \le \otilde(T^{1 - \kappa / 4})$, where we use the fact that $\sum_{i=0}^{t-1} i^{-A} \le \otilde( t^{1-A})$ for $A>0$. Then, we can obtain the label complexity in Prop. \ref{prop:sublinear_spo} depending on the value of $\kappa$.\hfill\Halmos
\end{proof}

\begin{proof}{\bfseries Proof of Prop. \ref{proposition:margin-based}}
We first consider the label complexity.  By the part {\em (c)} in Theorem \ref{uniform_margin}, the total label complexity $\bbE[n_t]$ is at most
        \begin{align*}
            \tilde{p}T + 2\sum_{t = 1}^T \Psi(2 b_t)  & =\tilde{p}T + \sum_{t = 1}^T 2\Psi\lr{2\phi\lr{\frac{1}{\tilde{p}} \sqrt{2 \ln(t  / \delta) / t}} } \\ 
            & \le \tilde{p}T + \sum_{t=1}^T C' \cdot \left(\frac{1}{\tilde{p}}\right)^{\frac{\kappa}{2}}\lr{  \ln(t  / \delta) / t}^{\kappa / 4} \\ 
            & \le \otilde\lr{\tilde{p}T + \left(\frac{1}{\tilde{p}}\right)^{\frac{\kappa}{2}}(T \ln T)^{1 - \kappa / 4}}. 
        \end{align*}
    \end{proof}
The first inequality is because of assumptions \ref{assumption:recover} and \ref{assumption:noise}. The second inequality is because of the integration. To minimize the order of $T$, we set $\tilde{p} = T^{-\frac{k}{2(k + 2)}}$. Then, the label complexity $\bbE[n_t]$ is at most $\otilde\lr{T ^{1 - \frac{k}{2 (k + 2)}}}$ for $\kappa >0$. 

Next, since $r_T \le \otilde(\frac{1}{\sqrt{T}\tilde{p}}) = \otilde(T^{-\frac{1}{\kappa + 2}})$, we obtain the risk bounds for the surrogate loss. Since $\phi$ is a square root function. The SPO risk is at most $2\Psi(2\phi(r_T)) \le \otilde(T^{-\frac{\kappa}{2(\kappa + 2)}})$.
\hfill\Halmos

\begin{proof}{\bfseries Proof of Prop. \ref{thm:riskcompare}}

The reason why the excess surrogate risk in Prop. \ref{proposition:margin-based} is larger than $\otilde(T^{-1/2})$ is because $r_T \leq \otilde(\frac{1}{\tilde{p}\sqrt{T}})$. Indeed, when $\tilde{p} \gets T^{-\frac{\kappa}{2(\kappa + 2)}}$, then $r_T \leq \otilde\left(T^{\left(\frac{\kappa}{2(\kappa + 2)} - \frac{1}{2}\right)}\right)$, which is larger than $\otilde(T^{-1/2})$. Moreover, the dependence on $\tilde{p}$ comes from the bound on the re-weighted loss, since the re-weighted loss is upper-bounded by $\frac{\omega_\ell(\hat{\mathcal{C}}, \mathcal{C})}{\tilde{p}}$. When $T\rightarrow \infty$, $\tilde{p} \rightarrow 0$, and thus, the re-weighted loss tends to infinity.

Given the output predictor $h_T$ at iteration $T$, recall that
$Z^{\mathtt{t}}_{h_T} := \bbE[\ell^{\mathtt{rew}}(h_T; z_t) - \ell^{\mathtt{rew}}(h^*; z_t)| \mathcal{F}_{t-1}] -  (\ell^{\mathtt{rew}}(h_T; z_t) - \ell^{\mathtt{rew}}(h^*; z_t))$. Since $\bbE[Z^{\mathtt{t}}_{h_T}]=0$, we have that $\sum_{t=1}^TZ^{\mathtt{t}}_{h_T}$ is a martingale.

Thus, if we can further remove the dependence on $\tilde{p}$ and show that $Z^{\mathtt{t}}_{h_T}$ is finite for all $T\ge0$, then we can apply the  achieve the convergence rate $\otilde(T^{-1/2})$ for $\frac{1}{T}\sum_{t=1}^TZ^{\mathtt{t}}_{h_T}$.

By the Lipschitz property, we have that 
\begin{align}\label{equ:chain}
    |\ell(h_T(x), c) - \ell(h^*(x), c)| \le L_\kappa   \|h_T(x) - h^*(x)\|   ,
\end{align}
for all $x \in \mathcal{X}.$
Recall that $\ell^{\mathtt{rew}}(h; z_t) := d^M_t\ell(h(x_t), c_t) +(1 - d^M_t)  \frac{q_t }{\tilde{p}} \ell(h(x_t), c_t)$. Since when $d^M_t = 1$, $\ell^{\mathtt{rew}}(h; z_t)$ is obviously upper bounded by $\omega_S(\mathcal{\hat{C}},\mathcal{C})$, and thus $Z^{\mathtt{t}}_{h_T}$ is obviously bounded. Therefore, to show $Z^{\mathtt{t}}_{h_T}$ is bounded, it suffices to consider the case when $d^M_t =0$. When $d^M_t =0$, we have that $\ell^{\mathtt{rew}}(h; z_t) = \frac{q_t }{\tilde{p}} \ell(h(x_t), c_t)$. Since $\tilde{p}_t \ge \alpha_1 \|h_T(x) - h^*(x)\|$, we have that 
\begin{align*}
    \frac{q_t}{\tilde{p}_t}|\ell(h_T(x), c) - \ell(h^*(x), c)| \le \frac{1}{\tilde{p}_t}|\ell(h_T(x), c) - \ell(h^*(x), c)| \le L_\kappa /\alpha_t \le L_\kappa /\underline{\alpha}.
\end{align*}
The above implies that  $Z^{\mathtt{t}}_{h_T}$ is also bounded when $d^M_t = 0$. We denote the upper bound of $Z^{\mathtt{t}}_{h_T}$ by $\sqrt{C_1}>0$.
Thus, by Theorem 1 in \cite{kuznetsov2015learning}, when $Z^{\mathtt{t}}_{h_T}$ is bounded by $\sqrt{C_1}$, we have that  for $\epsilon> 2\alpha>0$, the following holds:
\begin{align*}
    \mathbb{P}\left( \sup_{h \in \mathcal{H}} \left\{\left|\frac{1}{T} \sum_{t = 1}^T\left( \bbE[\ell^{\mathtt{rew}}(h; z_t)|\mathcal{F}_{t-1}] -  \ell^{\mathtt{rew}}(h; z_t) \right) \right|\right\} \ge  \epsilon\right)\le  2 N_1(\alpha, \ell^{\mathtt{rew}} \circ \mathcal{H},T) \exp\left\{- \frac{  T (\epsilon - 2 \alpha)^2}{2C_1 }\right\}.
\end{align*}

We set $r_t \gets  \sqrt{\frac{8 C_1\ln(2 N_1(\frac{\omega_{\ell}(\mathcal{\hat {C}}, \mathcal{C})}{t + n_0}, \ell^{\mathtt{rew}} \circ \mathcal{H},t + n_0) / \delta)}{t + n_0}} +  \frac{4 \omega_\ell(\hat{\mathcal{C}}, \mathcal{C}) }{t + n_0}$ for $t \geq 0$, $b_t \gets 2\phi(r_t)$ for $t \geq 0$.

Then, following the same procedure of proof of Theorem \ref{uniform_margin} and substituting the new value of $r_t$ and $b_t$, we obtain that $R_{\ell}(h_T) - R_{\ell}(h^*)\le r_T$ for all $T \ge 1$. Thus,  we conclude that $R_{\ell}(h_T) - R_{\ell}(h^*)$  converges to zero at rate $\otilde(\sqrt{\ln(T)/T})$.

Finally, to derive the upper bound for the expected number of acquired labels $\bbE[n_T]$, by Theorem \ref{uniform_margin}, we have that $\|h_t(x) - h^*(x)\| \le \phi(r_t)$. Since $r_t \le \otilde(T^{-\frac{1}{\kappa + 2}})$, we further have that 
\begin{align*}
    \bbE[n_T] &\le \sum_{t = 1}^T \left[ \|h_t(x) - h^*(x)\| + \Psi(2 b_{t-1}) \right] + \delta T \\
    &\le  \sum_{t = 1}^T \left[ \phi(r_t) + \Psi(2 b_{t-1}) \right] + \delta T \\ 
    &\le \sum_{t = 1}^T \left[ \otilde(T^{-\frac{1}{2(\kappa+ 2)}}) + \otilde(T^{-\frac{\kappa}{2(k+ 2)}}) \right] + \delta T \\
    &\le  \otilde(T^{1 -\frac{1}{2(\kappa+ 2)}}) + \otilde(T^{1 -\frac{\kappa}{2(k+ 2)}}) + \delta T. 
\end{align*}
Thus, we have that $\bbE[n_t] \le \otilde(T^{1 - \frac{\min\{\kappa, 1\}}{2(\kappa + 2)}  } )$ when $\delta \le T^{-2}$.

\end{proof}

\begin{proof}{\bfseries Proof of Prop. \ref{prop:separable_bayes}}
Let $\bar{h} \in \mathcal{H}$ satisfy the conditions in Assumption \ref{assu:bayes}. We will show that $\rspop(\bar{h}) = 0$ and therefore, since $0 \leq \rspo(h) \leq \rspop(h)$ for all $h \in \calH$, we have that $\rspop^* = \rspo^* = 0$ and $\bar{h}$ is a minimizer for both. 

Recall that for prediction $\hat{c} \in \bbR^d$ and realized cost vector $c \in \bbR^d$, the SPO+ satisfies as
\begin{align*}
            \lspop(\hat{c}, c) &:= \max_{w \in S} \lrrr{(c - 2 \hat{c})^T w} + 2 \hat{c}^T w^*(c) - c^T w^*(c)\\
            &= -\min_{w \in S} \lrrr{(2 \hat c - c)^T w} + 2 \hat{c}^T w^*(c) - c^T w^*(c)\\
            &= (c - 2 \hat c)^T w^*(2 \hat c - c) + 2 \hat{c}^T w^*(c) - c^T w^*(c)\\
            &= 2\hat{c}^T(w^*(c) - w^*(2 \hat c - c)) + c^T(w^*(2 \hat c - c) - w^*(c)).
\end{align*}
Under Assumption \ref{assu:bayes} in the polyhedral case, we have by Lemma \ref{lemma:identical} that $w^*(\bar{h}(x)) = w^*(c)$ with probability one over $(x,c) \sim \mathcal{D}$. Similarly, we have that $\|(2\bar{h}(x) - c) - \bar{h}(x)\| = \|\bar{h}(x) - c\| \leq \varrho \nu_S(\bar{h}(x)) < \nu_S(\bar{h}(x))$, and hence $w^*(2\bar{h}(x) - c) = w^*(\bar{h}(x)) = w^*(c)$, with probability one over $(x,c) \sim \mathcal{D}$. Therefore, we have that $\rspop(\bar{h}) = \bbE_{(x, c) \sim \mathcal{D}} [\lspop(\bar{h}(x), c)] = 0$ by the above expression for $\lspop(\hat{c}, c)$. 

\hfill 
\Halmos
\end{proof}

\begin{proof}{\bfseries Proof of Theorem \ref{thm:sporiskcompare}}

Before we prove parts (a) and (b), we first prove that $b_{t} \ge (1 + \frac{2}{\tau(1-\rho)})\phi(r_{t} )$ for $t \geq 1$. Because $r_t \gets \omega_\ell(\hat{\mathcal{C}}, \mathcal{C})\left[\sqrt{\frac{4 \ln(2 (t + n_0) \hat{N_1}(\omega_\ell(\hat{\mathcal{C}},\mathcal{C})/(t + n_0), \lspop \circ \mathcal{H}) / \delta)}{t + n_0}} + \frac{2}{t + n_0}\right]$ for $t \geq 0$, we have that $r_t \le C_1\sqrt{\frac{n_0 \ln(n_0+t)}{n_0+ t}}$, for all $t\ge 1$ and some constant $C_1>0$.

By the proof of Theorem \ref{thm:spo_loss_directly}, we have that $$\frac{b_t}{b_0}= \left(\frac{n_0 \ln (n_0 + t)}{t}\right)^{-1/4} \ge \left(\frac{n_0 \ln (n_0 + t)}{t + n_0}\right)^{-1/4}. $$

Since $\phi(\cdot) \le C_2 \sqrt{\cdot}$ for some $C_2>0$, it implies that $(1 + \frac{2}{\tau(1-\rho)})\phi((\frac{x}{(1 + \frac{2}{\tau(1-\rho)}) C_2})^2) \le x$ for any $x>0$. As $\phi(r_{t})$ is a non-decreasing function, we further have that $$ b_t \ge  b_0\left(\frac{n_0 \ln (n_0 + t)}{t + n_0}\right)^{-1/4} \ge  2\phi\left(\frac{b_0^2}{C_3 C_2^2}\sqrt{\frac{n_0 \ln (n_0 + t)}{t + n_0}}\right) \ge  2\phi\left(\frac{b_0^2}{C_3 C_2^2} \frac{r_t}{ C_1}\right),$$

where $C_3:= (1 + \frac{2}{\tau(1-\rho)})^2$.

Thus, for some initial value $b_0$ satisfying $b_0 \ge \sqrt{C_3 C_2^2 C_1}$, we have that $b_{t} \ge  2\phi\left(\frac{b_0^2}{C_3 C_2^2} \frac{r_t}{ C_1}\right)\ge 2\phi(r_{t} )$, for all $t \geq 1$.

We provide the proof of part {\em (a)}, as the proofs of parts {\em (b)} and {\em (c)} are completely analogous to Theorem \ref{uniform_margin}.

Before proving part (a), for simplicity of expression, we reset the index of $t$ and treat samples $x_t$ in the warm-up period as those where $d^M_t = 1$ for $t \le n_0$, since all samples within the warm-up period are included in the training set.
Recall that $(x_1, c_1), (x_2, c_2), \ldots$ is the sequence of features and corresponding cost vectors of Algorithm \ref{alg:margin-based}. It is assumed that this sequence is an i.i.d. sequence from the distribution $\mathcal{D}$ and note that $c_t$ is only observed when we do not reject $x_t$, i.e., when $d_t^M = \mathbb{I}(\nu_S(h_{t-1}(x_t)) < b_{t-1}) = 1$. In a slight abuse of notation, in this proof only, let us define $Z^{\mathtt{t}}_h := \bbE[\lspop(h(x_t), c_t)] -  \lspop(h(x_t),c_t) = \rspop(h) - \lspop(h(x_t),c_t)$. Following the template of Lemma \ref{lemma:excessspop}, the main idea of the proof is to show that, when $r_t \ge \sup_{h \in \mathcal{H}} \left\{\left|\frac{1}{t} \sum_{i = 1}^tZ^{\mathtt{i}}_h  \right|\right\}$ for all $t \geq 1$, we have that:  
\begin{equation*}
(A) \ \max_{t \in \{1, \ldots, T\}}\left\{\lspop(h_T(x_t),c_t)\right\} = 0 \ \text{ with probability 1 for all } T \geq 1.   
\end{equation*}
In other words, $h_T$ achieves zero SPO+ loss across the entire sequence $(x_1, c_1), \ldots, (x_T, c_T)$. In fact, we show a strong result, which is that {\em (A)} holds for \emph{all} minimizers of the empirical reweighted loss at iteration $T$. The proof of this is by strong induction, and we defer it to the end.

Notice that $\max_{t \in \{1, \ldots, T\}}\left\{\lspop(h_T(x_t),c_t)\right\} = 0$ implies, of course, that $h_T$ achieves zero (and hence minimizes) empirical risk $\frac{1}{T} \sum_{t = 1}^T  \lspop(h(x_t),c_t)$. Thus, using $r_t \ge \sup_{h \in \mathcal{H}} \left\{\left|\frac{1}{t} \sum_{i = 1}^tZ^{\mathtt{i}}_h  \right|\right\}$ for all $t \geq 1$, we have that
\begin{equation*}
    \rspop(h_T) = \rspop(h_T) - \frac{1}{T}\sum_{t = 1}^T\lspop(h_T(x_t),c_t) = \frac{1}{T}\sum_{t = 1}^TZ^{\mathtt{t}}_{h_T} \leq r_T,
\end{equation*}
which is the result of part {\em (a)}. 
To control the probability that $r_t \ge \sup_{h \in \mathcal{H}} \left\{\left|\frac{1}{t} \sum_{i = 1}^tZ^{\mathtt{i}}_h  \right|\right\}$, we now consider the i.i.d. covering number instead of the sequential covering number. As pointed out by \cite{kuznetsov2015learning}, in the i.i.d. case, for $T \geq 1$ and for any $\epsilon>2\alpha>0$, we have a similar convergence result as Prop. \ref{prop:noniid} as follows:
\begin{align*}
    \mathbb{P}\left( \sup_{h \in \mathcal{H}} \left\{\left| \rspop(h) - \frac{1}{T} \sum_{t = 1}^T  \lspop(h(x_t),c_t) \right|\right\} \ge  \epsilon\right)\le  2 \hat{N_1}(\alpha, \lspop \circ \mathcal{H}) \exp\left\{- \frac{  T (\epsilon - 2 \alpha)^2}{2\omega_\ell(\hat{\mathcal{C}}, \mathcal{C})^2 }\right\}.
\end{align*}
Considering $\alpha = \frac{\omega_\ell(\hat{\mathcal{C}},\mathcal{C})}{T}$ and $\epsilon =\omega_\ell(\hat{\mathcal{C}}, \mathcal{C})\sqrt{\frac{4 \ln(2T \hat{N_1}(\alpha, \lspop \circ \mathcal{H}) / \delta)}{T}} + \frac{2\omega_\ell(\hat{\mathcal{C}},\mathcal{C})}{T} = r_T$ and following the same reasoning as in the proof of Theorem \ref{uniform_margin} yields that $r_T \ge \sup_{h \in \mathcal{H}} \left\{\left|\frac{1}{T} \sum_{t = 1}^TZ^{\mathtt{t}}_h  \right|\right\}$ simultaneously for all $T \geq 1$ with probability at least $1 - \delta$.

\paragraph{\bf{Proof of Claim {\em (A)}.}}
It remains to show that {\em (A)} holds for all $T \geq 1$, which we prove by strong induction. In fact, we prove a stronger variant of {\em (A)} as follows. Recall that $\hat{\ell}_{}^T(h) = \frac{1}{T}\sum_{(x,c) \in W_T} \lspop(h(x), c)$ is the empirical reweighed loss at iteration $T$. Define $H_T^0 := \{h \in \calH : \hat{\ell}_{}^T(h) = 0\} = \{h \in \calH : \lspop(h(x), c) = 0 \text{ for all } (x, c) \in W_T\}$. The set $H_T^0$ is exactly the set of minimizers of $\hat{\ell}_{}^T(h)$, with probability 1, since Prop. \ref{prop:separable_bayes} (in particular $\rspop^* = 0$) implies that $\hat{\ell}_{}^T(h^*) = 0$ with probability 1. Hence, $h_T \in H_T^0$ with probability 1. 

Let us also define $\bar{H}_T^0 := \{h \in \calH : \lspop(h(x_t), c_t) = 0 \text{ for all } t = 1, \ldots, T\}$. Clearly, $\bar{H}_T^0 \subseteq H_T^0$ for all $T \geq 1$. Note also that both collections of sets are nested, i.e., $H_T^0 \subseteq H_{T-1}^0 \subseteq \cdots H_1^0 \subseteq \calH$ and $\bar H_T^0 \subseteq \bar H_{T-1}^0 \subseteq \cdots \bar H_1^0 \subseteq \calH$.
Now, we will use strong induction to prove,
when $r_t \ge \sup_{h \in \mathcal{H}} \left\{\left|\frac{1}{t} \sum_{i = 1}^tZ^{\mathtt{i}}_h  \right|\right\}$ for all $t \geq 1$, we have that:  
\begin{equation*}
(\bar{A}) \ \ H_T^0 = \bar{H}_T^0 \ \text{ with probability 1 for all } T \geq 1.   
\end{equation*}
Note that {\em ($\bar{A}$)} implies {\em (A)} since $h_T \in H_T^0$ with probability 1.

To prove the base case $T = 1$, we observe that $b_0\ge\rho(\mathcal{\hat{C}})$, and thus, $\nu_S(h(x_1))\le\|h(x_1)\|\le\rho(\mathcal{\hat{C}})\le b_0$ for any $h \in \mathcal{H}$ and any $x_1\in \mathcal{X}$. Thus, we have that $d_1^M = 1$ with probability 1 and the sample $(x_1, c_1)$ is added to working set $W_1$. By definition of $H_1^0$, for $h \in H_1^0$ we have that $\lspop(h(x_1), c_1) = 0$. Hence, $h \in \bar{H}_1^0$ and so we have proven that $H_1^0 \subseteq \bar{H}_1^0$.

Now, consider $T \geq 2$ and assume that {\em ($\bar{A}$)} holds for all $\tilde{T} \in \{1, \ldots, T-1\}$. We need to show that $H_T^0 \subseteq \bar{H}_T^0$, so let $h \in H_T^0$ be given. By the induction hypothesis, we have that $h \in H_{T-1}^0 = \bar{H}_{T-1}^0$, and therefore we have $\lspop(h(x_t), c_t) = 0$ for all $t \in \{1, \ldots, T-1\}$. Thus, to show that $h \in \bar{H}_T^0$, it suffices to show that $\lspop(h(x_T), c_T) = 0$ with probability 1.

There are two cases to consider. First, if $d_T^M = 1$, then the sample $(x_T, c_T)$ is added to working set $W_T$ and thus, by definition of $H_T^0$, for $h \in H_T^0$ we have that $\lspop(h(x_T), c_T) = 0$. Hence, $h \in \bar{H}_T^0$ and so we have proven that $H_T^0 \subseteq \bar{H}_T^0$.

Second, let us consider the case where $d_T^M = 0$ so we do not acquire the label $c_T$. In this case, we have that $W_T = W_{T-1}$, $H_T^0 = H_{T-1}^0$, and, by the rejection criterion, $\nu_S(h_{T-1}(x_T)) \geq b_{T-1}$. For the given $h \in H_T^0$, to show that $\lspop(h(x_T), c_T) = 0$ with probability 1 recall from the proof of Prop. \ref{prop:separable_bayes} that is suffices to show that $w^*(2h(x_T) - c_T) = w^*(c_T)$ with probability 1 over $c_T$ drawn from the conditional distribution given $x_T$.

To prove this, first note that
\begin{equation*}
    \rspop(h) = \rspop(h) - \frac{1}{T-1}\sum_{t = 1}^{T-1}\lspop(h(x_t),c_t) = \frac{1}{T-1}\sum_{t = 1}^{T-1}Z^{\mathtt{t}}_{h} \leq r_{T-1},
\end{equation*}
where the first equality uses that $h \in H^0_T = H^0_{T-1} = \bar{H}^0_{T-1}$ and the inequality uses the assumption that $r_t \ge \sup_{h \in \mathcal{H}} \left\{\left|\frac{1}{t} \sum_{i = 1}^tZ^{\mathtt{i}}_h  \right|\right\}$ for all $t \geq 1$. By similar reasoning, we have that $h_{T-1} \in H^0_{T-1} = \bar{H}^0_{T-1}$ satisfies $\rspop(h_{T-1}) \leq r_{T-1}$. Let $\epsilon > 0$ be fixed. Now, by Assumption \ref{assumption:1} and Prop. \ref{prop:separable_bayes}, there exists $h^*_0 \in \mathcal{H}^*_{\text{SPO+}}$ such that
\begin{equation*}
\| h(x_T) - h^*_0(x_T) \| \le  \phi(\rspop(h)) + \epsilon \le \phi(r_{T-1}) + \epsilon \leq \frac{\tau(1 - \varrho)}{\tau(1 - \varrho)+2}b_{T-1} + \epsilon,
\end{equation*}
and there exists $h^*_1 \in \mathcal{H}^*_{\text{SPO+}}$ such that
\begin{equation*}
\| h_{T-1}(x_T) - h^*_1(x_T) \| \le  \phi(\rspop(h_{T-1})) + \epsilon \le \phi(r_{T-1}) + \epsilon \leq \frac{\tau(1 - \varrho)}{\tau(1 - \varrho)+2}b_{T-1} + \epsilon,
\end{equation*}
where we have used $b_{T-1} =  (1 + \frac{2}{\tau(1-\rho)})\phi(r_{T-1})$ in both inequalities above.  
Since both $h^*_0, h^*_1 \in \mathcal{H}^*_{\text{SPO+}}$, according to the proof of Prop. \ref{prop:separable_bayes}, we have that $w^*(2h^*_0(x_T) - c_T) = w^*(c_T) = w^*(2h^*_1(x_T) - c_T)$ with probability 1.
By the rejection criterion, $\nu_S(h_{T-1}(x_T)) \geq b_{T-1}$, and the 1-Lipschitzness of $\nu_S$, we have
\begin{align*}
    \nu_S(  h^*_1(x_T)) &= \nu_S(  h^*_1(x_T) - h_{T-1}(x_T) + h_{T-1}(x_T) ) \ge \nu_S( h_{T-1}(x_T) ) - \| h^*_1(x_T) - h_{T-1}(x_T)\| - \epsilon \\
    & \ge b_{T-1} - \frac{\tau(1 - \varrho)}{\tau(1 - \varrho)+2}b_{T-1} - \epsilon= \frac{2}{\tau(1 - \varrho)+2}b_{T-1}- \epsilon.
\end{align*}
By the second part of Assumption \ref{assu:bayes}, we have that 
\begin{equation*}
\nu_S(h^*_0(x_T)) \ge \tau\left(\sup_{h' \in \calH_{\mathrm{SPO}+}^\ast}\{\nu_S(h'(x_T))\}\right) \ge \tau \nu_S(  h^*_1(x_T)) \ge \frac{2\tau}{\tau(1 - \varrho)+2}b_{T-1} - \epsilon\tau.
\end{equation*}
By viewing $2 h(x_T) - c_T$ and $2 h^*_0(x_T) - c_T$ as $c_1$ and $c_2$ in Lemma \ref{lemma:identical}, we have
\begin{equation*}
    \|(2 h(x_T) - c_T) - (2 h^*_0(x_T) - c_T)\| = 2\|h(x_T) - h^*_0(x_T)\| \leq  \frac{2\tau(1 - \varrho)}{\tau(1 - \varrho)+2}b_{T-1} + 2\epsilon.
\end{equation*}
By the 1-Lipschitzness of $\nu_S$ and the first part of Assumption \ref{assu:bayes}, we have
\begin{align*}
    \nu_S( 2 h^*_0(x_T) - c_T ) &\geq \nu_S(h^*_0(x_T)) - \|h^*_0(x_T) - c_T\| \geq (1 - \varrho)\nu_S(h^*_0(x_T)) \\
    &\ge (1 - \varrho) \left(\frac{2\tau}{\tau(1 - \varrho)+2}b_{T-1}- \epsilon\tau \right) = \frac{2\tau(1 - \varrho)}{\tau(1 - \varrho)+2}b_{T-1} - (1 - \varrho) \epsilon\tau.
\end{align*}
By taking $\epsilon \to 0$ and considering an appropriate convergent subsequence in the compact set $\mathcal{H}^*_{\text{SPO+}}$, the two inequalities above are satisfied for some $\bar{h}^*_0 \in \mathcal{H}^*_{\text{SPO+}}$ with $\epsilon = 0$. In particular, this implies that the conditions in Lemma \ref{lemma:identical} are satisfied and we have that $w^*(2 h(x_T) - c_T) = w^*(2 \bar{h}^*_0(x_T) - c_T) = w^*(c_T)$ with probability 1. Hence, we have shown that $\lspop(h(x_T), c_T) = 0$ with probability 1, and so we have proven that $H_T^0 \subseteq \bar{H}_T^0$.

\hfill\Halmos
\end{proof}

\section{Details for Numerical Experiments}\label{appendix:experiments}

In this appendix, we provide the sensitivity analysis of the parameters and additional results of the numerical experiments. The numerical experiments were conducted on a Mac Studio Pro M4 core system.

\subsection{Setting the initial rejection quantile in MBAL}\label{appendix:setting_parameters}

Here we present the numerical results when the initial rejection quantile value $\tilde{q}$ varies from $0.1$ to $0.9$. Intuitively, setting $\tilde{q}$, to be closer to 1 makes our active learning algorithm less selective, because we need a larger distance to degeneracy to reject a sample. On the other hand, setting $\tilde{q}$ close to 0 will make our algorithms more selective and more sensitive to the information from the warm-up period.

To further illustrate the impact of the scale of $\tilde{q}$, we run the following experiments by changing the scales of $\tilde{q}$ from $0.1$ to $0.9$.  The plot on the left of Figure \ref{fig:magin_slack} shows the ratio of labeled samples to total samples in the first 30 samples after the warm-up period. The plot shows that the larger the initial quantile is, the more samples are labeled. This is because when the rejection quantile gets larger, the scale of $b_t$ gets larger, so the probability that one sample has a larger distance to degeneracy than $b_t$ gets smaller. Thus, more samples are labeled. The right plot in Figure \ref{fig:magin_slack} further shows the value of excess SPO risk as the value of $\tilde{q}$ changes during the training process. It shows that the excess SPO risk is quite robust to the value of $\tilde{q}$. In other words, the value of slackness has little impact on the excess SPO risk given the same number of labeled samples, though the value of $\tilde{q}$ affects the ratio of the labeled samples.

\begin{figure}[ht]
\begin{center}
\centerline{\includegraphics[width=0.5\columnwidth]{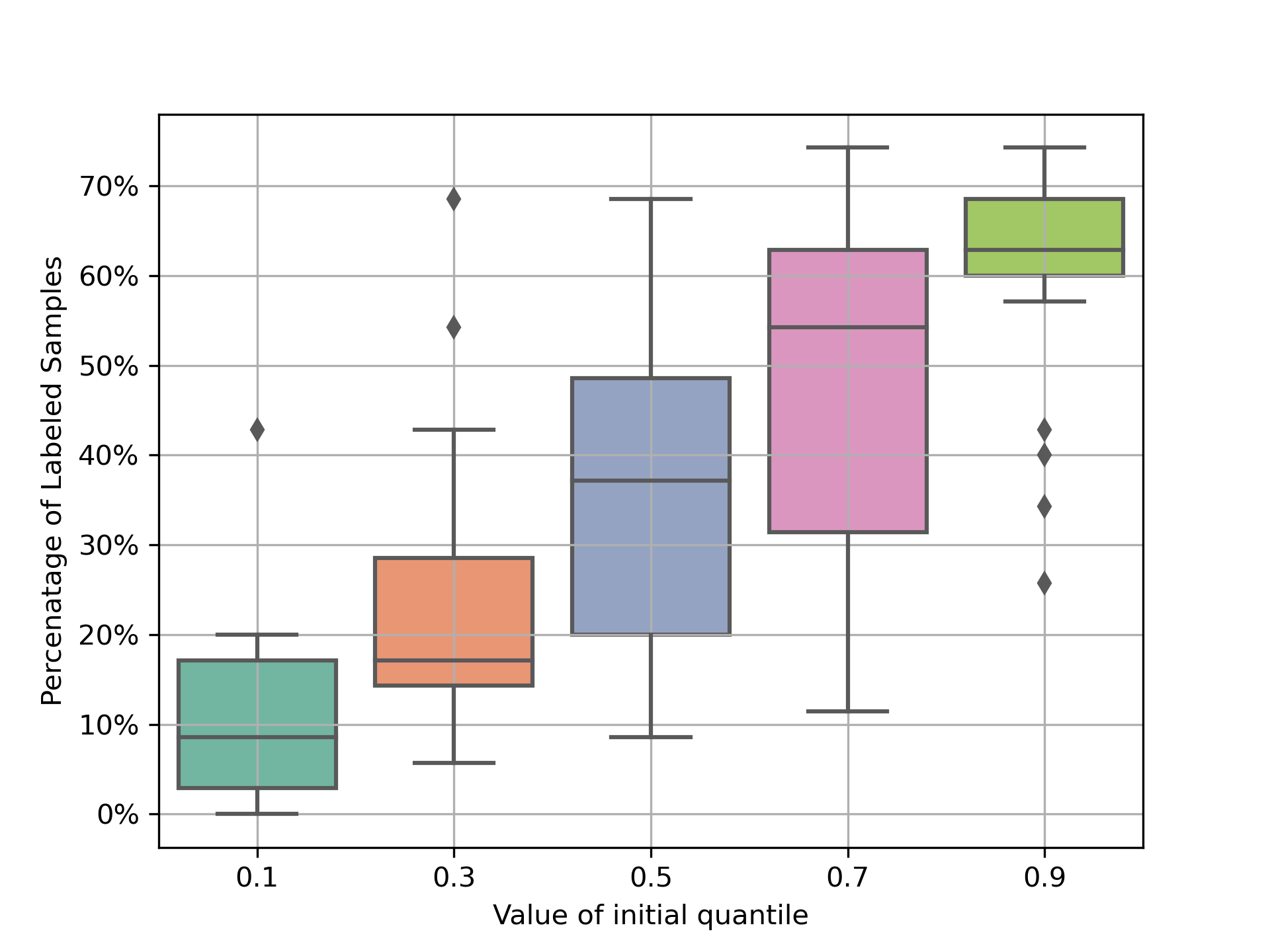}\includegraphics[width=0.5\columnwidth]{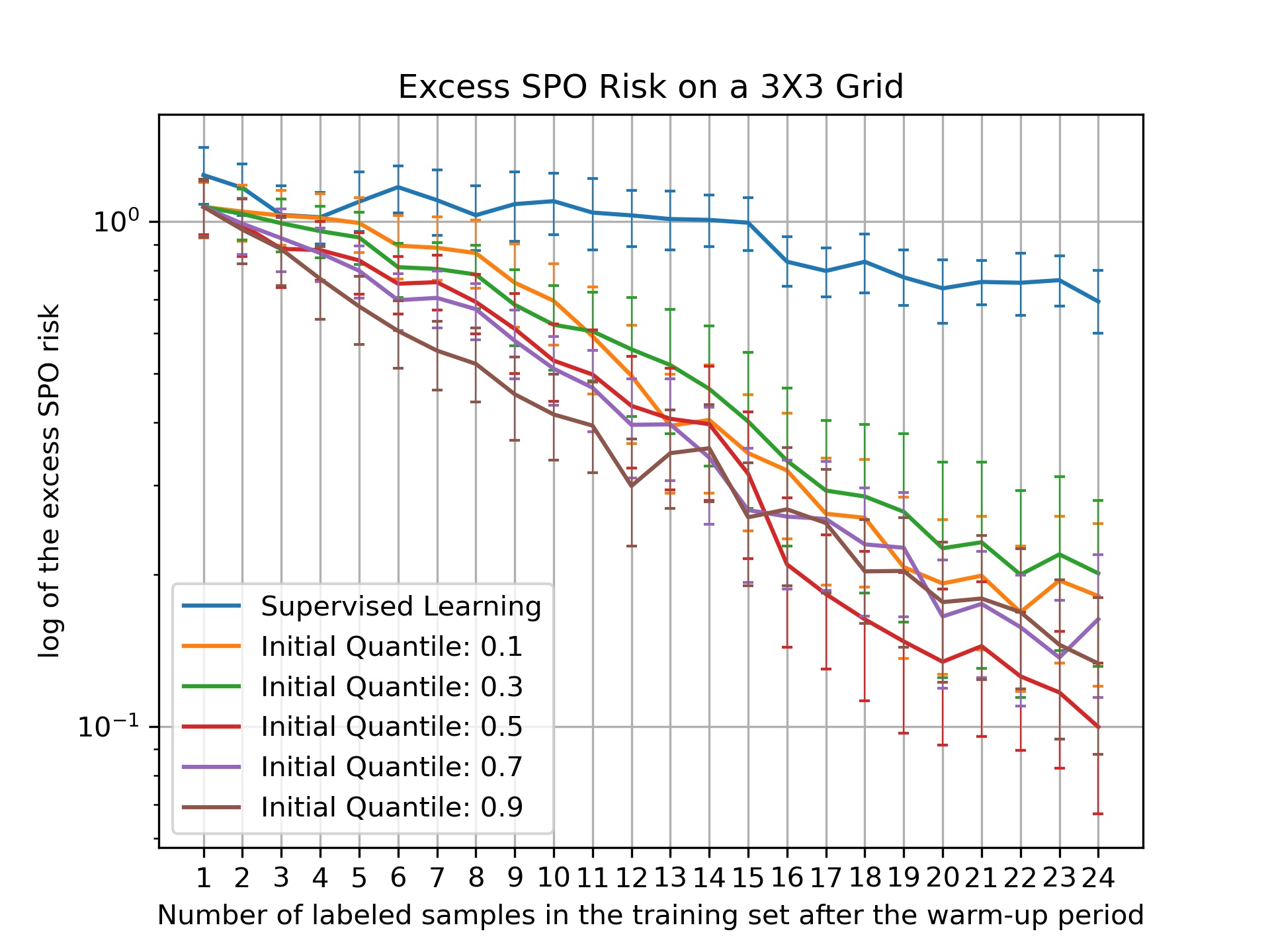}}
\caption{Performance under different initial quantile values in MBAL-SPO}
\label{fig:magin_slack}
\end{center}
\vskip -0.2in
\end{figure}

In practice, to find the proper scale for $\tilde{q}$, we can refer to the rules discussed at the end of Section \ref{sec:general_loss}. When we have a  long warm-up period, we can set $\tilde{q}$ to be 0.3 or 0.4, since we already have a lot of information about the distribution. When we have a short warm-up period or the hypothesis class is mis-specified, we would need a long warm-up period.

We also change the value of the minimum label probability $\tilde{p}$ in the soft rejection to see its impact on the performance. Figure \ref{fig:change_p} shows the percentage of labeled samples in the first 30 samples, and the excess SPO risk when the number of labeled samples is 10.
\begin{figure}[ht]
\vskip 0.2in
\begin{center}
\centerline{\includegraphics[width=0.5\columnwidth]{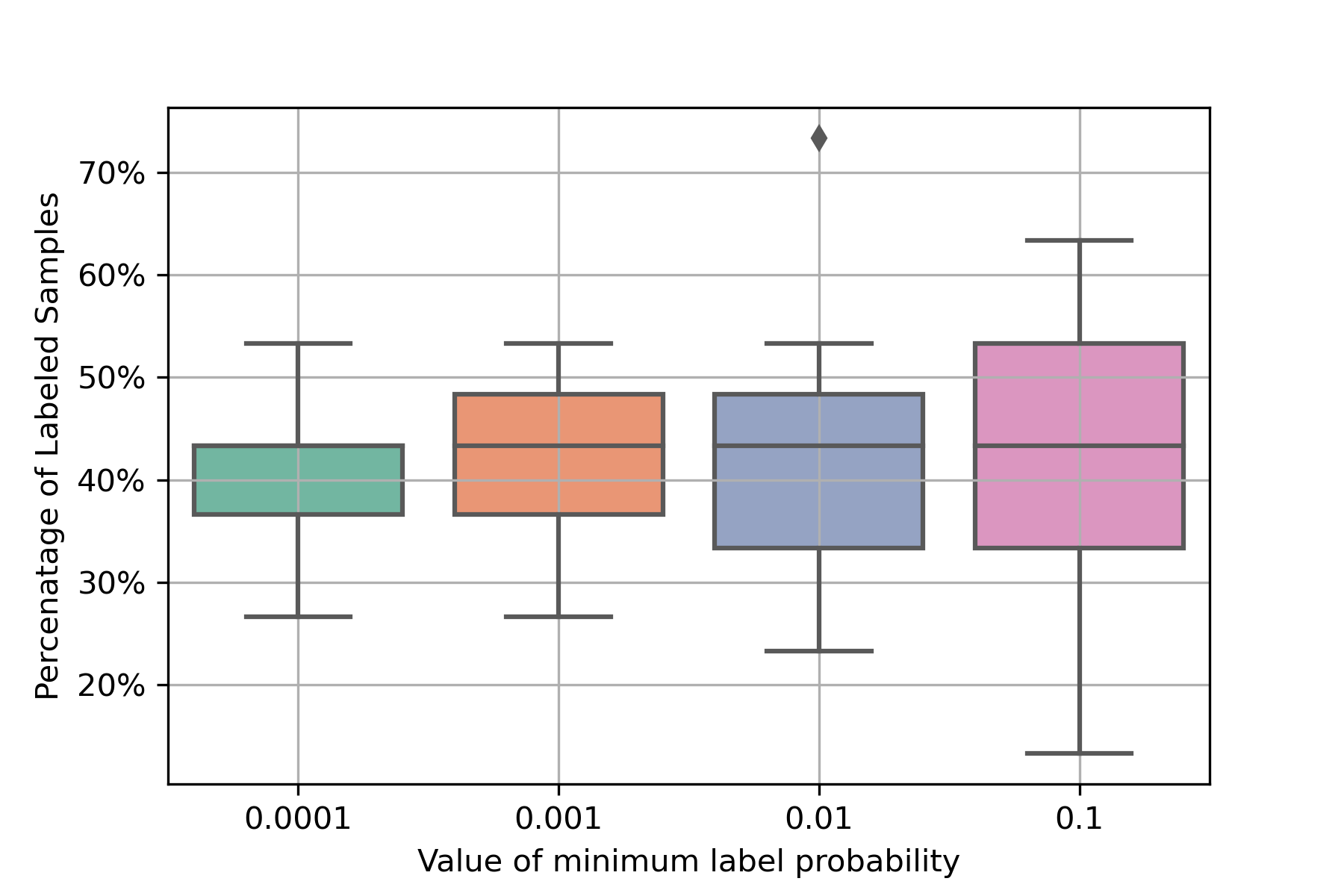}\includegraphics[width=0.5\columnwidth]{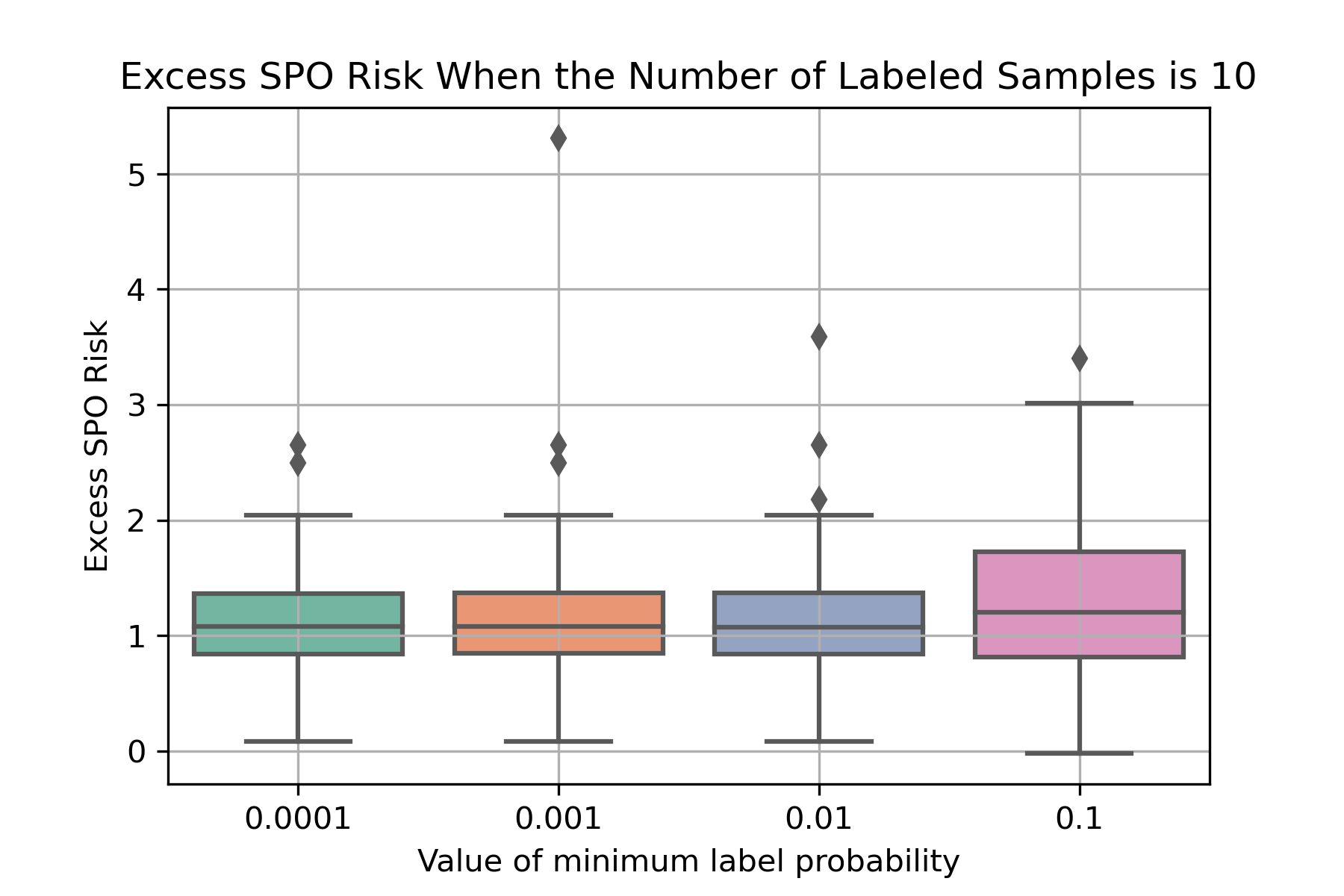}}
\caption{Performance under different settings of $\tilde{p}$}
\label{fig:change_p}
\end{center}
\vskip -0.2in
\end{figure}

Figure \ref{fig:change_p} shows that the minimum label probability $\tilde{p}$ has no significant impact on the excess SPO risk. Intuitively, when $\tilde{p}$ is larger, the percentage of labeled samples is larger. In practice, we can set $\tilde{p}$ as a very small positive number that is close to zero.

\subsection{Additional Results of Numerical Experiments.}\label{appendix:numerical_experiments_results}

To assess the performance of our active learning algorithm under different noise levels, we change the variance of features and the noise level of labels when generating the data and demonstrate the results in Figures \ref{fig:var} and \ref{fig:eps}.

\begin{figure}[ht]
\begin{center}
\centerline{\includegraphics[width=\columnwidth]{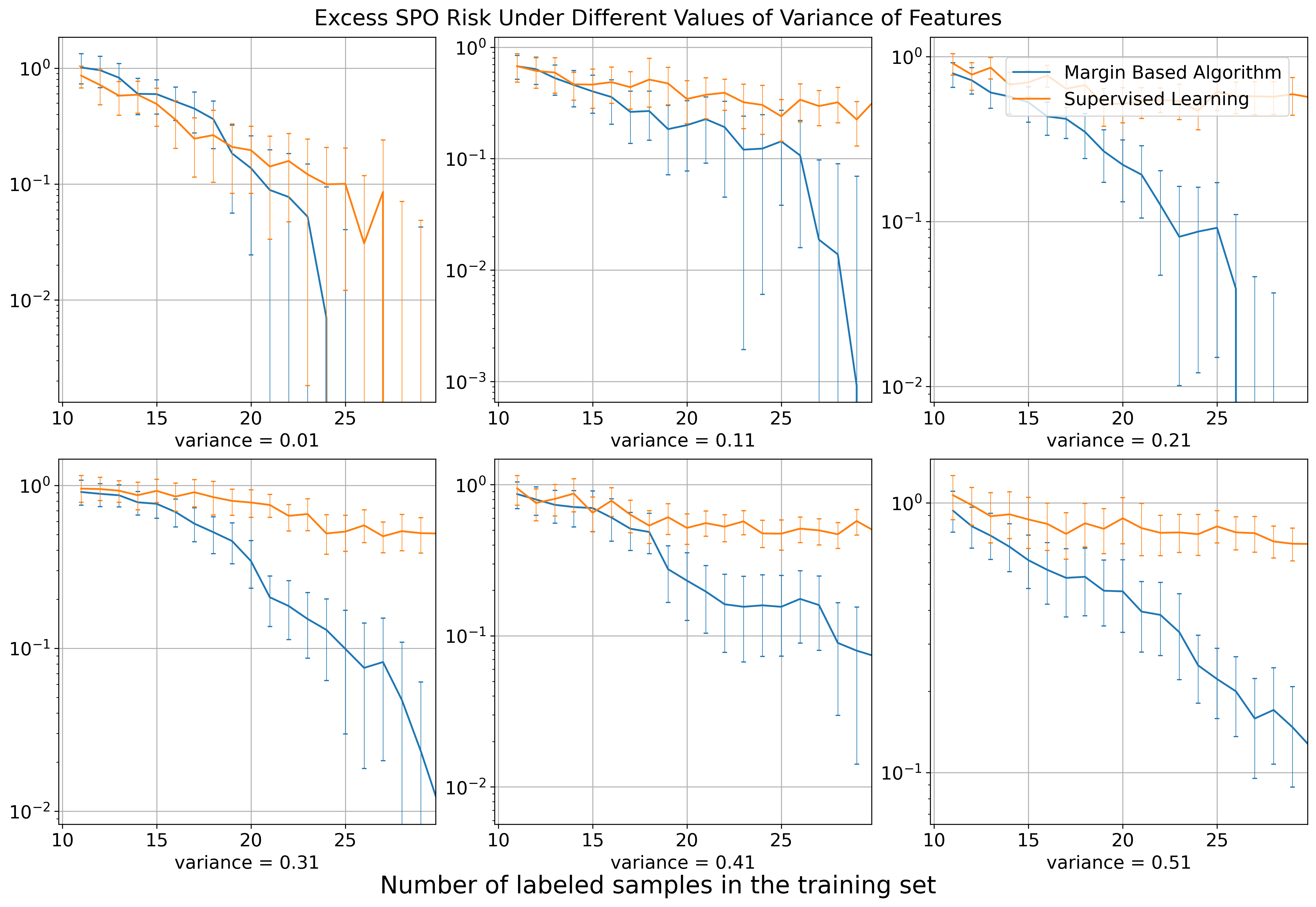}}
\caption{Excess SPO risk during the training process under different variance of features.}
\label{fig:var}
\end{center}
\end{figure}

\begin{figure}[ht]
\begin{center}
\centerline{\includegraphics[width=\columnwidth]{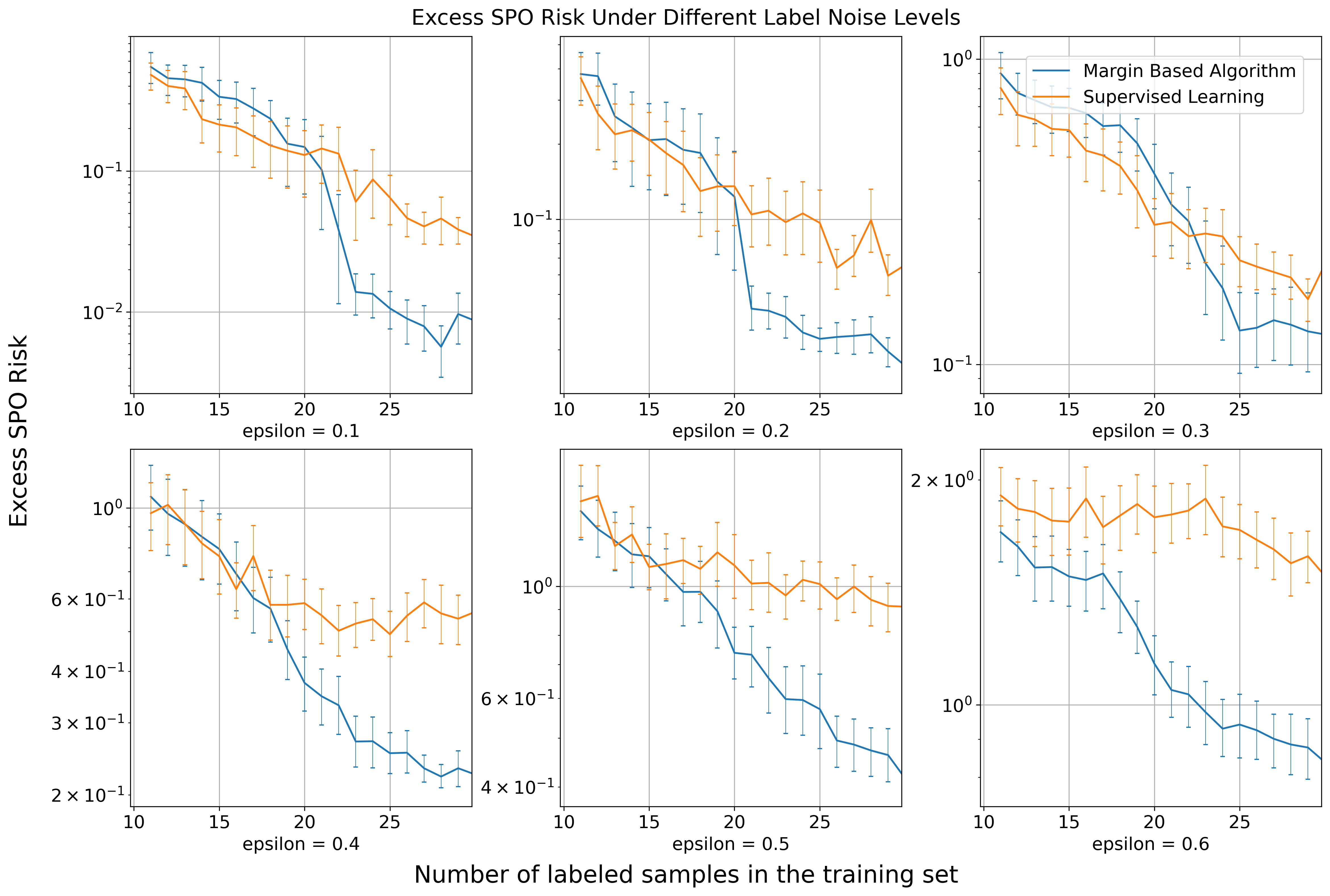}}
\caption{Excess SPO risk during the training process under different noise levels.}
\label{fig:eps}
\end{center}
\end{figure}

Figures \ref{fig:var} and \ref{fig:eps} show that when the variance of the features and the noise level of the labels are small, both active learning and supervised learning have close performance. When the variance of features or the noise level of labels is large, our proposed active learning methods perform better than supervised learning.

Recall that the cost vector is generated according to $c_j = \left[1 + (1 + b_j^T x_i / \sqrt{p})^{\mathrm{deg}} \right] \epsilon_j$.
Next, we further show the result when changing the degree of the model. When the degree is not one, the true model is not contained in our hypothesis class. The results in Figure \ref{fig:deg} show that when the model has a higher degree, i.e., a higher mis-specification, our MBAL algorithm can still achieve a good performance as the supervised learning.  

\begin{figure}[ht]
\begin{center}
\centerline{\includegraphics[width=\columnwidth]{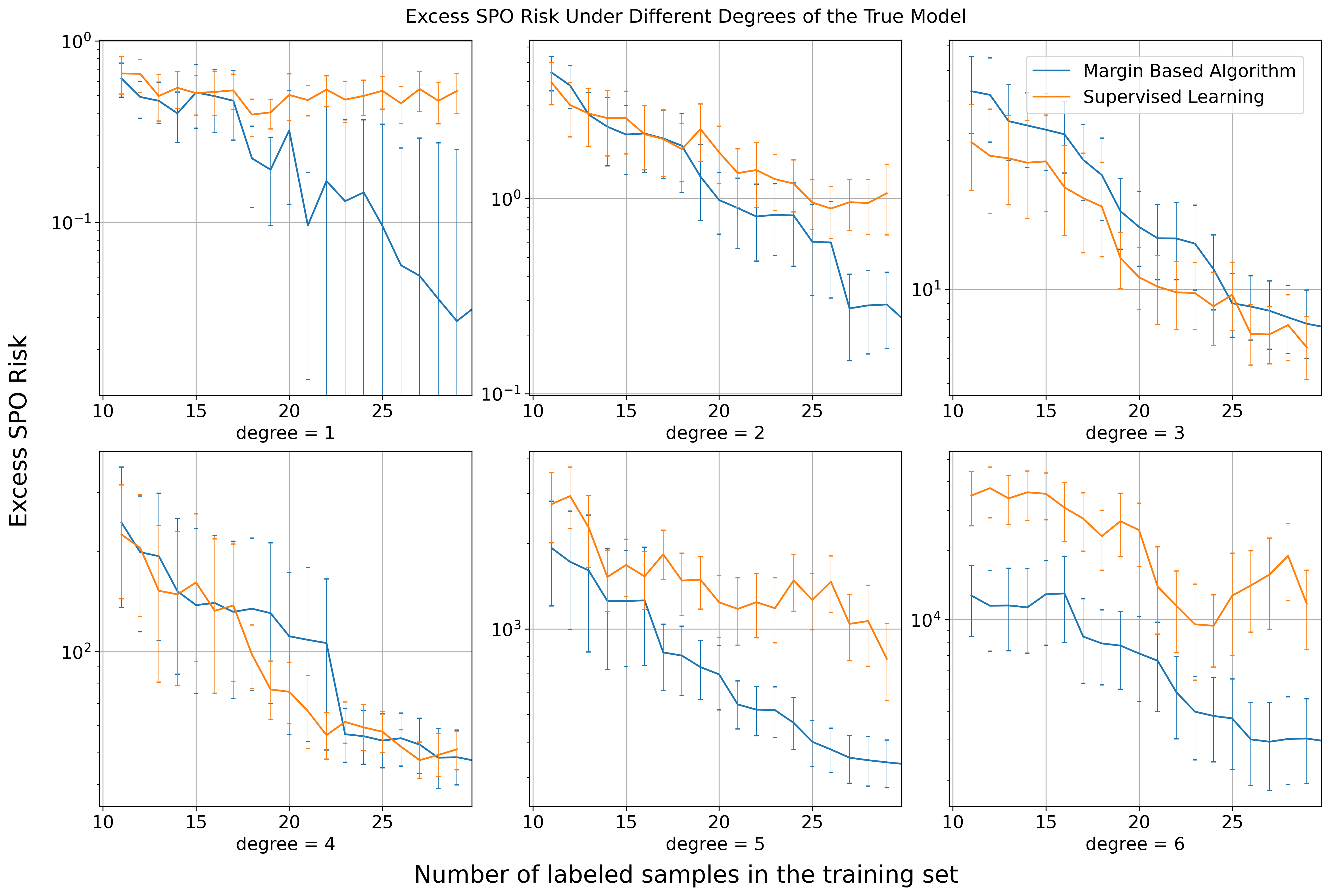}}
\caption{Excess SPO risk during the training process under different degrees of mis-specification.}
\label{fig:deg}
\end{center}
\end{figure}

\newpage

\subsection{Data Generation for Personalized Pricing}\label{appendix:pricing}
In this section, we provide the parameter values for generating synthetic data in the personalized pricing experiment. 
Given a coefficient vector $B_j \in \bbR^5$ and $A_j \in \bbR^5$, the demand function for item $j$ is generated as $d_j(p_i) = \epsilon e^{B_j^TX + A_j^T X p_i}$. Here, $\epsilon$ is a noise term drawn from a uniform distribution on $[1 - \bar\epsilon, 1 + \bar\epsilon]$. We set $\bar\epsilon = 0.1$. $A_j^T X$ can be viewed as the price elasticity. The customer feature vector is drawn from a mixed Gaussian distribution with seven different centers $\mu_k$. The value of these centers $\mu_k$, $k = 1,2,...,7$ and the value of $A_j$ and $B_j$, $j = 1,2,3$ are carefully chosen so that $h^*(X)$ is not a degenerate cost vector for any $\mu_k$, $k = 1,2,...,7$. Please find the value of these parameters at the end of this appendix. The variance of the feature for each Gaussian distribution is set as $0.01^2$, which is on the same scale as the features.

We further have the following monotone constraints for the prices of these three items. Let the decision variable $w_{i,j}$ indicate whether price $i$ is selected for item $j$. Then, the constraints are as follows.
\begin{align}
    w_{1,j} + w_{2,j} + w_{3,j} &\le 1, \quad  j = 1,2,3 \label{cons:n1}\subeqn \\
    w_{2,1} & \le w_{2,2} + w_{3,2} \label{cons:n2}\subeqn\\
    w_{3,1} & \le w_{3,2}  \label{cons:n3}\subeqn\\
    w_{2,2} & \le w_{2,3} + w_{3,3} \label{cons:n4}\subeqn\\
    w_{3,2} & \le w_{3,3} \label{cons:n5}\subeqn\\
    w_{i,j} & \in \{0,1\} , \quad  i,j = 1,2,3 \nonumber
\end{align}
\eqref{cons:n1} requires each item can only select one price point.
\eqref{cons:n2} and \eqref{cons:n3} require that the price of item 2 be no less than the price of item 1. \eqref{cons:n4} and \eqref{cons:n5} require that the price of item 3 be no less than the price of item 2.

Since the purchase probability is $d_j(p_i) = \epsilon \exp(B_j^TX + A_j^T X p_i)$, we need to specify the following parameters for generating the purchase probability given the feature $X$:  $A_j$ and $B_j$, $j = 1,2,3$. 
The feature vector is from a mixed Gaussian distribution with seven centers. The optimal prices of three items for these seven centers are $(\$60,\$60,\$60)$, $(\$60,\$80,\$90)$, $(\$90,\$90,\$90)$, $(\$80,\$80,\$80)$, $(\$60,\$60,\$80)$, $(\$80,\$90,\$90)$, and $(\$60,\$60,\$90)$ respectively. 
To generate such centers, we consider the following values for $X$, $A_j$ and $B_j$.
Define $a_1 = -0.0202733, b_1 = -1.19155$, $a_2 = -0.0133531, b_2 = -1.45748$, $a_3 = -0.00540672, b_3 = -1.22819$. Then, we set
\begin{align*}
    A_1 = \begin{bmatrix}0\\0\\0\\1\\0\\0
    \end{bmatrix}, 
    A_2 = \begin{bmatrix}0\\0\\0\\0\\1\\0
    \end{bmatrix},
    A_3 = \begin{bmatrix}0\\0\\0\\0\\0\\1
    \end{bmatrix},
    B_1 = \begin{bmatrix}1\\0\\0\\0\\0\\0
    \end{bmatrix},
    B_2 = \begin{bmatrix}0\\1\\0\\0\\0\\0
    \end{bmatrix},
    B_3 = \begin{bmatrix}0\\0\\1\\0\\0\\0
    \end{bmatrix}.
\end{align*}
We set the centers of Gaussian distribution for the feature vectors as
\begin{align*}
    \mu_1 = \begin{bmatrix}b_1\\b_1\\b_1\\a_1\\a_1\\a_1
    \end{bmatrix}, 
    \mu_2 = \begin{bmatrix}b_1\\b_2\\b_3\\a_1\\a_2\\a_3
    \end{bmatrix},
    \mu_3 = \begin{bmatrix}b_3\\b_3\\b_3\\a_3\\a_3\\a_3
    \end{bmatrix},
    \mu_4 = \begin{bmatrix}b_2\\b_2\\b_2\\a_2\\a_2\\a_2
    \end{bmatrix},
    \mu_5 = \begin{bmatrix}b_1\\b_1\\b_2\\a_1\\a_1\\a_2
    \end{bmatrix},
    \mu_6 = \begin{bmatrix}b_2\\b_3\\b_3\\a_2\\a_3\\a_3
    \end{bmatrix},
    \mu_7 = \begin{bmatrix}b_1\\b_1\\b_3\\a_1\\a_1\\a_3
    \end{bmatrix}.
\end{align*}

\end{APPENDICES}

\end{document}